\definecolor{oxfordblue}{rgb}{0.0, 0.13, 0.28} 
\definecolor{carnelian}{rgb}{0.7, 0.11, 0.11}  
\definecolor{darkslategray}{rgb}{0.18, 0.31, 0.31} 
\definecolor{skyblue}{rgb}{0.53, 0.81, 0.92}    
\definecolor{cadetblue}{rgb}{0.37, 0.62, 0.63} 
\definecolor{carnelian}{rgb}{0.7, 0.11, 0.11}  
\newcolumntype{P}[1]{>{\centering\arraybackslash}p{#1}}
\newtheorem{theorem}{Theorem}
\newtheorem{lemma}{Lemma}
\newtheorem{assumption}{Assumption}
\DeclareMathOperator*{\argmin}{argmin}
\def\*#1{\mathbf{#1}}
\def\$#1{\mathcal{#1}}
\def\^#1{\mathbb{#1}}
\def\innerprod#1{\left\langle#1\right\rangle}
\definecolor{carnelian}{rgb}{0.7, 0.11, 0.11}  
\newcommand{\comment}[1]{}
\newcommand{\method}[1]{LoCo} 
\newcommand{\emti}[2]{\tilde{\*{e}}_{#1}^{#2}}
\newcommand{\gmbi}[1]{\bar{\*{g}}_{#1}}
\newcommand{\emi}[2]{\*{e}_{#1}^{#2}}
\newcommand{\hmi}[2]{\*{h}_{#1}^{#2}}
\newcommand{\gmi}[2]{\*{g}_{#1}^{#2}}
\newcommand{\mmi}[1]{\*{m}_{#1}} 
\newcommand{\vmi}[1]{\*{v}_{#1}}
\newcommand{\xmi}[1]{\bm{\theta}_{#1}} 
\newcommand{\EE}{{\mathbb{E}}}
\newcommand{\ginf}{c_{\infty}}
\newcommand{\betai}[1]{\beta_{#1}} 
\newcommand{\deltami}[2]{\bm{\delta}_{#1}^{#2}} 
\newcommand{\emhi}[2]{\hat{\*{e}}_{#1}^{#2}} 
\newcommand{\led}[1]{\overset{\text{\ding{#1}}}{\leq}}
\newcommand{\lee}[1]{\overset{\text{\ding{#1}}}{=}}
\newcommand*{\addFileDependency}[1]{
  \typeout{(#1)}
  \@addtofilelist{#1}
  \IfFileExists{#1}{}{\typeout{No file #1.}}
}
\newcommand{\algorithmfootnote}[2][\footnotesize]{%
  \let\old@algocf@finish\@algocf@finish
  \def\@algocf@finish{\old@algocf@finish
    \leavevmode\rlap{\begin{minipage}{\linewidth}
    #1#2
    \end{minipage}}%
  }%
}
\begin{document}
\title{\method{}: Low-Bit Communication Adaptor for Large-scale Model Training}

\author{Xingyu Xie, Zhijie Lin, Kim-Chuan Toh, Pan Zhou
\IEEEcompsocitemizethanks{\IEEEcompsocthanksitem X. Xie and KC. Toh are with the Department of Mathematics, National University of Singapore, Singapore.

\IEEEcompsocthanksitem P. Zhou is with the School of Computing and Information Systems, Singapore Management University, Singapore, and is the corresponding author {(panzhou@smu.edu.sg)}.

\IEEEcompsocthanksitem P. Zhou and Z. Lin were previously with Sea AI Lab, Singapore.

}

}

\markboth{}%
{Shell \MakeLowercase{\textit{et al.}}: Bare Demo of IEEEtran.cls for Computer Society Journals}

\IEEEtitleabstractindextext{%
\begin{abstract}
To efficiently train large-scale models, low-bit gradient communication compresses full-precision gradients on local GPU nodes into low-precision ones for higher gradient synchronization efficiency among GPU nodes. However, it often degrades training quality due to compression information loss. To address this, we propose the Low-bit Communication Adaptor (\method{}), which compensates gradients on local GPU nodes before compression, ensuring efficient synchronization without compromising training quality. Specifically, \method{} designs a moving average of historical compensation errors to stably estimate concurrent compression error and then adopts it to compensate for the concurrent gradient compression, yielding a less lossless compression. This mechanism allows it to be compatible with general optimizers like Adam and sharding strategies like FSDP. Theoretical analysis shows that integrating \method{} into full-precision optimizers like Adam and SGD does not impair their convergence speed on nonconvex problems. Experimental results  show that across large-scale model training frameworks like Megatron-LM and PyTorch's FSDP,  \method{} significantly improves communication efficiency, e.g., 
improving Adam's training speed by 14\% to 40\%  without performance degradation on large language models like LLAMAs and MoEs.
\end{abstract}

\begin{IEEEkeywords}
Efficient  Large-Scale Training, Large-Scale Optimization,  Deep Learning Optimization 
\end{IEEEkeywords}}

\maketitle

%
\IEEEpeerreviewmaketitle

\section{Introduction}\label{introduction}
\IEEEPARstart{D}{eep} learning has made remarkable strides across various domains in recent decades, such as language modeling~\cite{brown2020language,jiang2023mistral}, computer vision~\cite{bai2023sequential}, and multi-modality~\cite{liu2023visual}. This progress is largely attributed to the advent of large-scale models, like the GPT and LLAMA series~\cite{radford2019language,brown2020language,touvron2023llama,touvron2023llama2}, characterized by their billions of parameters and trillions of training tokens. This trend of large-scale models has expanded into various other fields, including finance~\cite{wu2023bloomberggpt}, law~\cite{sun2023short}, and medicine~\cite{zhou2023survey}. Despite their successes, these large-scale models necessitate extensive GPUs for parallel training, employing strategies like data parallelism~\cite{dean2012large}, pipeline parallelism~\cite{harlap2018pipedream}, tensor parallelism~\cite{shoeybi2019megatron}.  
A major challenge in this parallel training is the frequent gradient communication for synchronization among GPUs, which significantly burdens the communication system. In fact, the communication time can even consume over 50\% of the total training time in some cases~\cite{tang20211,lu2022maximizing}.

To relieve the communication burden, one often adopts compression techniques, e.g., quantization, to compress the full-precision communication variables into low-precision formats, e.g., 32-bit  gradient to 8-bit one. While significantly improving communication efficiency among GPU nodes, this compression often brings substantial challenges in maintaining training quality due to information loss.  Notably, low-bit gradients (e.g., less than 8-bit) are not typically supported by hardware, and their computations, e.g., addition, often suffer from overflow. Consequently, advanced frameworks like Megatron-LM~\cite{shoeybi2019megatron} and Pytorch Fully Sharded Data Parallelism (FSDP)~\cite{zhao2023pytorch}, though capable of low-bit computations for weights and activations, still need high-precision gradient for communication to avoid performance degradation that is particularly pronounced in models like large language models (LLMs). This well testifies the significant challenges in gradient compression for training large-scale models.

{  To address the challenge of communication efficiency in large-scale model training, error-feedback compression (EFC)~\cite{seide20141,richtarik2021ef21} has been developed to compensate for communication variables before compression, ensuring small compression errors.  This technique has been effectively applied in gradient compression, enabling the development of communication-efficient low-bit optimizers, such as 1-bit Adam~\cite{tang20211} and 1-bit LAMB~\cite{li20221}. Despite their success, many EFC methods still face critical challenges that limit their scalability in distributed training environments.
    
Firstly, a majority of EFC methods~\cite{tang2019doublesqueeze,richtarik2021ef21,gruntkowska2023ef21} are inherently designed for parameter-server architectures, where a central server coordinates communication among all devices. While effective for smaller-scale systems, this architecture often struggles with scalability due to its reliance on centralized communication. In contrast, collective communication architectures, which distribute communication workloads across devices, often organized in ring or tree topologies, offer improved scalability and efficiency~\cite{patarasuk2009bandwidth}, making them the preferred choice for modern LLM training.  
However, adapting EFC methods to collective communication architectures is not straightforward. It requires not only the redesign of communication protocols but also the implementation of collective communication primitives. Additionally, EFC methods often rely on maintaining global variables for error compensation, which introduces significant communication and memory overheads. For instance, 1-bit LAMB requires each node to store and synchronize a service error variable proportional to the model size, resulting in memory overhead that can become a bottleneck in large-scale model training. 
More importantly, in the context of FSDP, where model parameters and optimizer states are partitioned and distributed across devices, many EFC methods encounter fundamental compatibility issues. For example, 1-bit Adam, which relies on compressing and synchronizing optimizer states, faces challenges in FSDP because complete optimizer states are not maintained during backpropagation, making it difficult to update model weights without incurring additional communication overhead. Similarly, methods like IntSGD~\cite{mishchenko2021intsgd} require access to full model parameters for subsequent computations, resulting in higher communication costs in sharded setups.
}

\noindent\textbf{Contribution.} In this work, we propose an effective gradient compression approach to improve the communication efficiency of widely used optimizers, particularly in FSDP settings. We introduce the novel Low-bit Communication Adaptor (\method{}), which compresses full-precision gradients into low-precision ones with minimal information loss, significantly enhancing communication efficiency in large-scale training. The core of \method{} lies in its refined error-feedback mechanism: it leverages a moving average of past compression errors to compute an error term, which is combined with the gradient before compression to reduce overall compression error. This strategy decouples \method{} from specific optimization algorithms, making it compatible with various optimizers, such as Adam~\cite{kingma2014adam} and AdaFactor~\cite{shazeer2018adafactor}, and integrates seamlessly with modern sharding strategies like FSDP and Zero~\cite{rajbhandari2020zero}.
Furthermore, \method{} has been adapted to popular large-scale model training frameworks such as Megatron-LM, PyTorch's FSDP, and DeepSpeed~\cite{deepspeed}. This compatibility ensures that \method{} can be effectively employed in diverse large-model training environments, providing robust support for scaling up training processes.

An important aspect of our work is the theoretical analysis, demonstrating that for nonconvex problems—including the training of large-scale models—integrating \method{} with standard optimizers like SGD and Adam-type algorithms does not adversely affect their convergence speed. This ensures that these optimizers retain their effectiveness even when operating with low-precision gradients.

Extensive experiments show that \method{} significantly improves efficiency while maintaining performance comparable to full-precision optimizers. For instance, on various LLMs, such as LLAMAs and MoE-Mixtral~\cite{jiang2024mixtral} with model sizes ranging from 7B to 70B parameters, \method{} enhances the overall training speed of Adam by 14\% to 40\% while preserving comparable downstream task performance as Adam with full-precision gradients. This demonstrates the effectiveness of \method{} in alleviating communication burdens in large-scale model training.

\vspace{-2mm}
\section{Preliminary and Related Work}\label{sec:relatedwork}
\subsection{Communication among GPU Nodes}\label{sec:RC}
{ In distributed training, communication architectures are broadly categorized into parameter-server architectures and decentralized collective communication architectures.

Parameter-server architectures adopt a centralized design where a parameter server aggregates gradients from worker nodes and broadcasts updated parameters back. While simplifying coordination, it often suffers from scalability issues, as the parameter server can become a communication bottleneck under heavy workloads. Moreover, the central server represents a single point of failure, meaning that any disruption at the server can completely halt the training.

In contrast, decentralized collective communication architectures distribute the communication workload across all nodes, enhancing scalability and fault tolerance. These architectures typically employ network topologies such as ring and tree, along with communication primitives like reduce-scatter and all-gather for efficient data exchange.
Ring topology~\cite{patarasuk2009bandwidth} connects nodes in a closed loop, enabling simultaneous data exchange between neighboring nodes. This setup balances communication loads and achieves high parallelism. However, it may introduce latency in large-scale systems, as data must traverse multiple rounds to reach all nodes.  
Tree topology~\cite{huang2021communication} organizes nodes hierarchically, enabling efficient gradient aggregation and parameter distribution. Compared to ring topology, tree-based communication reduces latency for large-scale transfers but can lead to imbalanced workloads near the root, requiring careful optimization.
By addressing centralized bottlenecks and leveraging collective communication primitives, decentralized architectures provide greater scalability and resilience, making them a preferred choice for large-scale model training.}

\vspace{-2mm}
\subsection{Fully Sharded Data Parallelism}
FSDP has emerged as the preferred training method for large-scale machine learning models, addressing limitations that make Distributed Data Parallel (DDP) unsuitable for such tasks. DDP, which requires each GPU to maintain a full replica of the model, faces significant memory constraints when dealing with models that have billions of parameters. In contrast, FSDP improves scalability by sharding model parameters, gradients, and optimizer states across multiple devices. This sharding process allows FSDP to dynamically gather only the necessary shards for computation, thus substantially reducing memory usage and enabling efficient training of extremely large models. Integrated into frameworks like PyTorch and Megatron-LM~\cite{shoeybi2019megatron,narayanan2021efficient}, FSDP has shown considerable improvements in training speed and memory efficiency, solidifying its role as the default solution for large model training~\cite{zhao2023pytorch,rajbhandari2020zero}. For a comprehensive background and discussion, please refer to Appendix Sec.~\ref{sec:fsdp}.

 \vspace{-2mm}
\subsection{Communication-efficient Training} 
Recently,   AI models have become much larger than before, like billion-scale language models and multi-modal models~\cite{touvron2023llama,jiang2024mixtral}, and their training bottleneck is often the high communication cost caused by the very high-dimensional gradient communication among   GPUs. To alleviate this issue, one often compresses the gradient before its communication.  Currently,  compression techniques mainly contain gradient  quantization~\cite{seide20141,alistarh2017qsgd,wen2017terngrad}, gradient spasification~\cite{wangni2018gradient,wang2018atomo,shi2021towards}, and decentralization~\cite{lian2017can,lu2021optimal}.
 Among them, gradient quantization aims to quantize the high precision gradient into a low-bit one for reducing communication cost,  and has shown promising efficiency for model training, e.g., 1-bit Adam~\cite{tang20211} and 0/1 Adam~\cite{lu2022maximizing} of which both compress the entries in the gradient-based statistics into $\pm 1$. 
 
 \vspace{-2mm} 
\subsection{Error-feedback Compression} Gradient compression often introduces information loss, leading to accumulated errors that can cause algorithmic divergence. To address this, Seide et al.~\cite{seide20141} proposed the first error-feedback compression (EFC) strategy, which compensates for compression errors by adding them back into the gradient before compression. This method demonstrated effectiveness in 1-bit SGD. After this, EF21 was proposed~\cite{richtarik2021ef21}, a theoretically and practically improved EFC variant, which has inspired further theoretical developments~\cite{gruntkowska2023ef21,zhao2022beer,li2020acceleration,li2020unified}.
Practical adaptations of EFC have also been explored, incorporating gradient quantization into adaptive gradient algorithms to develop communication-efficient variants, such as 1-bit Adam~\cite{tang20211} and 0/1 Adam~\cite{lu2022maximizing}. Adaptive gradient algorithms like AdaGrad~\cite{duchi2011adaptive}, Adam~\cite{kingma2014adam}, and Adan~\cite{xie2022adan} adjust learning rates for each gradient coordinate based on the curvature of the training loss, offering faster convergence than SGD. Combining EFC with these adaptive algorithms enhances training efficiency while maintaining comparable performance to their uncompressed counterparts.

Despite the promising results of EFC, most implementations are tailored for parameter-server architecture, limiting their applicability. Some other EFC-based methods, such as PowerSGD~\cite{vogels2019powersgd}, can train neural networks with DDP patterns, but they encounter challenges in the FSDP setting. For instance, in PyTorch's FSDP framework, gradients retrieved during communication are flattened, complicating the application of matrix decomposition-based techniques like PowerSGD, which rely on the original gradient shapes.

\begin{figure}[t]
    \centering		
  \includegraphics[width=\linewidth]{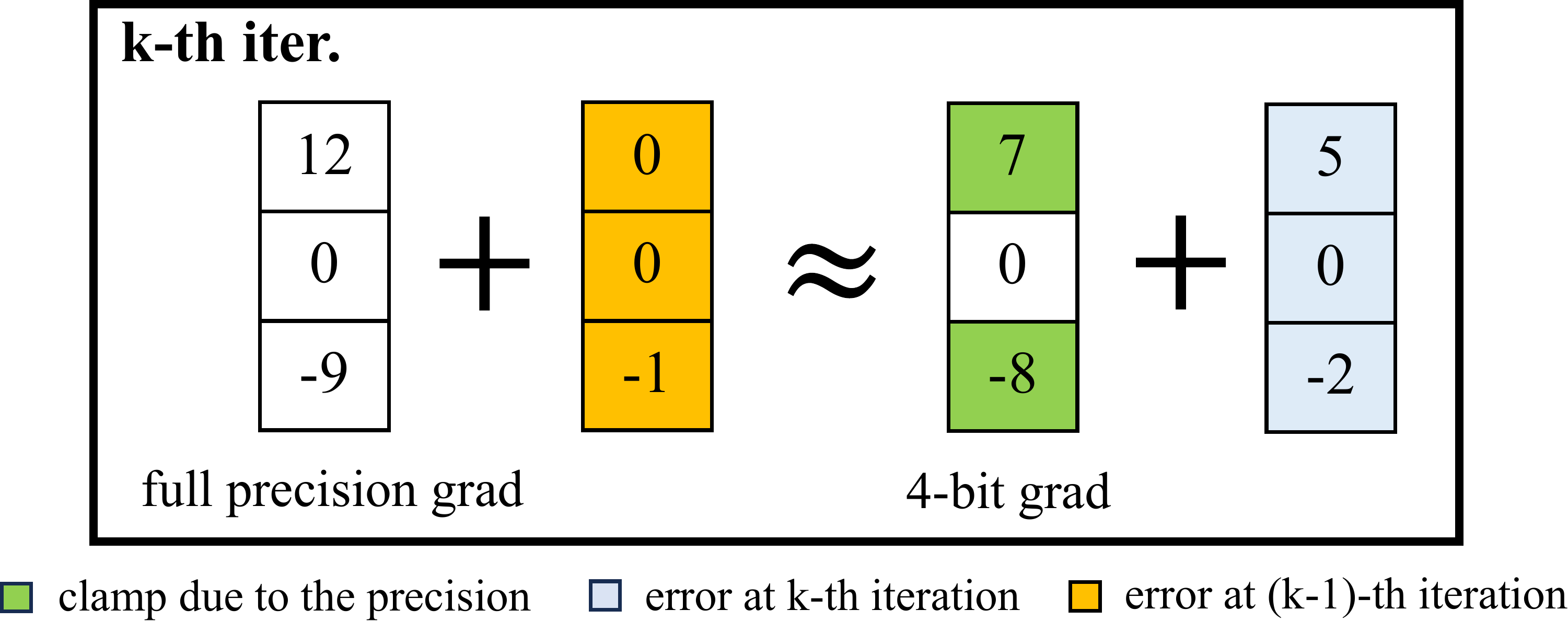}	
  \caption{Illustration of \method{}. At iteration $k$, before compression,\method{} compensates the full-precision gradient with the compression error from previous iterations to reduce the compression error. Then, it compresses the gradient into a low-bit one for fast communication among GPU nodes.
}\label{fig:intro}
\vspace{-5mm}
\end{figure}	

\vspace{-2mm}
\subsection{Challenges in Migrating EFC to FSDP}\label{sec:challenge}
Migrating EFC methods from parameter-server architectures to FSDP frameworks necessitates addressing significant challenges associated with maintaining the global error variable. In the original EFC algorithms, a global error variable compensates for compression errors. Removing this global variable is crucial to reduce memory and communication overheads, as maintaining a copy of the global error variable on each node would be inefficient.

\textbf{Sharding Conflict:} Specifically, for optimizers like 1-bit Adam and 0/1 Adam, which use optimizer state communication instead of gradient communication, transitioning to FSDP is particularly challenging. In the FSDP context, gradients are complete during the backward pass and can be effectively reduced and scattered. However, using optimizer states for model updates requires additional communication of these states, leading to increased overhead that contradicts the sharding strategy's goal of minimizing communication.

\textbf{Memory Constraints with Optimizer States:} Some methods, such as 0/1 Adam and EF21-SGD2M~\cite{fatkhullin2024momentum}, compress or communicate optimizer states, leading to significant memory management challenges. While gradients can be discarded after communication to free up memory, optimizer states, particularly first-order moments, must be retained for subsequent update iterations. These persistent states impose significant memory demands in sharded environments. Furthermore, optimizer states are not suitable for low-precision representation. For instance, FP8-LM~\cite{peng2023fp8} demonstrates that the precision of optimizer states significantly impacts model training quality. Detailed experiments from FP8-LM show that optimizer states require precision higher than 16-bit for effective training. In contrast, our \method{} maintains only a local average of the compressed errors, which is less sensitive to precision and can be stored in 8-bit format without impacting training effectiveness.

\textbf{Matrix Decomposition Compression Challenges:} Additionally, methods based on matrix decomposition, such as PowerSGD~\cite{vogels2019powersgd}, face difficulties in the FSDP context. In PyTorch's FSDP framework, gradients retrieved in the communication hook are flattened, complicating the application of matrix decomposition techniques that rely on the gradients' original shapes. This restriction prevents the straightforward application of such compression methods, further highlighting the challenges of adapting EFC to FSDP.

\begin{algorithm}[t]
    \SetAlgoLined
    \KwIn{initialization $\bm{\theta}_0$, compression scalar $s$ and $s_e$, reset frequency $T_{c}$,    $\beta \in [0,1]$.}
    \KwOut{model weight  $\bm{\theta}_{K}$ at the $K$-th iteration.}
  \SetKwBlock{Stepone}{Step 1. Low-Bit Gradient  Estimation}{}
  \SetKwBlock{Steptwo}{Step 2. Compensation Error Estimation}{}
  \SetKwBlock{Stepthree}{Step 3. Communication and Model Update}{}
		\While{$k<K$}{
			\Stepone{
			estimate the gradient  ${\*g}^n_{k}$ at $\bm{\theta}_{k}$ on \textbf{Node} $n$\;
			$\*h^n_{k+1}= {\*g}^n_{k} + \operatorname{decompressor}(\*e^n_k; s_e)$\;
			
			$\tilde{\*h}^n_{k+1} =\operatorname{compressor}(\*h^n_{k+1}; s, 4)$\;
   }
			\Steptwo{
			${\*d}^n_{k+1} =\operatorname{decompressor}(\tilde{\*h}^n_{k+1}; s)$\;
			$\tilde{\*e}^n_{k+1}  =(1-\beta)\tilde{\*e}^n_{k} + \beta (\*h^n_{k+1} - {\*d}^n_{k+1})$\;
   \eIf{if $k \% T_c = 0$}{
    $\*e^n_{k+1} = \bm{0}$ \tcp*{error reset}
}{
    $\*e^n_{k+1} =\operatorname{compressor}(\tilde{\*e}^n_{k+1}; s_e, 8)$\; 
}
   }
			\Stepthree{
			$\tilde{\*g}_{k} = \frac{1}{N}\sum_{n=1}^N \operatorname{decompressor}(\tilde{\*h}^n_{k+1})$\;
   use $\tilde{\*g}_{k}$ to update  $\bm{\theta}_{k}$ on \textbf{Node} $n$\;
		}}
\caption{\textbf{\method{}} {\fontsize{8.5}{5}\selectfont{(Low-bit Communication Adaptor)}}
}\label{alg:BitCom}
\end{algorithm}

\vspace{-2mm}
\section{Low-Bit Communication Adaptor}\label{sec:methods}

To address the communication burden in large-model training on many GPU nodes, we introduce an efficient and novel low-bit communication adapter, \method{}.  As shown in Fig.~\ref{fig:intro}, the core idea of \method{} is to quantize the full-precision gradient into a lower-precision one with error-feedback for improving communication efficiency,  e.g., compressing a 32-bit gradient into a 4-bit one.    
A critical challenge in gradient compression is that its compression error accumulates along training iterations and can lead to failure in model training.  To solve this issue, \method{} employs a novel error-feedback compression strategy in Algorithm~\ref{alg:BitCom} to reduce accumulated errors and ensure high-quality compression. The strategy encompasses three key steps: 1) low-bit gradient estimation, 2) compensation error estimation, and 3) gradient communication and model update. Initially, \method{} computes the stochastic gradient on each GPU and compensates it before compression to a low-bit format. The compensation error estimation step integrates the current and historical compensation errors to mitigate gradient compression error accumulation in subsequent steps. The final step involves aggregating the average of low-bit gradients across GPUs, which is subsequently followed by the model update using optimizers like Adam and Adafactor. 
Detailed elaboration on these steps will be provided below.

\subsection{Low-bit Gradient Estimation}
As shown in Algorithm~\ref{alg:BitCom}, for parallel training with $N$ GPU nodes, each node receives a minibatch of data at each iteration to compute the minibatch stochastic gradient ${\*g}^n_{k}$. The key challenge is to compress the high-precision gradient ${\*g}^n_{k}$ into a low-precision form without causing significant accumulated compression errors in each iteration. This ensures that low-bit gradients can be efficiently transferred among GPUs while maintaining the training quality.  

Accordingly, we design an efficient and effective error-feedback-based compression strategy.  Specifically, we first define  the element-wise compression operation and its inverse operation as follows:
\begin{equation}\label{eq:compressor}
\begin{cases}
\operatorname{compressor}(\*h;  s, p) \coloneqq
    \operatorname{round}_{\operatorname{p-bit}}\qty(\*h\times s), \\
    \operatorname{decompressor}(\tilde{\*h}; s) \coloneqq \operatorname{float}(\tilde{\*h})/s, 
\end{cases}     
\end{equation}
where $s>0$ denotes a hyper-parameter that modulates the low-bit scale, and the rounding function $\operatorname{Round}_{\operatorname{p-bit}}$ is defined to round each floating-point number to its nearest integer within the range of $-2^{p-1}$ to $2^{p-1}-1$, such as  $-8$ to $+7$ in $\operatorname{Round}_{\operatorname{4-bit}}$ operation. The operation in Eqn.~\eqref{eq:compressor} is a commonly used quantization method. While more advanced methods like  IntSGD~\cite{mishchenko2021intsgd} exist, we found that the default one in PyTorch, Eqn.~\eqref{eq:compressor}, is sufficient for our tasks.

Given the current compensation error $\*e^n_k$,   we first add it back to the current stochastic gradient ${\*g}^n_{k}$ on each node:
\begin{equation}\label{eq:adad}
	\begin{split}
\*h^n_{k+1}= \*g^n_{k} + \operatorname{decompressor}(\*e^n_k; s_e).
	\end{split}     
\end{equation}
In \method{}, to save memory on each GPU, we use an 8-bit compensation error $\*e^n_k$ which is quantized by the operation $\operatorname{compressor}(\*h;  s_e, 8) $ with scale $s_e$ in Eqn.~\eqref{eq:compressor}. Here $\operatorname{decompressor}(\*e^n_k; s_e)$ inversely transfers 8-bit $\*e^n_k$ into its float version  for addition with $\*g^n_{k}$.  In Sec.~\ref{secerror}, we will introduce the compensation error $\*e^n_k$ which intuitively denotes the average of the previous gradient compression error.

Next, on each node,  we compress  the high-precision compensated gradient $	\tilde{\*h}^n_{k+1}$ into its low-bit version:
\begin{equation}\label{eq:comadadaspressor}
	\begin{split}
	\tilde{\*h}^n_{k+1} =\operatorname{compressor}(\*h^n_{k+1}; s, 4).
	\end{split}     
\end{equation}
In our experiments, we set  a scaling factor $s=2^{19}$ for fine-tuning and  $s \in \{2^{19}, 2^{17}\}$ for pre-training.
In practice, we find that 4-bit can balance communication efficiency and performance well. Indeed, Fig.~\ref{fig:curve} shows that 4-bit \method{} can achieve comparable training loss as 16-bit based Adam.

Finally, one can transfer the low-bit compensated gradient $\tilde{\*h}^n_{k+1}$ among GPU nodes to compute their average for the model update by using an optimizer. See details in Sec.~\ref{secmodel}. Compared to 32-bit gradients, using 4-bit gradients for communication significantly enhances communication efficiency, particularly in environments with numerous GPUs. As the number of GPUs or the model size increases, communication costs grow linearly. This increase in communication overhead can cause GPUs to idle, waiting for data transfer, leading to a notable degradation in overall computational efficiency.

\vspace{-3mm}
\subsection{Compensation Error Estimation}\label{secerror}
Compression inherently leads to information loss, which is a significant challenge in gradient compression. Our approach focuses on estimating a compensation error $\tilde{\*e}^n_k$ to mitigate error accumulation. The 8-bit compensation error $\*e^n_k$ used in Eqn.~\eqref{eq:adad} is a memory-efficient, compressed version of $\tilde{\*e}^n_k$ to conserve GPU memory. A straightforward solution is to set the compensation error as the gradient compression error from the previous iteration, as implemented in the EFC~\cite{seide20141}:
\begin{equation}\label{adasda}
\tilde{\*e}^n_k = \argmin_{\*e} \norm{\*e +  {\*d}^n_{k}  - \*h^n_{k} }^2 =  \*h^n_{k} - {\*d}^n_{k}, 
\end{equation} 
where ${\*d}^n_{k} =\operatorname{decompressor}(\tilde{\*h}^n_{k}; s)$ decompresses the low-bit compensated gradient $\tilde{\*h}^n_{k}$ into high-precision floating number for computation, and $\norm{\cdot}$ is the vector $\ell_2$ norm.

Unfortunately, we empirically find that this estimation is not stable. This is because compression operations, e.g., our rounding operation, top-k operation~\cite{alistarh2018convergence} and random sparsification~\cite{stich2018sparsified}, often inherently exhibit discontinuous properties, and bring abrupt fluctuations in compensation error. 
This means that $\tilde{\*e}^n_k$ suffers from large variance.  
As a result,   using $\tilde{\*e}^n_k$ to compensate the gradient in Eqn.~\eqref{eq:comadadaspressor} would also bring abrupt fluctuations into the low-bit gradient,   potentially failing the model training.  To relieve this issue, we regularize the current compensation error $\tilde{\*e}^n_k$ to be not too far from the previous one $\tilde{\*e}^n_{k-1}$. This can avoid big $\tilde{\*e}^n_k$ fluctuations and improve its smoothness. Formally, we have:
\begin{align}
	\tilde{\*e}_k^n = & \argmin_{\*e} \frac{\beta}{2}\norm{\*e+ {\*d}^n_{k}  - \*h^n_{k} }^2 + \frac{1-\beta}{2}\norm{\*e - \tilde{\*e}_{k-1}^n}^2 \notag\\
	= & \qty(1-\beta) \tilde{\*e}_{k-1}^n + \beta \qty( \*h^n_{k} - {\*d}^n_{k}), \label{dada}
\end{align}
where $0\leq\beta\leq 1$ is a parameter to balance the two terms, and $\tilde{\*e}^n_0$ is initialized as zero. One can observe that Eqn.~\eqref{dada} effectively averages all historical compression errors to estimate the compensation error, thus avoiding large fluctuations better than Eqn.~\eqref{adasda}. Some EFC-based methods, like EF21-SGD2M~\cite{fatkhullin2024momentum}, suggest that moving averages may have theoretical benefits. However, they apply the average to the gradient rather than the error, resembling a combination of EFC and adaptive optimizers rather than addressing the instability issues targeted by our method.

Moreover, when using SGD or Adam-type optimizers to update the model, the model weight at the $k$-th iteration can be formulated as $\bm{\theta}_k = \bm{\theta}_{0} -  \sum_{i=1}^k \bm{\eta}_i \circ \*g_i$, where  $\*g_i$ is the high-precision gradient. For SGD, its element-wise learning rate $\bm{\eta}_i$ is a constant $\eta$, while for Adam-type optimizers, e.g., Adam, it is the combination of a preconditioner and a constant learning rate $\eta$ (see Eqn.~\eqref{eq:sgd-adam} in Sec.~\ref{sec:convergence}), and is also of the order $\order{\eta}$. In this context, for \method{}, we can show that:
\begin{equation}\label{eq:acm_err}
  \norm{\sum_{i=1}^k \bm{\eta}_i \circ \*g_i - \sum_{i=1}^k \bm{\eta}_i \circ \tilde{\*g}_i} = \order{\eta \|\emti{k}{}\|{}} = \order{\eta},  
\end{equation}
where $\tilde{\*g}_i$ is the low-precision counterpart of gradient ${\*g}_i$.
This indicates that although the compression error $\norm{\*g_k-\tilde{\*g}_i}$ for a single iteration is of the order $\order{\| \emti{k}{}\|}=\order{\eta}$, the compression error over iterations will \emph{not} accumulate and remains at the  order of  $\order{\eta}$. 
This is because the difference between the accumulated high-precision gradients $\sum^k_{i=1}\bm{\eta}_i \circ\*g_i$ and low-precision gradients $\sum^k_{i=1}\bm{\eta}_i \circ\tilde{\*g}_i$ does not increase with the number of iterations. This demonstrates \method{}'s capability to mitigate error accumulation. For a formal illustration and analysis, please refer to Lemma~\ref{lem:sum_mk} in Appendix~\ref{sec:convergence}.

 To save GPU memory footprint, we further compress the high-precision compensation error $\tilde{\*e}_k^n$ into an 8-bit version ${\*e}_k^n$ on each node. This is important and necessary for large-model training settings since compensation error is of the same size as the model and brings  GPU memory overhead.  
 
Moreover, we periodically reset the compensation error. This is because, along with training iterations, the compensation errors in the early iterations are out of date and are not suitable for current compensation error estimation due to the ever-changing optimization process and landscape.  This is especially true for network training due to their fast-changing landscape during optimization.  In this way, we arrive at our final low-bit compensation error ${\*e}_k^n$:
\begin{align}\label{adadsa}
\*e^n_{k+1} \!=\! \begin{cases}
	\*0,  & \text{if $k\ \%\ T_c =  0$}, \\
	\operatorname{compressor}(\tilde{\*e}^n_{k+1}; s_e, 8), & \text{otherwise.}
\end{cases}
\end{align}
 We set $T_c \in \{512,1024\}$ for all experiments, which works well, and set $s_e = 4s$ or $6s$ where $s$ is used in Eqn.~\eqref{eq:comadadaspressor}.  In practice, on large-model training, e.g., LLAMA,  with this 8-bit quantization,  \method{} improves the token throughput by $10\% - 40\%+$ while only bringing a marginal memory overhead of less than 10\%. See results in Tables~\ref{tab:speedup} and~\ref{tab:mem}.

\vspace{-2mm}
\subsection{Communication and Model Update}\label{secmodel}
To synchronize the gradients for model update at the $k$-th iteration, we need to collect low-bit gradients $\{\tilde{\*h}^n_{k+1} \}_{n=1}^N$ from all $N$ GPU nodes, and compute their average 
\begin{equation}\label{adadaasda}
 	\tilde{\*g}_{k}  =   \frac{1}{N}\sum\nolimits_{n=1}^N \operatorname{decompressor}(\tilde{\*h}^n_{k+1}),
 \end{equation}
 where $\operatorname{decompressor}(\cdot)$ decompresses the 4-bit gradient $\tilde{\*h}^n_{k+1}$ into high-precision floating number, which is then used in the optimizer for the model update on each node. 

Considering the demands of large-scale model training, we adopt the FSDP strategy that is commonly used for training  LLMs~\cite{shoeybi2019megatron,rajbhandari2020zero}.
FSDP partitions the $d$-dimensional gradient  into $N$ blocks, e.g., $\tilde{\*h}^n_k \coloneqq [\tilde{\*h}^n_{k,1}, \cdots, \tilde{\*h}^n_{k, N}]$ on the $n$-th node.  Then each node only collects its corresponding portion from all other nodes, e.g., $\frac{1}{N}\sum_{i=1}^N\operatorname{decompressor}(\*h^i_{k,n})$ on the $n$-th node, to save memory and communication cost.

Under FSDP settings, which are widely used for training large-scale models, gradient averaging typically employs the reduce-scatter operation. However, to collect gradients on all GPU nodes, reduce-scatter requires each node to decompress, sum, and recompress the low-bit vectors. This complex process progressively increases information loss due to numerical anomalies such as overflow or underflow.
To address this issue, we adopt the all-to-all (all2all) strategy. In the all2all approach, the $n$-th node first gathers all low-bit gradient partitions 
$[\*h^1_{k,n}, \cdots, \*h^N_{k, n}]$, and then averages them in high precision locally. This method eliminates the intermediate steps in reduce-scatter, thus preventing information loss. Furthermore, all2all maintains computational and communication efficiency comparable to reduce-scatter. Consequently, we use all2all under FSDP settings.
\emph{For more details about all2all, see Appendix~\ref{sec:all2all}.}

\subsection{Discussion and Comparison} 
Compared with previous communication-efficient network training algorithms like Zero++~\cite{wang2023zero++}, 1-bit Adam~\cite{tang20211}, 1-bit LAMB~\cite{li20221} and 0/1 Adam~\cite{lu2022maximizing}, \method{} distinguishes itself from them through its low computational and memory demands, enabling effective low-bit gradient training in large-scale models. It incorporates algorithmic improvements that significantly reduce compression error accumulation over training iterations, advancing beyond methods that use compression without error-feedback, such as Zero++~\cite{wang2023zero++}.
A key advantage of \method{} is its high compatibility with various optimizers (e.g., Adam, AdaFactor), collective communication, and other components essential for large-model training, such as FSDP. This compatibility is achieved by decoupling its error-feedback strategy from these specific configurations. In contrast, other low-bit optimizers, such as 1-bit Adam, 1-bit LAMB, and 0/1 Adam, have error-feedback mechanisms specifically designed for certain optimizers, limiting their applicability to sharding strategies.
This versatility aligns with the critical integration required for training SoTA LLMs and allows \method{} to be seamlessly integrated with more advanced techniques in the future.

\vspace{-2mm}
\section{Convergence Guarantee}\label{sec:convergence}
In this section, we provide the convergence guarantee for the proposed \method{} when applied to two prevalent types of optimizers: SGD and Adam-family optimizers. We focus on the following nonconvex optimization problem:
\begin{equation*}
\min\nolimits_{\bm{\theta}}   f(\bm{\theta}) \coloneqq  \EE_{\bm{\zeta}\sim \$D}\qty[F(\bm{\theta},\bm{\zeta})],
\end{equation*}
where $F(\cdot,\cdot)$ is differentiable and nonconvex, the data $\bm{\zeta}$ is drawn from an unknown distribution $\$D$, and $\bm{\theta}$ is model weight. For analysis, we first make several mild assumptions.
\begin{assumption}[$L$-smoothness]\label{asm:Lsmooth} 
	The function $f(\cdot)$ is $L$-smooth w.r.t. the parameter, i.e., $\exists L > 0$, we have:
	\[
	\norm{\nabla f(\xmi{1}) - \nabla f(\xmi{2})}_2 \leq L \norm{\xmi{1}- \xmi{2}}_2, \quad \forall \xmi{1}, \xmi{2}.
	\]
\end{assumption}

\begin{assumption}[Boundedness]\label{asm:boundVar} 
	The  gradient estimation $\*g_k$ on each GPU node  is unbiased, i.e., $\EE[\*g_k] = \nabla f(\xmi{k})$, 
	and its magnitude and variance are bounded: 
	\[
	\EE \norm{\*g_k}_{\infty} \leq \ginf,  \quad	\EE\qty[\norm{\nabla f(\xmi{k}) - \*g_k }_2^2] \leq \sigma^2. 
	\]
\end{assumption}

\begin{assumption}[Bit-length]\label{asm:p-bit} 
Support that the compression operations in \eqref{eq:comadadaspressor} and  \eqref{adadsa} respectively use   $p$-bit with a scalar $s$ and $p_e$-bit with a scalar $s_e$.  With the proper $p$, $p_e$, $s$, and $s_e$,  there exist a constant $0\leq \alpha <1$ such that $(1-\alpha)s c_\infty + s/2s_e \leq  2^p $ and $T_c\alpha  \beta  s_e c_\infty \leq  2^{p_e}$, where $\beta$ is given  in Eqn.~\eqref{dada}.
\end{assumption}

Assumptions~\ref{asm:Lsmooth} and \ref{asm:boundVar} are mild and frequently used in the analysis of general nonconvex problems~\cite{guo2021novel,arjevani2019lower,zhou2022win,zhou2024towards, xie2024optimization}. The bounded gradient assumption, while commonly utilized, is primarily necessary for the convergence analysis of Adam-family methods and is not specifically required by EFC. When considering only the SGD case, as with some EFC-based methods like EF21, \method{}'s convergence guarantee also holds without relying on the bounded gradient assumption.
It is important to note that removing the bounded gradient assumption for the convergence analysis of Adam-family methods remains an open and challenging problem~\cite{zhang2022adam,li2024frac}. We acknowledge this limitation and leave it for future work.

Assumption~\ref{asm:p-bit} quantifies the expected precision loss introduced by the two compression operations within Algorithm~\ref{alg:BitCom}.  1) For the $p$-bit gradient compression operation, we assume that in the worst-case scenario, the precision degradation does not exceed $\alpha$ times the upper bound of the gradient norm, denoted as $\alpha c_\infty$, where $0\leq \alpha<1$. This is similar to the necessary condition $\norm{\tilde{\*e}^n_{k+1}}^2 \leq \alpha \norm{\*g^n_k}^2$  used in the analysis of biased SGD~\cite{ajalloeian2020convergence}. This assumption ensures that in extreme cases, the compression error does not surpass the gradient itself, thereby preserving a portion of the gradient information and preventing the complete loss of gradient direction; 
2) For the  $p$-bit error compression operation, which lacks a feedback mechanism, we assume that the precision degradation introduced by the error compression does not accumulate excessively during the error reset period. By appropriately setting hyper-parameters, such as in our practical choice of $p_e = 8$ and $p=4$, this assumption ensures that the precision degradation from $p_e$-bit error compression remains less than that from $p$-bit gradient compression. This is a relatively weak condition and can be easily met.

Then, we investigate the convergence performance of \method{}-integrated SGD and Adam-type optimizers in turn. 

\vspace{-2mm}
\subsection{\method{}-integrated SGD}
For SGD, its algorithmic steps are as:
\begin{equation}\label{eq:sgd-adam}
	\begin{split}
    \text{SGD:}&\quad \xmi{k+1} =    \xmi{k}  - \eta \tilde{\*g}_k,
    \end{split}
\end{equation}
where the gradient $\tilde{\*g}_k$ is given  in Eqn.~\eqref{adadaasda}.
Now we are ready to analyze \method{}-integrated SGD and summarize the results in Theorem~\ref{thm:sgd} with its proof in Appendix~\ref{sec:proof:SGD}.

\begin{theorem}[SGD Convergence]\label{thm:sgd}
	Suppose that Assumptions~\ref{asm:Lsmooth}, \ref{asm:boundVar}, and~\ref{asm:p-bit} hold. 
Let $s_e = \Omega(\epsilon^{-4})$ and $\eta = \order{\epsilon^2}$ in \method{}-integrated SGD. Then, after $T= \Omega\qty(\epsilon^{-4})$ iterations, we have: 
\[
\frac{1}{T} \sum_{k=0}^T \EE \norm{\nabla f\qty({\bm{\theta}}_{k})}^2  \leq \order{{\epsilon^2}}.
\] 
That is, the stochastic gradient complexity to find an $\epsilon$-accurate first-order stationary point is $\order{\epsilon^{-4}}$.
\end{theorem}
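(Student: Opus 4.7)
The plan is to follow the standard nonconvex stochastic descent template, but augmented with an error-feedback virtual-sequence argument so that the biased LoCo gradient can be treated as an unbiased SGD step up to a telescoping correction. Writing $\tilde{\*g}_k = \*g_k + \bm{\xi}_k$ where $\*g_k = \frac{1}{N}\sum_n \*g_k^n$ is the mini-batch mean (unbiased by Assumption~\ref{asm:boundVar}) and $\bm{\xi}_k = \tilde{\*g}_k - \*g_k$ is the net compression error, I would begin from $L$-smoothness to obtain the descent inequality $f(\xmi{k+1})\le f(\xmi{k}) - \eta\innerprod{\nabla f(\xmi{k}),\tilde{\*g}_k} + \tfrac{L\eta^2}{2}\|\tilde{\*g}_k\|^2$, and then split the inner product into the ``clean'' SGD term $-\eta\|\nabla f(\xmi{k})\|^2$ (after expectation) plus a bias contribution $-\eta\innerprod{\nabla f(\xmi{k}),\bm{\xi}_k}$ that must be absorbed.

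Second, to handle the bias I would introduce a virtual iterate $\hat{\bm{\theta}}_k = \xmi{k} - \eta\,\tilde{\*e}_k$ (with $\tilde{\*e}_k = \frac{1}{N}\sum_n \tilde{\*e}_k^n$). Unrolling the compensation recursion $\tilde{\*e}_{k+1}^n = (1-\beta)\tilde{\*e}_k^n + \beta(\*h_{k+1}^n - \*d_{k+1}^n)$ together with the identity $\*d_{k+1}^n = \*h_{k+1}^n - (\*h_{k+1}^n - \*d_{k+1}^n)$ shows that $\sum_{i=0}^{k}(\tilde{\*g}_i - \*g_i)$ telescopes into a term proportional to $\tilde{\*e}_{k+1}$ (up to the 8-bit quantization residual controlled by Assumption~\ref{asm:p-bit}). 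This is precisely the content the paper cites as Lemma~\ref{lem:sum_mk}, yielding $\|\hat{\bm{\theta}}_{k+1}-\xmi{k+1}\| = \order{\eta\|\tilde{\*e}_{k+1}\|}$ and, via Assumption~\ref{asm:p-bit}, $\|\tilde{\*e}_{k}\| = \order{c_\infty}$ uniformly in $k$ because the periodic reset every $T_c$ steps together with $T_c\alpha\beta s_e c_\infty \le 2^{p_e}$ prevents any drift.

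Third, I would apply the descent inequality to $\hat{\bm{\theta}}_k$ instead of $\xmi{k}$: by construction $\hat{\bm{\theta}}_{k+1} = \hat{\bm{\theta}}_k - \eta\*g_k + \eta(\tilde{\*e}_k - \tilde{\*e}_{k+1})$, so the cross term becomes genuinely unbiased, $L$-smoothness lets me compare $\nabla f(\hat{\bm{\theta}}_k)$ and $\nabla f(\xmi{k})$ at a cost of $L^2\eta^2\|\tilde{\*e}_k\|^2$, and Assumption~\ref{asm:boundVar} controls $\EE\|\tilde{\*g}_k\|^2 \le 2\sigma^2/N + 2 d c_\infty^2 + $ compression slack. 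After taking expectation, summing $k=0,\dots,T-1$, using the telescoping $\sum_k (\tilde{\*e}_k - \tilde{\*e}_{k+1})$ (reset windows handled by the boundedness of $\tilde{\*e}$), and dividing by $\eta T$, I obtain $\frac{1}{T}\sum_k \EE\|\nabla f(\xmi{k})\|^2 \le \frac{2(f(\xmi{0})-f^\ast)}{\eta T} + C_1 L\eta + C_2 L^2\eta + C_3 s_e^{-1}$ for explicit constants depending on $\sigma,c_\infty,d,\alpha,\beta$. Choosing $\eta = \order{\epsilon^2}$, $s_e = \Omega(\epsilon^{-4})$, and $T = \Omega(\epsilon^{-4})$ balances these terms at $\order{\epsilon^2}$.

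The main obstacle I anticipate is Step~2: carefully bookkeeping the interaction between the two nested lossy compressors (the 4-bit gradient compression with feedback and the 8-bit error compression without feedback) across an entire reset window of length $T_c$. Assumption~\ref{asm:p-bit} is phrased exactly so that (i) the per-step gradient compression error is contractive, $\|\*h_{k+1}^n - \*d_{k+1}^n\| \le \alpha\|\*h_{k+1}^n\|$, and (ii) the $p_e$-bit overflow-free quantization of $\tilde{\*e}^n$ holds throughout a window; I will need to verify inductively that $\|\tilde{\*e}_k^n\|_\infty$ stays within the $2^{p_e}/s_e$ range so the second compressor introduces only bounded per-step noise, and then show that at the reset boundary the residual error is absorbed into the $\order{\eta}$ telescoping bound rather than producing a new accumulating source.
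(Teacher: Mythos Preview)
Your plan is essentially the paper's proof: both run the $L$-smooth descent lemma on a virtual sequence that strips off the accumulated compression error (the paper writes this virtual sequence as the uncompressed-SGD trajectory $\bm{\theta}_k=\bm{\theta}_0-\eta\sum_{i}\*g_i$, which coincides with your $\hat{\bm\theta}_k$ up to the 8-bit residual), invoke the telescoping identity for $\sum_{i}(\tilde{\*g}_i-\*g_i)$, and control $\|\tilde{\*e}_k\|$ uniformly via Assumption~\ref{asm:p-bit} and the periodic reset.

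Two small corrections to keep in mind when you write it out. First, by Lemma~\ref{gradientrlation} one has $\tilde{\*g}_k-\*g_k=\tfrac{1}{\beta}(\tilde{\*e}_k-\tilde{\*e}_{k+1})+(\emhi{k}{}-\tilde{\*e}_k)$, so the correct shift is $\hat{\bm\theta}_k=\xmi{k}-\tfrac{\eta}{\beta}\tilde{\*e}_k$; with your $\eta\tilde{\*e}_k$ the claimed update identity for $\hat{\bm\theta}_{k+1}$ does not hold. Second, the relevant telescoping lemma is Lemma~\ref{lem:sum_gk} (Lemma~\ref{lem:sum_mk} is the Adam analogue), and the 8-bit residual $\emhi{k}{}-\tilde{\*e}_k$ does \emph{not} telescope: in the paper's bookkeeping it accumulates as $\sqrt{d}\,k/(2s_e)$ inside $\|\bm{\theta}_k-\tilde{\bm\theta}_k\|$, producing an $\order{\eta T/s_e}$ term after averaging, which is exactly why the statement requires $s_e=\Omega(\epsilon^{-4})$ rather than $\Omega(\epsilon^{-2})$. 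Your final bound ``$C_3 s_e^{-1}$'' hides this factor of $T$, so make sure the summation step accounts for it.
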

Theorem~\ref{thm:sgd} demonstrates that \method{}-integrated SGD and the original SGD exhibit identical convergence rates, with stochastic gradient complexity (i.e., the number of gradient evaluations) relative to $\epsilon$, achieving the theoretical lower bound~\cite{arjevani2019lower}. The advantageous properties of \method{}-integrated SGD arise from two key factors: 1) The refined EFC in \method{} ensures that gradient compression errors do not accumulate over iterations. As shown in Eqn.~\eqref{eq:acm_err}, both single-step and multi-step errors remain of the same order of magnitude; 2) \method{}'s periodic error reset mechanism prevents the compression errors, when compressing the compensation error $\tilde{\*e}^n_{k+1}$ to its 8-bit form ${\*e}^n_{k+1}$, from accumulating over time. This reset mechanism ensures that these compression errors are eliminated after a finite number of steps.
These properties ensure that the distance between the training sequences of \method{}-integrated SGD and the original SGD remains tightly bounded.

\begin{table*}[t!]
	\centering
		\caption{Comparison of \method{} with previous SoTA methods across various metrics, including gradient complexity, communication time, memory overhead, collective communication support, and sharding support. Here, $\Psi$ represents the number of model parameters, $N_d$ is the number of distributed nodes, $B$ is the communication bandwidth (bytes/s), and $r$ is the low-rank parameter specific to PowerSGD. We consider the mixed-precision setting for memory overhead.}
		\label{tab:comparison}
		\setlength{\tabcolsep}{10pt} 
		\renewcommand{\arraystretch}{1.6}
		{ \fontsize{8.3}{3}\selectfont{
				\begin{tabular}{l|c|l|l|c|c}
					\toprule
				Methods &  \begin{tabular}[c]{@{}c@{}}Gradient \\ Complexity\end{tabular} & Communication  Time & {\begin{tabular}[c]{@{}c@{}}Memory  \\ Consumed\end{tabular}}  & {\begin{tabular}[c]{@{}c@{}} Collective\\ Comm. \end{tabular}} &  {\begin{tabular}[c]{@{}c@{}} Opt. + Grad.\\ Sharding \end{tabular}}  \\
         \midrule
          EF~\cite{seide20141} &  $\order{\epsilon^{-4}}$ 	& 	$2.5\Psi N_d/B$ & $10{\Psi}$    &  \ding{55} & \ding{55} \\
    \midrule
    EF21~\cite{richtarik2021ef21} & $\order{\epsilon^{-4}}$ 	&	$2.5\Psi N_d/B$   & $10{\Psi}$ & \ding{55} & \ding{55}   \\
    \midrule
    1-bit Adam~\cite{tang20211} & $\order{\epsilon^{-4}}$ & $0.325{\Psi} \qty(N_d-1)/\qty(BN_d)$ &  $18{\Psi} + 2{\Psi}/N_d$ & \ding{51}	& \ding{55}   \\
    \midrule
    1-bit LAMB~\cite{li20221} & $\order{\epsilon^{-4}}$ & 	$0.325{\Psi} \qty(N_d-1)/\qty(BN_d)$  & $22{\Psi}+ 2{\Psi}/N_d$   &  \ding{51}	&\ding{55} \\ 
    \midrule
    PowerSGD~\cite{vogels2019powersgd} & \ding{55} & 	$4r\sqrt{\Psi} \qty(N_d-1)/\qty(BN_d)$ & $14{\Psi}+ 2r\sqrt{\Psi}$  & \ding{51}	& \ding{51}   \\ 
    \midrule
     Modified EF-SGD &  $\order{\epsilon^{-4}}$ 	& 	$2.25{\Psi} \qty(N_d-1)/\qty(BN_d)$ & $4{\Psi} + 6\Psi/N_d$   &  \ding{51} &\ding{51}   \\
    \midrule
     Modified  EF21-SGD & $\order{\epsilon^{-4}}$ & 	$2.25{\Psi} \qty(N_d-1)/\qty(BN_d)$ & $4{\Psi} + 10\Psi/N_d$  & \ding{51}	& \ding{51}   \\
     \midrule
		Adam~\cite{kingma2014adam}& $\order{\epsilon^{-4}}$ & 	$4{\Psi} \qty(N_d-1)/\qty(BN_d)$ & $2{\Psi} + 14\Psi/N_d$    & \ding{51}	& \ding{51}  \\
    \midrule
    SGD & $\order{\epsilon^{-4}}$ & 	$4{\Psi} \qty(N_d-1)/\qty(BN_d)$ &  $2{\Psi} + 6\Psi/N_d$ & \ding{51}	& \ding{51}   \\
     \midrule
     Adam-Zero++~\cite{wang2023zero++} & \ding{55}& 	$1.5{\Psi} \qty(N_d-1)\qty(BN_d)$ & $2{\Psi} + 14\Psi/N_d$   & \ding{51}	& \ding{51}   \\
     \midrule
     \method{}-SGD (ours) & $\order{\epsilon^{-4}}$& 	$2.25{\Psi} \qty(N_d-1)/\qty(BN_d)$& $3{\Psi} + 6\Psi/N_d$      & \ding{51}	& \ding{51} \\
     \midrule
     \method{}-Adam (ours)  & $\order{\epsilon^{-4}}$& 	$2.25{\Psi} \qty(N_d-1)/\qty(BN_d)$  & $3{\Psi} + 14\Psi/N_d$  &  \ding{51}	&\ding{51}   \\ \midrule
     \method{}-Zero++ (ours)  & $\order{\epsilon^{-4}}$& 	$1.5{\Psi} \qty(N_d-1)/\qty(BN_d)$  & $3{\Psi} + 14\Psi/N_d$  &  \ding{51}	&\ding{51}   \\
   \bottomrule 
\end{tabular}	
\vspace{-5mm}
}}
\end{table*}

\subsection{\method{}-integrated Adam-family Optimizers}
The algorithmic steps for Adam-type optimizers are
\begin{equation}\label{eq:adam}
    \text{Adam Family:}
    \begin{cases}
    	\mmi{k} =  (1-\betai{1})  \mmi{k-1} +  \betai{1} \tilde{\*g}_k, \\
    	\bm{\eta}_k =   \eta \times v\qty(\tilde{\*g}_0,\cdots,\tilde{\*g}_k), \\
    	\xmi{k+1} =    \xmi{k}  - \bm{\eta}_k \circ \mmi{k},
    \end{cases}
\end{equation}
where $\circ$ is the element-wise product, $\beta_1\in(0,1)$, and $\mmi{0}=\bm{0}$.
For Adam Family, its operation $v\qty(\cdot)$ is to compute the pre-conditioner, e.g., the inverse of Adam's second-order moment,  $v\qty(\tilde{\*g}_0,\cdots,\tilde{\*g}_k)=
1/\sqrt{\vmi{k}+\delta}$, where $\vmi{k} = (1-\beta_2) \vmi{k-1} + \beta_2 \tilde{\*g}_k^2$ with $\beta_2\in(0,1)$ and $\vmi{0}=\bm{0}$.   So by choosing different operation $v\qty(\cdot)$, Adam Family contains many prevent optimizers~\cite{guo2021novel}, e.g., Adam,  Aadafactor~\cite{shazeer2018adafactor}, and AdamW~\cite{loshchilov2018decoupled}, etc.
Before showing the main results, we introduce an assumption regarding the operation $v\qty(\cdot)$. It should be noted that this is not strictly an assumption, as the condition can be readily satisfied by appropriately setting a hyper-parameter for $v\qty(\cdot)$.
\begin{assumption}[Pre-conditioner]\label{asm:pre_cond} 
The pre-conditioner is element-wise bounded, i.e.,  $c_l \leq \|v(\cdot)\|_{\infty} \leq c_u$. Additionally, the element-wise difference of successive pre-conditioners is bounded:
\[
\norm{v\qty(\tilde{\*g}_0,\cdots,\tilde{\*g}_k)  - v\qty(\tilde{\*g}_0,\cdots,\tilde{\*g}_{k-1})}_\infty \leq {\beta_1c_u}, 
\]
where $\beta_1$, as defined in Eqn.~\eqref{eq:adam}, represents the momentum for Adam-family methods. 
\end{assumption}
Assumption~\ref{asm:pre_cond} ensures that the pre-conditioner maintains appropriate bounds and does not fluctuate significantly during training. This requirement is commonly satisfied by several Adam-type optimizers. For instance, in Adam and AdamW, setting $v\qty(\tilde{\*g}_0,\cdots,\tilde{\*g}_k)=1/\sqrt{\vmi{k}+\delta}$, with $\vmi{k} = (1-\beta_2) \vmi{k-1} + \beta_2 \tilde{\*g}_k^2$, yields $c_u = 1/\sqrt{\delta}$. By choosing $\beta_2 = \mathcal{O}(\beta_1)$, the difference bound is always satisfied. In practice, it is typical to set $\beta_1=0.1$ and $\beta_2 \in [0.001,0.05]$, thereby complying with Assumption~\ref{asm:pre_cond}. For further details on other Adam-type optimizers and their adherence to Assumption~\ref{asm:pre_cond}, please refer to the discussion in~\cite{guo2021novel}.

\begin{theorem}\label{thm:adam}
	Suppose   Assumptions~\ref{asm:Lsmooth}, \ref{asm:boundVar}, \ref{asm:p-bit}, and~\ref{asm:pre_cond} hold. 
	Let $s_e = \Omega(\epsilon^{-4})$, $\eta =  \order{\epsilon^2}$, and $\beta_1 \!=\! \order{\epsilon^2}$ in \method{}-integrated Adam-type optimizers, then after $T= \Omega\qty(\epsilon^{-4})$ iterations, the following inequality  holds:
 \begin{equation}\label{adasdafsa}
 	\begin{split}
 		\frac{1}{T} \sum_{k=0}^{T-1}  \EE \left[ \left\|   \nabla f(\xmi{k})   \right\|^2+ \frac{  1 }{4}    \left\|    \mmi{k} \right\|^2 	\right] 
 		\leq \epsilon^2. 
 	\end{split}
 \end{equation} 
 That is, the stochastic gradient complexity is  $\order{\epsilon^{-4}}$.  
\end{theorem}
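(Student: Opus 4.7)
The plan is to mimic the Lyapunov-function analysis routinely used for Adam-type optimizers (cf.~\cite{guo2021novel,zhou2024towards}) and graft onto it the compression-error control developed for \method{}-SGD in Theorem~\ref{thm:sgd}. The central observation is the one already made for SGD in Eqn.~\eqref{eq:acm_err}: although the per-step error $\tilde{\*g}_k-\*g_k$ is only $\order{\|\emti{k}{}\|}$, its weighted accumulation stays of order $\order{\eta}$ and \emph{does not grow with $k$}. The formal statement appears in Lemma~\ref{lem:sum_mk}, which the present proof will reuse.

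The concrete steps, in order, are as follows. First, apply $L$-smoothness along the Adam update $\xmi{k+1}=\xmi{k}-\bm{\eta}_k\circ\mmi{k}$ to obtain a one-step descent of the form
\[
f(\xmi{k+1})\le f(\xmi{k})-\langle\nabla f(\xmi{k}),\bm{\eta}_k\circ\mmi{k}\rangle+\tfrac{L}{2}\|\bm{\eta}_k\circ\mmi{k}\|^2.
\]
Second, decompose $\tilde{\*g}_k=\*g_k+(\tilde{\*g}_k-\*g_k)$ in the momentum recursion, split $\mmi{k}$ into a ``clean'' part $\bar{\mmi{k}}=(1-\beta_1)\bar{\mmi{k-1}}+\beta_1\*g_k$ and a ``compression'' part $\mmi{k}-\bar{\mmi{k}}$, and use Lemma~\ref{lem:sum_mk} to bound $\|\mmi{k}-\bar{\mmi{k}}\|$ by the appropriately-weighted accumulated compression error; with $s_e=\Omega(\epsilon^{-4})$ and the periodic reset this stays $\order{\eta}$. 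Third, introduce the standard Adam Lyapunov potential
\[
\Phi_k = f(\xmi{k}) + \tfrac{1}{4\beta_1}\|\mmi{k}\|^2 + c\,\|\mmi{k}-\nabla f(\xmi{k})\|^2,
\]
and track $\Phi_k-\Phi_{k+1}$. The cross-term $\langle\nabla f,\bm{\eta}_k\circ\mmi{k}\rangle$ is rewritten as $\langle\nabla f,\bm{\eta}_k\circ\nabla f\rangle+\langle\nabla f,\bm{\eta}_k\circ(\mmi{k}-\nabla f)\rangle$; the first term gives the desired $\|\nabla f\|^2$ progress thanks to $c_l\le v(\cdot)\le c_u$ in Assumption~\ref{asm:pre_cond}, and the second is absorbed by the potential via Young's inequality.

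Fourth, handle the preconditioner drift. Because $\bm{\eta}_k$ depends on $\tilde{\*g}_k$, it is correlated with the stochastic/compression noise; I would insert and subtract $\bm{\eta}_{k-1}\circ\mmi{k-1}$ and exploit the successive-preconditioner bound $\|v_k-v_{k-1}\|_\infty\le\beta_1 c_u$ from Assumption~\ref{asm:pre_cond}, so that the resulting correction contributes only $\order{\eta\beta_1}$ per step and can be charged against the $\|\mmi{k}\|^2$ term in $\Phi_k$. Finally, telescope from $k=0$ to $T-1$, take expectations using Assumption~\ref{asm:boundVar}, substitute $\eta=\order{\epsilon^2}$, $\beta_1=\order{\epsilon^2}$, $s_e=\Omega(\epsilon^{-4})$, and $T=\Omega(\epsilon^{-4})$; all residual terms (variance, compression error, preconditioner drift) become $\order{\epsilon^2}$, yielding \eqref{adasdafsa}.

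The main obstacle I anticipate is Step four: unlike the SGD case where the stepsize is a constant scalar, $\bm{\eta}_k$ is \emph{coupled} with $\tilde{\*g}_k$, so $\mathbb{E}[\bm{\eta}_k\circ\mmi{k}]\ne\bm{\eta}_k\circ\mathbb{E}[\mmi{k}]$ and we lose unbiasedness even for the clean gradient part. Controlling this coupling forces us to (i) introduce a one-step-lagged preconditioner $\bm{\eta}_{k-1}$ as a ``surrogate,'' (ii) pay a $\beta_1 c_u$ price per step from Assumption~\ref{asm:pre_cond}, and (iii) ensure the compression-error contribution to $v(\cdot)$ is itself $\order{\beta_1}$, which is precisely where the requirement $\beta_1=\order{\epsilon^2}$ (matching $\eta$) becomes essential. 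Once this coupling is tamed, the rest of the argument is a routine telescoping and follows the SGD-analysis template from Theorem~\ref{thm:sgd}.
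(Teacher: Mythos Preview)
Your high-level plan is sound and will close: Lyapunov analysis on an Adam-type descent plus Lemma~\ref{lem:sum_mk} for the accumulated compression error is exactly the right skeleton, and your parameter scalings are correct. The difference from the paper is mainly in how the cross-term and the ``main obstacle'' of Step~4 are handled.

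The paper sidesteps the preconditioner--noise coupling you worry about. Instead of lagging $\bm{\eta}_{k-1}$, it writes the inner product via the polarization identity in the $\sqrt{\bm{\eta}_k}$-weighted norm,
\[
-\langle \nabla f,\bm{\eta}_k\circ\mmi{k}\rangle
=\tfrac12\|\sqrt{\bm{\eta}_k}\circ(\nabla f-\mmi{k})\|^2-\tfrac12\|\sqrt{\bm{\eta}_k}\circ\nabla f\|^2-\tfrac12\|\sqrt{\bm{\eta}_k}\circ\mmi{k}\|^2,
\]
and then simply bounds $\bm{\eta}_k$ elementwise by $\eta c_u$ (resp.\ below by $\eta c_l$). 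The variance is then controlled by Lemma~\ref{lem:moving_mk} applied to the \emph{unweighted} quantity $\|\mmi{k}-\nabla f\|^2$, where unbiasedness of $\*g_k$ is used with nothing depending on $\bm{\eta}_k$; no decoupling trick is needed. The drift bound $\|v_k-v_{k-1}\|_\infty\le\beta_1 c_u$ from Assumption~\ref{asm:pre_cond} is \emph{not} used in the descent step at all---it appears only inside Lemma~\ref{lem:sum_mk} to control the accumulated compression error. In fact your own Step~3 decomposition already achieves the same thing (Young's inequality gives $\frac{\eta c_u^2}{c_l}\|\mmi{k}-\nabla f\|^2$, independent of $\bm{\eta}_k$), so Step~4 as you describe it is unnecessary.

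Two smaller organizational differences: the paper's potential is just $G_k=\EE f(\xmi{k})+\frac{\eta c_u}{\beta_1}\EE\|\nabla f-\mmi{k}\|^2$, without your $\tfrac{1}{4\beta_1}\|\mmi{k}\|^2$ term; the $\|\mmi{k}\|^2$ contribution falls out of the polarization identity as a bonus rather than being tracked in the potential. And rather than splitting $\mmi{k}=\bar\mmi{k}+(\mmi{k}-\bar\mmi{k})$ inside the descent inequality, the paper introduces a full virtual iterate sequence $\xmi{k}=\xmi{0}-\sum_i\bm{\eta}_i\circ\bar\mmi{i}$ and runs the descent lemma on that sequence, paying $\|\nabla f(\tilde{\bm{\theta}}_k)-\nabla f(\xmi{k})\|$ once via Lemma~\ref{lem:sum_mk}. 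Either bookkeeping works; the paper's version makes the compression-error contribution appear as a single additive $\order{\eta^2}$ term.
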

{Theorem~\ref{thm:adam} reveals an important insight: \emph{integrating \method{} into various adaptive optimizers does not compromise their convergence speed}. 
Furthermore, Theorems~\ref{thm:sgd} and~\ref{thm:adam}  collectively show that \method{}-integrated optimizers exhibit the same stochastic gradient complexity as their vanilla counterparts. For instance, when aiming for an $\epsilon$-accurate first-order stationary point ($\epsilon$-FOSP) in a non-convex optimization problem, the complexity for both SGD and Adam-type optimizers, as well as their \method{}-integrated versions, remains $\order{\epsilon^{-4}}$. This illustrates that \method{} preserves the convergence performance while utilizing low-bit gradients, thereby significantly enhancing communication efficiency.}


\subsection{Comparison of Communication-Efficient Methods}
Here we compare \method{} with previous efficient training methods across five aspects: gradient complexity, communication time, memory overhead,  collective communication support, and sharding support. The specific results are summarized in Table~\ref{tab:comparison}. Consistent with the Zero sharding method~\cite{rajbhandari2020zero}, we consider partitioning the optimizer states and gradients, i.e., the scenario of Zero2.
Here, $\Psi$ denotes the number of model parameters, $N_d$ represents the number of nodes in distributed training (for parameter-server architecture, $N_d$ refers to the number of nodes computing local gradients), $B$ is the communication bandwidth in bytes per second, and $r$ is the low-rank parameter specific to PowerSGD~\cite{vogels2019powersgd}.
We also evaluate modified EF and modified EF21, which adapt the original EF~\cite{seide20141} and EF21~\cite{richtarik2021ef21} methods for the popular sharding framework despite being originally designed for parameter-server architecture. 

For memory computation, we consider a mixed-precision training setting where 16-bit parameters and 16-bit gradients are present in memory (each consuming $2\Psi$ bytes). Additionally, for SGD and Adam optimizers, there is a 32-bit parameter copy (consuming $4\Psi$ bytes). Furthermore, Adam optimizers require an additional $8\Psi$ bytes to store first-order and second-order moments. The 1-bit LAMB~\cite{li20221} method requires an additional $4\Psi$ bytes for another second-order moment. EFC-based methods need extra memory to store the error (16-bit, consuming $2\Psi$ bytes, decoupled from optimizer states), whereas \method{} only requires an additional $\Psi$ bytes, as it stores the error in 8-bit format. Modified EF21 additionally needs to store a shared global error variable, which consumes $4\Psi/N_d$ bytes. PowerSGD, in addition to using EFC, requires extra memory to store a 16-bit low-rank matrix, consuming $2r\sqrt{\Psi}$ bytes. For sharding scenarios, we consider splitting gradients and optimizer states but not the 16-bit model parameters to avoid additional communication overhead.

Communication time in FSDP setting usually involves two parts: gradient communication (reduce-scatter) and parameter synchronization (all-gather). For the parameter-server architecture, gradients are sent to the server node, processed, and then parameters are returned to the nodes. The total communication time is $\qty(b_g + b_w) \Psi N_d / 8B$, where $b_g$ and $b_w$ are the bits used for communicating gradients and parameters/weights, respectively. For example, for 1-bit Adam and 1-bit LAMB, $b_g = 1$ and $b_w = 1$, noting that the first 10\% of iteration steps use full-precision communication as a warm-up. For EF~\cite{seide20141} and EF21~\cite{richtarik2021ef21}, $b_g = 4$ and $b_w = 16$.

For methods supporting collective communication, the total communication time is given by $\qty(b_g + b_w) \Psi (N_d - 1) / \qty(8N_dB)$. In this scheme, each node's communication volume per step is $b \Psi / N_d$, requiring $(N_d - 1)$ steps to complete a full gradient or parameter exchange. Here, $b$ represents the bit size used for communication. Specifically, for Adam and SGD, $b_g = 16$ and $b_w = 16$. For Zero++ (combined with Adam or LoCo), $b_g = 4$ and $b_w = 8$. For \method{}, modified EF, and modified EF21, $b_g = 4$ and $b_w = 16$. For PowerSGD, the communication volume is $br\sqrt{\Psi} / N_d$, while other aspects remain similar to SGD.

\begin{table*}[t!]
	\begin{center}
		\caption{Comparison of 16-bit Adam and 4-bit \method{}-integrated Adam. The table presents evaluation results on standard benchmarks for LLM capabilities, including accuracy and success rates across tasks such as commonsense reasoning, world knowledge, mathematics, and coding. Higher values indicate better performance.
        \vspace{-2mm}
  }
		\label{tab:eval}
		\setlength{\tabcolsep}{2.8pt} 
		\renewcommand{\arraystretch}{3.5}
		{ \fontsize{8.5}{3}\selectfont{
				\begin{tabular}{c|c|cccccccccccc|c}
					\toprule
				Model & Optimizer 	& MMLU & HellaS & WinoG & PIQA & Arc-e & Arc-c & NQ & TriQA & HumanE & MBPP & Math & GSM8K &Avg.  \\
         \midrule
         \multirow{2}{*}{\begin{tabular}[c]{@{}c@{}}LLAMA2 \\ (7B)\end{tabular}}	& 	Adam   &  46.0\% & \textbf{74.3}\% & \textbf{62.3}\% & 77.7\% & \textbf{59.2}\% & 43.3\% & \textbf{14.0}\% & \textbf{51.1}\% & 13.4\% & \textbf{15.8}\% & 3.1\% & 16.8\% & \textbf{39.75}\% \\
    &	Adam+\method{}   &  \textbf{46.2}\% & 74.3\% & 62.2\% & \textbf{77.8}\% & 58.6\% & \textbf{44.1}\% & 13.4\% & 50.8\% & \textbf{13.5}\% & 14.8\% & \textbf{3.2}\% & \textbf{17.5}\% & 39.70\%   \\
  \midrule
		\multirow{2}{*}{\begin{tabular}[c]{@{}c@{}}LLAMA2 \\ (13B)\end{tabular}}	&	Adam   &  55.2\% & 77.7\% & 64.0\% & 79.8\% & 74.7\% & \textbf{58.9}\% & \textbf{15.6}\% & 56.9\% &\textbf{20.1}\% & 28.6\% & 5.2\% & 29.1\% & 47.15\% \\
   & 	Adam+\method{}   &  55.2\% & 77.7\% & \textbf{64.3}\% & 79.8\% & \textbf{74.8}\% & 58.6\% & 15.5\% & \textbf{57.2}\% & 19.5\% & 28.6\% & \textbf{5.4}\% & \textbf{30.1}\% & \textbf{47.22}\%  \\
  \midrule
   \multirow{2}{*}{\begin{tabular}[c]{@{}c@{}}Mixtral \\ (8$\times$7B)\end{tabular}}	 & AdamW   &  70.1\% & \textbf{82.0}\% & \textbf{70.1}\% & \textbf{83.3}\% & 94.0\% & 86.1\% & 31.1\% & \textbf{65.5}\% & 34.7\% & \textbf{42.4}\% & 22.9\% & \textbf{70.4}\% & 62.71\% \\
 &	AdamW+\method{}   &  \textbf{70.3}\% & 81.7\% & 69.7\% & 83.1\% & \textbf{94.7}\% & \textbf{86.8}\% & \textbf{31.1}\% & 64.6\% & \textbf{36.6}\% & 41.6\% &\textbf{23.2}\% & 69.5\% & \textbf{62.74}\%\\
    \bottomrule 
\end{tabular}		
		}}
	\end{center}
 \vspace{-6mm}
\end{table*} 

From the results in Table~\ref{tab:comparison}, it is evident that \method{} demonstrates superior properties compared to other methods. \method{} supports both ring-based and sharding strategies, which keeps communication costs stable regardless of the number of GPUs and reduces memory overhead. This is particularly advantageous over methods like EF21 and PowerSGD, which incur higher memory usage due to a lack of sharding support. Therefore, \method{} is more suitable for training large-scale models, offering both efficient communication and small memory costs. 
Although \method{} does not show a clear advantage over Zero++ in terms of communication time and memory overhead, the latter lacks convergence guarantees, potentially impacting training quality, as verified in Fig.~\ref{fig:curve} and Table~\ref{tab:lowbit_compare}. Methods without convergence guarantees tend to underperform.

Despite the ability of original EF and EF21 to adapt to the sharding framework, they consume additional memory compared to \method{}. For instance, modified EF21 requires extra storage for a shared global error variable, increasing the optimizer's 32-bit state size. Moreover, our experiments indicate that merely adapting EFC to the sharding framework can lead to significant performance degradation. Without the improvements proposed in \method{}, such as error averaging and error resetting, training performance can suffer drastically, even failing in pre-training experiments (see Sec.~\ref{sec:sota}) or showing subpar results in fine-tuning (see Table~\ref{tab:abla_study}).

Finally, compared to the original Adam or SGD, \method{} achieves significant communication efficiency improvements with small additional memory overhead while maintaining the same convergence speed. In large-scale experiments, the speedup can reach 1.4x or more, with less than 10\% additional memory overhead. This memory overhead is even negligible in some cases, such as in LLM with large batch sizes and token lengths (see Table~\ref{tab:speedup}~and~\ref{tab:mem}).

\begin{table}[ht!]
	\centering
		\caption{The fine-tuning losses of 4-bit \method{} and the 16-bit communication based optimizers on LLAMA2 and Mixtral.}
		\label{tab:valloss}
		\setlength{\tabcolsep}{7pt} 
		\renewcommand{\arraystretch}{3}
		{ \fontsize{8.5}{3}\selectfont{
				\begin{tabular}{c|c|c|cc}
					\toprule
				Model & Optimizer	& Loss & Baseline   & \method{}  \\
         \midrule
		\multirow{2}{*}{\begin{tabular}[c]{@{}c@{}}LLAMA2 \\ (7B)\end{tabular}} & \multirow{2}{*}{Adam} 	&	Train   &  1.688 & {1.688}  \\
   & & Val.   &  1.503 & {1.503}   \\ 
    \midrule
    \multirow{2}{*}{\begin{tabular}[c]{@{}c@{}}LLAMA2 \\ (13B)\end{tabular}} & \multirow{2}{*}{Adam}	&	Train  &  1.612 & {1.612}  \\
   & & Val.   &  1.415 & {1.415}  \\ 
   \bottomrule
 \multirow{2}{*}{\begin{tabular}[c]{@{}c@{}}Mixtral \\ (8$\times$7B)\end{tabular}} & \multirow{2}{*}{AdamW}	&	Train   &  0.7041 & \textbf{0.7030}  \\
   & & Val.   &  0.7038 & \textbf{0.7030} \\ \midrule
   \multirow{2}{*}{\begin{tabular}[c]{@{}c@{}}Mixtral \\ (8$\times$7B)\end{tabular}}	& \multirow{2}{*}{Adafactor} &  	Train   &  0.6123 & \textbf{0.6117}  \\
   & & Val.   &  \textbf{0.6206} & 0.6207  \\
   \bottomrule 
\end{tabular}	
\vspace{-4mm}
}}
\end{table}

\begin{table*}[t]
	\begin{center}
		\caption{Performance comparison of low-bit communication methods. Metrics (accuracy and success rates) are reported for LLaMA2-7B fine-tuned on the Alpaca-GPT4 dataset across commonsense reasoning, world knowledge, mathematics, and coding, where higher values indicate better performance. \vspace{-2mm}
 }\label{tab:lowbit_compare}
		\setlength{\tabcolsep}{6.0pt} 
		\renewcommand{\arraystretch}{3}
		{ \fontsize{8.5}{3}\selectfont{
				\begin{tabular}{l|cccccccccc|c}
					\toprule
				Method& {HumanE} & MBPP & GSM8K & NQ &{HellaS} & {Arc-e} & {Arc-c} & {PIQA} & WinoG & {MMLU}  &{Avg.}  \\
         \midrule
         Adam (16-bit) & 16.4\% &20.0\% &15.7\% &14.6\% &75.4\% &73.5\% &52.2\% &78.6\% &61.4\% &48.0\% &45.6\% \\
         \midrule
         0/1 Adam (4-bit) & 15.2\% &21.0\% &11.8\% &13.3\% &74.9\% &74.6\% &50.5\% &78.0\% &61.2\% &47.7\% &44.8\%  \\
        4-bit Adam & 15.3\% &19.4\% &13.4\% &14.4\% &74.5\% &74.8\% &47.1\% &78.2\% &61.7\% &47.2\% &44.6\% \\
         4-bit LAMB  & 17.6\% &22.0\% &15.2\% &13.0\% &75.2\% &71.0\% &50.5\% &78.5\% &61.5\% &47.8\% &45.2\% \\
         Zero++ (4-bit) & 15.9\% &17.6\% &13.6\% &14.0\% &75.8\% &72.5\% &46.1\% &78.0\% &61.9\% &47.1\% &44.3\%\\
         Adam+\method{} (4-bit) & 16.5\% &21.8\% &15.6\% &13.9\% &76.1\% &72.9\% &51.4\% &78.3\% &61.8\% &48.0\% &45.6\%\\
    \bottomrule 
\end{tabular}		
		}}
	\end{center}
 \vspace{-6mm}
\end{table*} 

\begin{table}[t!]
	\centering
		\caption{Comparison of pre-training loss for Sky-MoE with different data volumes and model sizes using \method{}-integrated Adam (4-bit) and 16-bit Adam.}
		\label{tab:moeloss}
		\setlength{\tabcolsep}{4pt} 
		\renewcommand{\arraystretch}{4}
		{ \fontsize{9.0}{3}\selectfont{
				\begin{tabular}{c|c|c|cc}
					\toprule
				Models & Token Size& Model Size & Adam   & \method{}  \\
         \midrule
		\multirow{3}{*}{Sky-MoE~\cite{zhao2024longskywork}} & 10B 	&	$8\times0.1$B   &  \textbf{2.635} & {2.636}  \\
  \cline{2-5}
   & 30B 	&	$8\times0.1$B   &  2.477 & \textbf{2.475}  \\
  \cline{2-5}
  & 300B 	&	$8\times0.3$B   &  2.105 & \textbf{2.102}  \\
   \bottomrule 
\end{tabular}		
}}
\vspace{-5mm}
\end{table}

\begin{figure*}[ht]
    \centering
    \begin{minipage}{\textwidth}
        \subfigure[]{
            \includegraphics[width=0.32\linewidth]{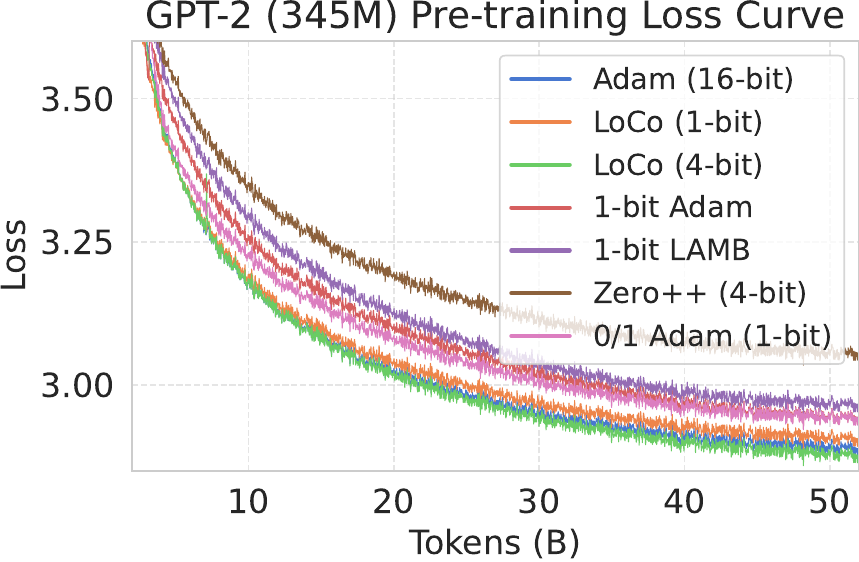}
        }
        \subfigure[]{
            \includegraphics[width=0.32\linewidth]{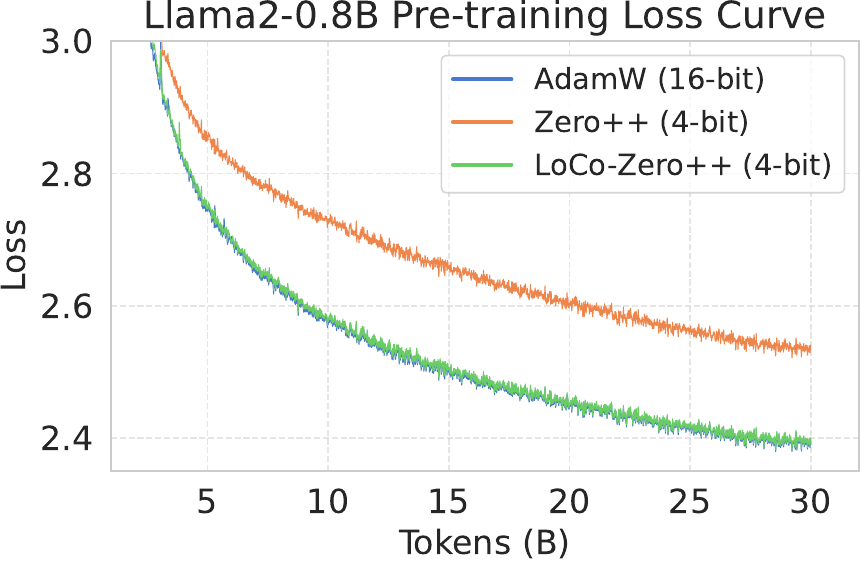}
        }
        \subfigure[]{
            \includegraphics[width=0.32\linewidth]{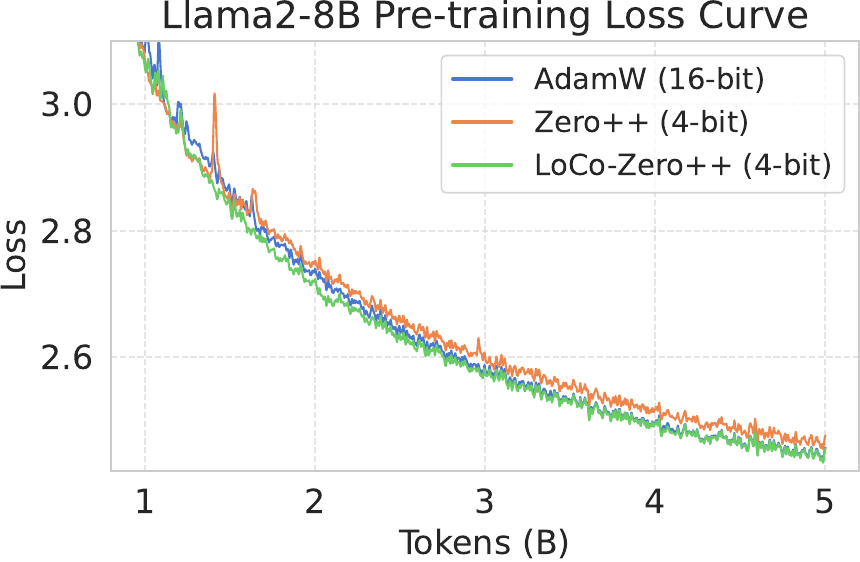}
        }
    \end{minipage}
    \vspace{-3mm} 
    \caption{ Loss curves of various low-bit optimization methods. (a) Training results on GPT2-345M with 52B tokens from the OpenWebtext dataset, showing that 4-bit \method{} achieves performance comparable to 16-bit Adam. (b) Training results on LLaMA2-0.8B with 30B tokens from RedPajama-v2, where \method{}-Zero++ achieves training quality on par with 16-bit AdamW and outperforms Zero++. (c) Training results on LLaMA2-8B with 5B tokens, illustrating the effectiveness of \method{}-Zero++ in maintaining high training quality even for larger model sizes.}\label{fig:curve} 
    \vspace{-5mm}
\end{figure*}

\vspace{-2mm}
\section{Experiments}
To test  \method{}, we first compare it with several representative baselines, including low-bit optimizers with error-feedback like 1-bit Adam~\cite{tang20211}, and quantization method like Zero++ \cite{wang2023zero++}. Moreover, we further compare \method{} with the widely used  16-bit optimizers, e.g., Adam\cite{kingma2014adam}, AdamW~\cite{loshchilov2018decoupled}, and Adafactor~\cite{shazeer2018adafactor}. Subsequently, we examine the training speed of \method{} across diverse model architectures, model sizes, GPU server configurations, and large-model training frameworks. Finally, we investigate the effect of each key component in \method{}.  See detailed experimental settings in Appendix~\ref{eval_details}.

\vspace{-3mm}
\subsection{Results on  \method{}-Integrated Optimizers }  
We integrate \method{} (4-bit) into various optimizers, including Adam, AdamW \cite{loshchilov2018decoupled}, and Adafactor, and compare with the corresponding 16-bit counterparts. For evaluation, we use these optimizers to train two advanced LLMs, LLAMA2 and Mixtral,  under collective communication. 
For LLAMA2, we follow \cite{zhang2024tinyllama} to use the RedPajama subset for fine-tuning on 8B tokens. For  Mixtral, following~\cite{zhang2023llamaadapter}, we adopt ultra-chat-200k dataset~\cite{ding2023enhancing} to fine-tune for one epoch.  These fine-tuning protocols are to align with the established benchmarks, and provide convincing assessment to  \method{}.

\textbf{Fine-tuning results on LLAMA2 and Mixtral (8x7B).} 
From Table~\ref{tab:eval}, one can see that when evaluated on downstream tasks, 4-bit \method{}-integrated optimizers share comparable performance as their corresponding 16-bit ones, and even occasionally exceed them.
Table~\ref{tab:valloss}  shows that 4-bit \method{}-integrated optimizers enjoy similar training and validation losses with their corresponding 16-bit counterparts on LLAMA2 and Mixtral, indicating their similar convergence speed and training quality. 
All these results affirm the effectiveness of  \method{} on the large-scale model training.

\vspace{-3mm}
\subsection{SoTA Comparison Under Low-bit Communication}\label{sec:sota}
Here we compare \method{} with  communication-efficient methods including 1-bit Adam, 1-bit LAMB~\cite{li20221}, 0/1 Adam~\cite{lu2022maximizing} and Zero++. Among them, 1-bit Adam, 1-bit LAMB, and 0/1 Adam cannot be directly applied to FSDP settings.  Zero++ compresses its gradient into low-bit ones without any error-feedback strategy and could suffer from information loss. 

\begin{table*}[h]
	\begin{center}
 \caption{Comparison of fine-tuning results using LoRA on LLaMA2-7B. Metrics (accuracy and success rates) are reported on the Alpaca-GPT4 dataset, with higher values indicating better performance.\vspace{-2mm}}\label{tab:powersgd}
  \label{tab:lora}
		\setlength{\tabcolsep}{7.0pt} 
		\renewcommand{\arraystretch}{3}
		{ \fontsize{8.5}{3}\selectfont{
				\begin{tabular}{l|ccccccccc|c}
					\toprule
				Method& {HumanE} & MBPP & GSM8K &{HellaS} & {Arc-e} & {Arc-c} & {PIQA} & WinoG & {MMLU}  &{Avg.}  \\
         \midrule
         AdamW (16-bit) & 13.4\% &17.4\% &16.2\% &73.9\% &59.8\% &44.4\% &77.9\% &62.6\% &45.9\% &45.7\% \\
         \midrule
         PowerSGD & 13.2\% &19.0\% &16.9\% &73.4\% &58.1\% &43.1\% &77.2\% &62.5\% &45.7\% &45.4\%  \\
        AdamW+\method{}   & 13.0\% &17.2\% &16.7\% &73.8\% &59.9\% &44.4\% &77.6\% &62.4\% &45.9\% &45.7\% \\
    \bottomrule 
\end{tabular}		
		}}
	\end{center}
 \vspace{-2mm}
\end{table*}

\begin{table*}[t!]
	\begin{center}
		\caption{Training speed (throughput, tokens/s) investigation of \method{} (4-bit) and Adam (16-bit) in prevalent Megatron-LM framework for efficient multi-node LLMs training with different model sizes, GPU number, and the node connection.}
		\label{tab:speedup}
		\setlength{\tabcolsep}{6pt} 
		\renewcommand{\arraystretch}{3.3}
		{ \fontsize{8.5}{3.2}\selectfont{
				\begin{tabular}{l|ccc|ccc|ccc}
					\toprule
				& \multicolumn{3}{c|}{32$\times$ NVIDIA A100 (RoCE v2)} & \multicolumn{3}{c|}{64$\times$ NVIDIA A100 (RoCE v2)}  & \multicolumn{3}{c}{128$\times$ NVIDIA A100 (RoCE v2)}     \\ \cline{2-10} 
					Model  & \begin{tabular}[c]{@{}c@{}}Adam\\ (tokens/s)\end{tabular}  & \begin{tabular}[c]{@{}c@{}}LoCo \\ (tokens/s)\end{tabular}  & \begin{tabular}[c]{@{}c@{}}Speedup \\ (\%)\end{tabular}    & \begin{tabular}[c]{@{}c@{}}Adam  \\ (tokens/s)\end{tabular}  & \begin{tabular}[c]{@{}c@{}}  LoCo \\ (tokens/s)\end{tabular}  & \begin{tabular}[c]{@{}c@{}}Speedup \\ (\%)\end{tabular}   & \begin{tabular}[c]{@{}c@{}}Adam \\ (tokens/s)\end{tabular}  & \begin{tabular}[c]{@{}c@{}}LoCo \\ (tokens/s)\end{tabular}  & \begin{tabular}[c]{@{}c@{}}Speedup \\ (\%)\end{tabular}   \\ \midrule
				LLAMA2 (7B)  & 57230.2 & 65376.3&  \textbf{14.23} &  108680.5 & 127263.1 & \textbf{17.10} &  212373.9 & 251701.9 & \textbf{18.50}   \\
    
Mistral (7B) &   55947.3 & 64123.7 &\textbf{14.61} &  105198.2 & 125422.7 & \textbf{19.22} &   206053.7 & 247468.3 & \textbf{20.10}   \\ 
LLAMA2 (13B) &  30555.9 & 35683.2 &\textbf{16.78} &   43941.6 & 55322.9 & \textbf{25.90} &  83160.2& 108577.2  & \textbf{30.56}   \\  
                \midrule
\textbf{LLAMA2 (70B)} & \multicolumn{3}{c|}{\multirow{1}{*}{N/A, since the Data Parallel is $\bm{1}$}} &  2869.2 & 3803.2 & \textbf{32.55} &  5263.6& 7107.6  & \textbf{35.03}   \\  
                \bottomrule
				& \multicolumn{3}{c|}{32$\times$ NVIDIA A800 (Infiniband)} & \multicolumn{3}{c|}{64$\times$ NVIDIA A800 (Infiniband)}  & \multicolumn{3}{c}{128$\times$ NVIDIA A800 (Infiniband)}     \\ 
			\midrule
		LLAMA2 (7B) & 54186.8 & 65862.1&  \textbf{21.55} &  89555.4& 120625.6 & \textbf{34.69} &  161447.6 & 224887.7 & \textbf{39.29}   \\
    
  Mistral (7B) & 51896.8 & 63568.5&  \textbf{22.49} &  85334.5& 115355.6 & \textbf{35.18} &  155308.7 & 217494.4 & \textbf{40.04}   \\
    
LLAMA2 (13B)& 30682.9 & 38226.1 &\textbf{24.58} &  49907.4 & 69409.0 & \textbf{39.08} &  90446.3 & 128649.6  & \textbf{42.24}   \\
                \bottomrule 
\end{tabular}		
		}}
	\end{center}
 \vspace{-4mm}
\end{table*}

\begin{table}[t!]
\centering
\caption{Comparison of peak memory of \method{} (4-bit) and Adam (16-bit) on various settings with 32 GPUs.}\label{tab:mem}
\setlength{\tabcolsep}{5.0pt} 
\renewcommand{\arraystretch}{3.2}
{ \fontsize{9.0}{3}\selectfont{
\begin{tabular}{l|c|cc}
\toprule
\multirow{2}{*}{Model} & \multirow{2}{*}{Framework} & \multicolumn{2}{c}{Peak Memory (GB)} \\ \cline{3-4}
&                                                                            & Adam              & Adam+\method{}             \\ \midrule  
Mixtral ($8\times 7$B)    & FSDP                                                                                           & 58.8              & 64.3 \\ \midrule   
LLAMA2-7B                & FSDP                                                                                           & 20.5             & 22.7 \\ \midrule 
Sky-MoE ($8\times 0.1$B)   & Megatron-LM                                                                                           & 72.3              & 72.7 \\ \midrule 
Sky-MoE ($8\times 0.3$B)   & Megatron-LM                                                                                           & 56.3              & 57.0 \\ \midrule 
LLAMA2-7B                & Megatron-LM                                                                                           & 44.0             & 48.1 \\ \midrule
LLAMA2-13B                & Megatron-LM                                                                                          & 68.3            & 74.5 \\ \bottomrule
\end{tabular}
}}
\vspace{-5mm}
\end{table}

\begin{table*}[t!]
	\begin{center}
\caption{Ablation study on \method{} components for LLaMA2-7B fine-tuned with Alpaca-GPT4. The table evaluates the impact of components including error-feedback, error compression, moving average on errors, and periodic error resetting. Metrics (accuracy and success rates) on downstream tasks are reported, with higher values indicating better performance.\vspace{-0.8em}}
		\label{tab:abla_study}
		\setlength{\tabcolsep}{2.8pt} 
		\renewcommand{\arraystretch}{3.5}
		{ \fontsize{8.5}{3}\selectfont{
				\begin{tabular}{l|cccc|cccccccccc|c}
					\toprule
				Method& {\begin{tabular}[c]{@{}c@{}}Error  \\ Feedback \end{tabular}} & {\begin{tabular}[c]{@{}c@{}}Error  \\ Cmpr. \end{tabular}} & {\begin{tabular}[c]{@{}c@{}}Err. Reset \\ Freq.\end{tabular}} & {\begin{tabular}[c]{@{}c@{}}Err. \\ Avg.\end{tabular}} & {HumanE} & MBPP & GSM8K & Math &{HellaS} & {Arc-e} & {Arc-c} & {PIQA} & WinoG & {MMLU}  &{Avg.}  \\
         \midrule
         \method{}1 & \ding{55} & N/A & N/A & \ding{55}& 49.0\% &50.0\% &67.2\% &23.0\% &81.7\% &94.0\% &87.4\% &82.6\% &68.7\% &70.8\% &67.4\%\\
         \method{}2 & \ding{51}   & \ding{51}  & N/A  & \ding{55}& 48.8\% &49.6\% &67.6\% &\textbf{23.7}\% &81.8\% &93.9\% &87.5\% &82.6\% &68.3\% &70.9\% &67.5\% \\
         \method{}3 & \ding{51} & \ding{51}  & N/A  & \ding{51} & 49.4\% &\textbf{50.4}\% &67.3\% &23.5\% &81.8\% &94.0\% &87.8\% &83.1\% &69.5\% &70.9\% &67.7\% \\
         \method{}4 & \ding{51}  & \ding{55} & 512 & \ding{51} & 51.8\% &50.0\% &66.5\% &22.9\% &\textbf{82.2}\% &93.8\% &87.8\% &\textbf{83.3}\% &\textbf{69.5}\% &70.4\% &67.8\% \\
         \method{}5 & \ding{51}  & \ding{51}  & 512 & \ding{51} & 51.8\% &50.0\% &\textbf{67.9}\% &23.2\% &81.8\% &94.0\% &\textbf{87.8}\% &83.1\% &68.8\% &71.0\% &67.9\% \\
         \method{}6& \ding{51}  & \ding{51}  & 128 & \ding{51} & \textbf{53.1}\% &50.2\% &67.3\% &23.5\% &81.8\% &\textbf{94.0}\% &87.4\% &83.2\% &69.1\% &\textbf{71.0}\% &\textbf{68.1}\%\\
    \bottomrule 
\end{tabular}		
		}}
	\end{center}
 \vspace{-1.5em}
\end{table*} 

\begin{table}[t!]
	\begin{center}
		\caption{Speedup achieved by 4-bit \method{} against 16-bit communication Adam within the PyTorch FSDP framework on an NVIDIA A800 cluster with Infiniband connectivity.\vspace{-2mm}}
		\label{tab:fsdp}
		\setlength{\tabcolsep}{8pt} 
		\renewcommand{\arraystretch}{3.5}
		{ \fontsize{8.5}{3.5}\selectfont{
				\begin{tabular}{c|c|ccc}
					\toprule
					Model& \begin{tabular}[c]{@{}c@{}}GPU \\ Number \end{tabular} & \begin{tabular}[c]{@{}c@{}}Adam \\ (tokens/s)\end{tabular}  & \begin{tabular}[c]{@{}c@{}}\method{} \\ (tokens/s)\end{tabular}  & \begin{tabular}[c]{@{}c@{}}Speedup \\ (\%)\end{tabular}  \\ \midrule
    \multirow{2}{*}{\begin{tabular}[c]{@{}c@{}}Mixtral \\ (8$\times$7B)\end{tabular}} & 32  &    14356.1 & 18357.4 &  \textbf{27.87}   \\
                \cline{2-5}  
                &   64& 25450.9 & 34044.7 & \textbf{33.77}  \\
             \bottomrule   
\end{tabular}		
		}}
	\end{center}
 \vspace{-6mm}
\end{table}

\textbf{Results on MoE trained from scratch}.
To validate the effectiveness of \method{} on large-scale datasets, we conducted training-from-scratch experiments on the popular MoE model~\cite{zhao2024longskywork,jiang2024mixtral}. These experiments spanned various data volumes and model sizes. Specifically, we trained two configurations of the Sky-MoE~\cite{zhao2024longskywork} with 8 experts: $8\times0.1$B (total parameter count of $0.5$B) and $8\times0.3$ (total parameter count of $2$B), using tokens from the RedPajama-v2 dataset~\cite{together2023redpajama} in sizes of 10B, 30B, and 300B. We report the training loss, which is equivalent to the validation loss in this context, as the model encounters each data point only once during training. In this experiment, we applied element-wise clipping to the estimated local gradient $\*{g}^n_k$ to reduce sensitivity to the compression hyperparameter $s$ in \method{}.

As shown in Table~\ref{tab:moeloss}, despite utilizing 4-bit gradient communication, \method{} achieved results consistent with full-precision Adam across different data volumes and model sizes. Unlike fine-tuning, training from scratch on large datasets better demonstrates the practical utility and communication efficiency of \method{}.

\textbf{Results on GPT2 trained from scratch}.  We train  GPT2-345M  on the OpenWebtext dataset~\cite{gokaslan2019openwebtext} of 52B tokens from scratch.  Fig.~\ref{fig:curve} (a) shows that 1) our 1-bit \method{}  has faster convergence speed than other 1-bit optimizers, and 2) our 4-bit   \method{}  even share similar behaviors as 16-bit Adam and is better than our 1-bit version.   So, without sacrificing performance, \method{} can improve communication efficiency, showing the superiority of our error-feedback strategy in maintaining training quality. 

\textbf{Results on LLAMA2 trained from scratch}. { To demonstrate the generality and versatility of \method{}, we integrate  \method{} with the SoTA Zero++, named \method{}-Zero++. This integration retains the communication efficiency of Zero++ while mitigating the information loss caused by 4-bit gradient quantization. Specifically, we train LLaMA2 models with 0.8B and 8B parameters from scratch using 30B and 5B tokens, respectively, sampled from the RedPajama-v2. Results  in Fig.~\ref{fig:curve} (b) and (c) show that in both cases, \method{}-Zero++ achieves training quality comparable to 16-bit AdamW, outperforming standalone Zero++. Notably, for the small-size 0.8B LLAMA2, \method{}-Zero++ demonstrates a significant improvement, highlighting its ability to effectively address the challenges of 4-bit gradient quantization. For the larger 8B model, where the model's stronger fitting capacity and the relatively small-size dataset reduce the loss gap between 4-bit Zero++ and 16-bit AdamW, \method{}-Zero++ still provides a measurable positive gain. It is worth noting that achieving lower losses for large-scale models under the same computational budget is inherently challenging~\cite{xie2024optimization}.
	
Additionally, \method{} introduces no extra computational overhead. For example, in the LLaMA2-8B experiments, AdamW required 26 hours of training on 32 NVIDIA A800 GPUs, whereas both Zero++ and \method{}-Zero++ completed training in just 21 hours, achieving a \emph{20\% speedup}. These findings further highlight the efficiency and practicality of \method{} for large-scale model training.}

\textbf{Results on LLAMA2-7B fine-tuned on downstream tasks}. 
We follow \cite{peng2023instruction} and fine-tune  LLAMA2-7B for three epochs on the alpaca-gpt4 dataset to evaluate commonsense reasoning ability. Here, we use the 4-bit gradient in 1-bit Adam, 0/1 Adam, since we find  that they are very unstable during training billion-scale models.  

Table~\ref{tab:lowbit_compare} reveals that \method{} outperforms all 4-bit optimizers, e.g., 0/1 Adam and Zero++, and even achieves comparable performance as 16-bit Adam, the official optimizer, on all commonsense reasoning tasks.  This well demonstrates the training quality of \method{}, which only uses a 4-bit gradient. Moreover,  other 4-bit optimizers often have much worse performance than  16-bit Adam. This is because their error-feedback indeed cannot well address the accumulated gradient quantization error over the iterations, while  \method{} introduces the moving average to stabilize the fluctuating quantization error and also restart the error to remove the impact from the out-of-date historical error. 

\textbf{Results on LLAMA2-7B trained with LoRA~\cite{hu2021lora}}.
To compare LoCO with other efficient communication methods, such as PowerSGD, without model sharding, we utilized the LoRA~\cite{hu2021lora} strategy for fine-tuning (as full-parameter fine-tuning under DDP mode would result in out-of-memory). We fine-tuned the LLaMA2-7B model on the Alpaca-GPT4 dataset. The results, presented in Table~\ref{tab:powersgd}, indicate that PowerSGD underperforms compared to LoCO and exhibits a  gap from the baseline. In contrast, LoCO achieves results comparable to 16-bit full-precision AdamW. Although PowerSGD can reduce communication overhead by adjusting the low-rank parameter $r$, its convergence is challenging to ensure. Additionally, PowerSGD lacks support for FSDP, leading to substantial memory overhead, making full-parameter fine-tuning impractical.

\vspace{-0.5em}
\subsection{Results on Training Speed}
\vspace{-0.3em}
Here, we investigate the training speed of \method{} by reporting its throughput (i.e., the number of consumed tokens per second) under different settings. For comprehensive investigation,  we test \method{} by using different model architectures,  node connections, and large-model training frameworks. We report the throughput of the popular  LLAMA2, Mistral, and Mixtral~(i.e., MoE-Mistral) on both the A100 cluster inter-connected with RoCE network and the A800 cluster inter-connected with Infiniband.  Due to limited space, we defer more training speed results in  Appendix~\ref{sec:moreresults}. \textit{For \method{}, we combine it with Adam for fairness.}

\textbf{Model architectures.} Table~\ref{tab:speedup} reveals that on all LLMs whose size varies from 7B to 70B,  \method{} makes a significant speedup on the official 16-bit Adam in terms of the throughput. Moreover, the larger the model, the greater \method{}  speeds up. For example,   \method{} achieves a speedup of 35.03\% for 70B LLAMA2   on  128 A100 GPUs and 42.24\% for 13B LLAMA2  on 128 A800 GPUs. This shows the good scalability of \method{}.

\textbf{GPU types.}   Table~\ref{tab:speedup} shows that the lower the bandwidth of a cluster, the more significant improvement \method{} can achieve. The A800 cluster has a lower bandwidth than the A100 cluster and shows a greater speedup.  For instance, on 7B Mistral, \method{} has a 22.49\% improvement on the A800 server but has a 14.61\% improvement on the A100 cluster. 

Moreover, the more GPUs in a cluster,  the more speedup   \method{} makes. For example, as shown in Table~\ref{tab:speedup},  on 13B LLAMA2,  the speedup of  \method{}  is improved from 24.58\% to 42.24\%  when the GPU number increases from 32 to 128 on the A800 server.  This is because if a server's bandwidth becomes smaller or its GPU number increases, its communication cost (including volume and round) will increase and become the training bottleneck, leading to slow training speed. For these cases, \method{} can greatly reduce communication costs by using the low-bit gradient, thus significantly improving the training speed.

\textbf{Large-scale training frameworks.}  Table~\ref{tab:speedup} evaluates \method{} via  Megatron-LM training framework~\cite{shoeybi2019megatron}, while Table~\ref{tab:fsdp} focuses the speed on PyTorch FSDP  framework~\cite{zhao2023pytorch}. Megatron-LM uses comprehensive parallelizations to improve training efficiency, e.g., data, pipeline, and tensor parallelism, and is widely used in LLMs training, while   FSDP enhances efficiency by partitioning model, gradients, and optimizer states,  and improves the communication efficiency during back-propagation.  
Table~\ref{tab:speedup} and  Table~\ref{tab:fsdp} show that on both frameworks,  \method{} makes notable speed up and shows its high compatibility.

\textbf{Peak memory comparison.} 
Table~\ref{tab:mem} shows that   \method{} often requires only an additional 9-10\%  memory overhead compared with the official 16-bit Adam for both LLAMA2 and Mixtral. For practical LLM training, GPU memory is often not fully used since selecting a widely used and proper maximum token length to use GPU memory fully is hard, making some extra GPU memory available. In this way,  the remaining available GPU memory can be used for the extra 10\% memory cost in \method{}. Moreover, as shown in Table~\ref{tab:speedup}, \method{} often brings 15\% to 40\% overall training speedup.

\subsection{Ablation Experiments}
We delve into the effects of various components of \method{}, including 1) error-feedback,  2) moving averaging on error, 3) error compression, and 4) error reset.   We follow Sec.~\ref{model_cfg} to fine-tune  Mixtral.  Table~\ref{tab:abla_study} reports the results of \method{} with Adam as its optimizer.


\textbf{Error-feedback.}  By comparing \method{}1 and \method{}2 in Table~\ref{tab:abla_study}, one can observe that incorporating error-feedback directly almost does not bring improvement.  
Specifically, integrating error-feedback slightly impairs the performance on the coding-related benchmarks, e.g., HumanE and MBPP. This may be attributed to the discontinuity in compression, which results in significant variance for compression error. Hence, we still need other components of \method{} to boost the performance jointly. 

\textbf{Moving average on error.} To mitigate instability in vanilla error-feedback which solely uses compression error from one previous iteration, we design a moving average on all historical compression errors to estimate more stable and accurate compression error.  
  \method{}2 and \method{}3 in Table~\ref{tab:abla_study} shows that moving average on error improves a lot on downstream tasks, particularly in coding-related benchmarks. 

\textbf{Error compression.} To save  GPU memory, we compress high-precision compensation errors into 8-bit ones. By comparing \method{}4 and \method{}5 in Table~\ref{tab:abla_study}, this compression only brings negligible performance degradation while further reducing the memory footprint of \method{}. This enhances the applicability of \method{} for large-model training.

\textbf{Error reset.} Along with the training, the very early compensation errors  become outdated and is not suitable for current estimation. So we design an error reset mechanism, a critical element in both theoretical and practical realms. Our theoretical analysis, particularly in the proofs of Theorems~\ref{thm:sgd}, highlights the significance of periodic error resets in controlling the error scale and ensuring algorithmic convergence. In practice,   By comparing \method{}5 and \method{}6 in Table~\ref{tab:abla_study}, setting the error reset frequency ($T_c$) to either 128 or 512 has shown notable performance boosts. 
This error resetting strategy ensures an accurate estimation, and aligns with our theoretical findings, thereby enhancing the effectiveness of \method{}. 
Notably, for simplicity, we always set the reset frequency as 512 in all other experiments.

\section{Conclusion}

\method{} addresses the challenges of efficient large-model training with low-precision gradient communication. It successfully compensates gradients before compression, ensuring effective communication without sacrificing training quality. Distinguished by its low computational and memory requirements, \method{} advances beyond traditional compression methods by preventing error accumulation during the optimization process. Its compatibility with various optimizers and gradient partitioning techniques in advanced training frameworks demonstrates its versatility and practical utility.

\ifCLASSOPTIONcaptionsoff
  \newpage
\fi
\bibliographystyle{IEEEtran}
\bibliography{loco}

\begin{IEEEbiography}[{\includegraphics[width=1in,height=1.25in,clip,keepaspectratio]{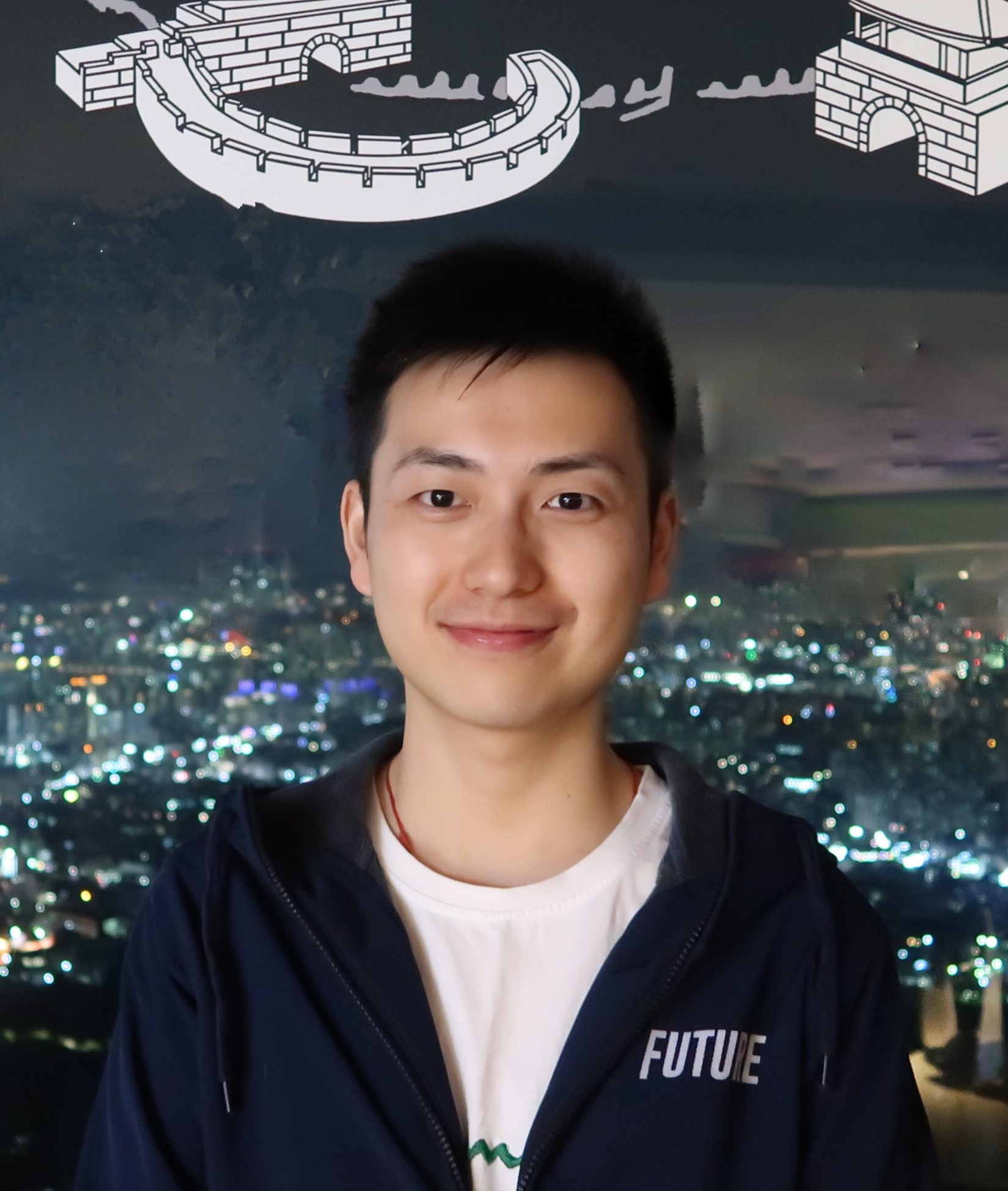}}]{Xingyu Xie}
received his Ph.D. degree from Peking University in 2023. He is currently a Research Fellow at the Department of Mathematics, National University of Singapore. His current research interests include large-scale optimization and deep learning.
\end{IEEEbiography}

\begin{IEEEbiography}[{\includegraphics[width=1in,height=1.25in,clip,keepaspectratio]{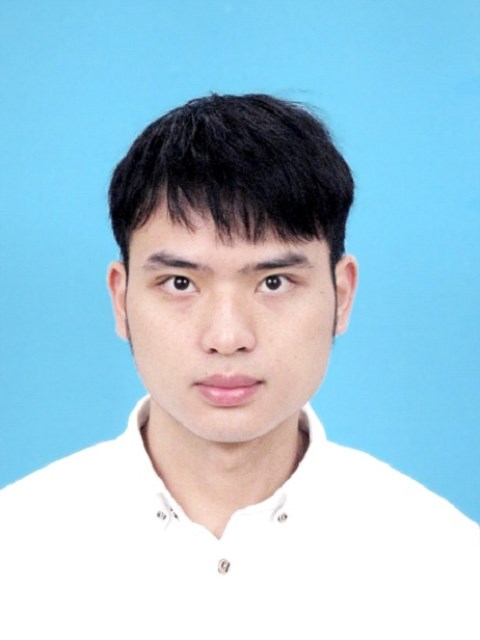}}]{Zhijie Lin} received the B.E. degree, in 2019 and obtained master’s degree, in 2022, in computer science and technology from Zhejiang University, China. Now he is a research engineer at TikTok Research, Singapore. Before he also worked as a research engineer at Sea AI Lab, Singapore. His research interests include computer vision and generation models.
\end{IEEEbiography}

\begin{IEEEbiography}[{\includegraphics[width=1.05in,height=1.3in,clip]{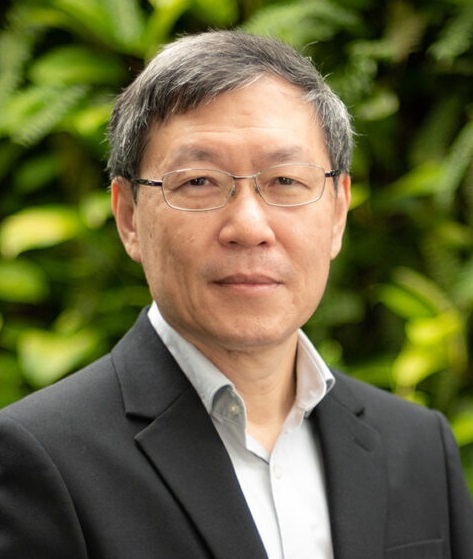}}]{Kim-Chuan Toh}
obtained his PhD from Cornell University in 1996.
Currently, he is a Professor in the Department of Mathematics at 
National University of Singapore. His research interests are on algorithms for convex conic optimization, and
optimization problems arising from machine learning and statistics.
He received the INFORMS Optimization Society Farkas Prize in 2017 and the Mathematical Optimization Society Beale-Orchard Hays Prize in 2018.
He is a Fellow of the Society for Industrial and Applied Mathematics.
\end{IEEEbiography}

\begin{IEEEbiography}[{\includegraphics[width=1in,height=1.25in,clip,keepaspectratio]{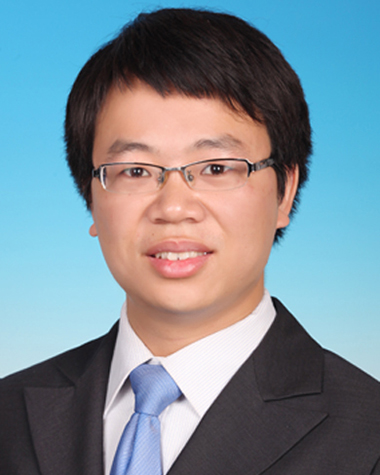}}]{Pan Zhou} received Master Degree at  Peking University in 2016 and obtained Ph.D. Degree at National University of Singapore in 2019. Now
he is an assistant professor at Singapore Management University, Singapore. Before he also worked as a research scientist at Salesforce and Sea AI Lab, Singapore. His research interests include computer vision, machine learning, and optimization. He was the winner of the Microsoft Research Asia Fellowship 2018.
\end{IEEEbiography}

\clearpage
\onecolumn
\appendices
\begin{center}
\text{\huge{\textbf{\method{}: Low-Bit Communication Adaptor for}}
}
\\
\vspace{1mm}
\text{\huge{\textbf{Large-scale Model Training}}
}
\\
\vspace{1mm}
\text{\huge{(Supplementary Material)}
}
\end{center}
The appendix supplements the paper titled ``Low-Bit Communication Adaptor for Large-scale Model Training" with additional experimental results and technical proofs of convergence. It is structured as follows for ease of navigation and comprehension:

Appendix~\ref{sec:background} provides an introduction to concepts and background knowledge related to distributed training of large-scale models. This section covers sharding strategies for large models and explains operations such as reduce-scatter and all-to-all. These concepts are foundational to understanding the distributed training environment in which \method{} operates.

Appendix~\ref{eval_details} details the specific model configurations and training parameters used in the LLM experiments presented in the main text. Additionally, it includes additional experimental results, offering a more comprehensive view of \method{}'s performance. This section features detailed comparisons of speedup ratios under various settings, models, and training frameworks, providing a deeper insight into the adaptability and efficiency of \method{}.

Appendix~\ref{sec:mainproof} begins by establishing several properties of \method{}, followed by proofs demonstrating the convergence rates of SGD and Adam-type optimizers when combined with the \method{} strategy. These proofs are vital for validating the theoretical underpinnings of \method{} and its effectiveness in optimizing large-scale model training.

Appendix~\ref{proofofAuxiliary} contains detailed proofs of auxiliary lemmas and properties that support the main arguments and findings in Appendix \ref{sec:mainproof}. 

\section{Preliminary of Distributed Communication}\label{sec:background}
We introduce several concepts relative to the modern  distributed communication system in this section

\subsection{Scattering and Gathering}\label{sec:all2all}
All-reduce, reduce-scatter, and all-gather, shown in Fig.~\ref{fig:reduce}, are key operations in distributed computing, particularly in the context of LLM training, where they are used to aggregate data like gradients across multiple processors or GPUs.

 \subsubsection{All-reduce} All-reduce is a collective operation where data from all processors (like gradients from different GPUs) is combined and then redistributed to each processor. This means every processor ends up with the same, fully aggregated result.

\subsubsection{Reduce-scatter} Reduce-scatter is the first phase of the all-reduce operation. In this step, each processor contributes its data, which is then partially combined and scattered back to the processors. Thus, each processor ends up with a fragment of the total aggregated data. This process involves sequential data sending, receiving, and reducing operations in a ring-like fashion, ensuring each processor receives a portion of the final aggregated result.

\subsubsection{All-gather} All-gather follows reduce-scatter in the all-reduce process. During this phase, each processor shares its fragment of aggregated data with every other processor. By the end of all-gather, all processors have the complete set of aggregated data, vital for further computations.
The training process for LLMs utilizing these operations typically unfolds as follows: after the back-propagation phase, a reduce-scatter is employed to distribute and partially aggregate the gradients across the different nodes. Each node then uses these reduced gradients to update its own model weight partition and optimizer's state. 
Before the next forward propagation begins, the all-gather operation is utilized to synchronize the weights of the model across all nodes. This ensures that each GPU starts the next iteration of training with the same, updated model parameters. This sequence of operations not only enhances the training efficiency but also ensures consistency and scalability in the distributed training of large-scale models.

\subsubsection{Ring-based Reduce-scatter vs. All-to-all}
As shown in Fig.~\ref{fig:shard}. The ring-based reduce-scatter operation efficiently divides and distributes data across multiple devices in a ring-like configuration. Each device receives a data chunk, performs a reduction operation (like summing), and passes the reduced data to the next device. This cycle continues until every device has a portion of the aggregated data. The key advantage of this approach is its efficient use of cluster bandwidth and balanced workload distribution among all devices in the ring.

On the other hand, the All-to-all operation, often chosen for its specific benefits like avoiding overflow in certain contexts, involves each process in the cluster sending and receiving unique data segments to and from every other process. It is particularly effective in scenarios where each node needs to have a complete picture of the data distributed across the cluster network. The total communication volume in an alltoall operation can be similar to that of a ring-based reduce-scatter when appropriately implemented. This makes alltoall a viable and sometimes preferred choice in certain distributed computing tasks, such as scenarios where overflow avoidance is critical.

\begin{figure}[t]
\centering
\includegraphics[width=\linewidth]{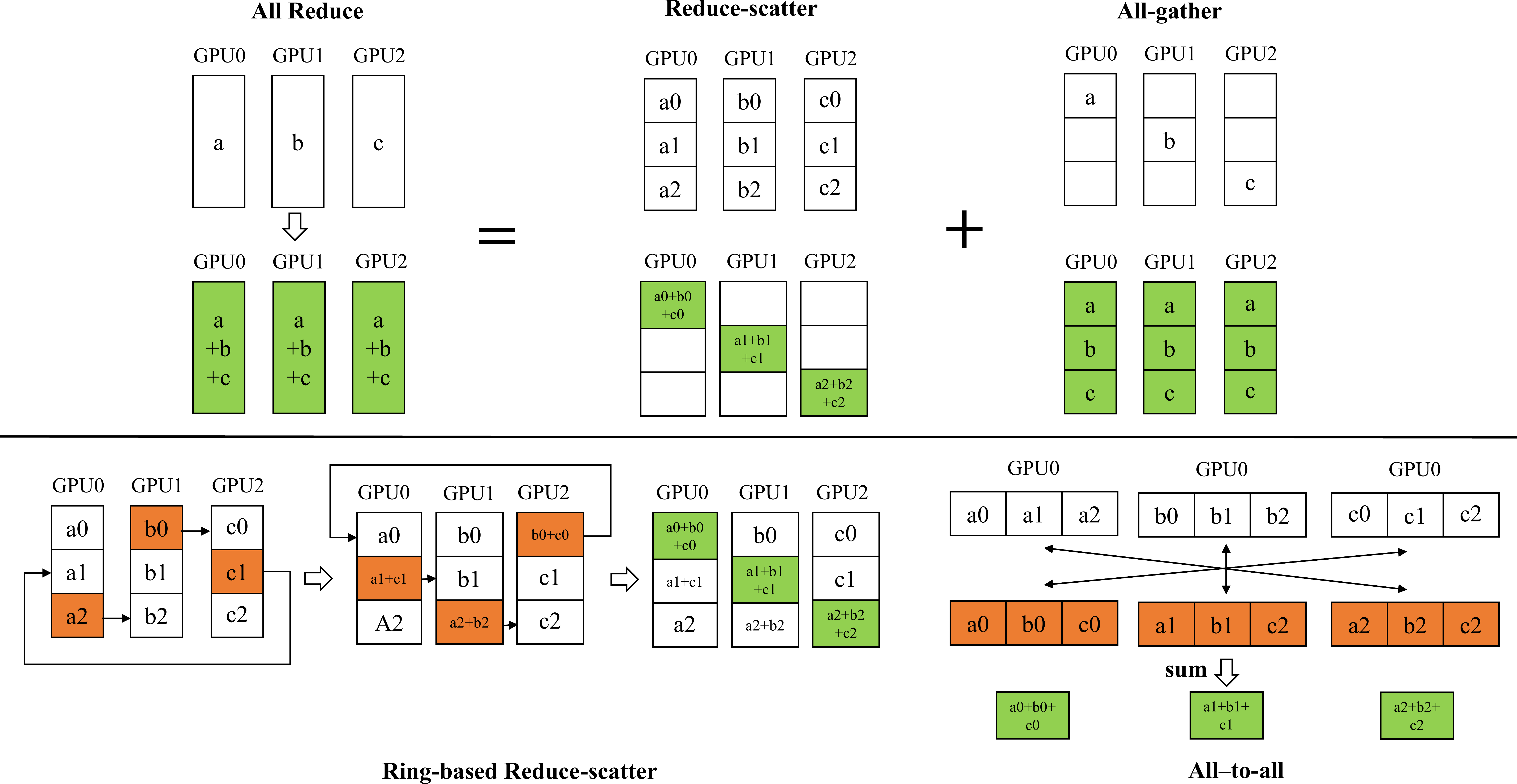}		\caption{Overview of All-Reduce and Its Component Operations. The All-Reduce process is depicted in two main phases. Initially, the Reduce-Scatter operations is performed, where gradients are divided and summed up in equal blocks across GPUs according to their ranks. This is followed by the All-Gather phase, where each GPU shares its segment of the aggregated gradients, ensuring the complete set of gradients is available to all GPUs. Additionally, the figure includes representations of the Ring reduce-scatter and Alltoall operations, integral in the gradient distribution and aggregation process across the cluster.}\label{fig:reduce}
\end{figure}

\subsection{Modern Sharding Strategy}\label{sec:fsdp}
Fully Sharded Data Parallelism (FSDP) revolutionizes the training of deep learning models by optimizing memory usage and computational efficiency across multiple GPUs. As shown in Fig.~\ref{fig:shard}, here is an introduction to FSDP.

\subsubsection{Initial Framework without Sharding} Traditionally, in data parallelism without sharding, every GPU holds a complete set of the model's weights, gradients, and optimizer states. The communication between GPUs is limited to performing an all-reduce operation on the entire gradients to facilitate model updates, eliminating the need for sharing other variables. This straightforward communication pattern simplifies operations but comes at the expense of high memory consumption.

\subsubsection{Sharded Model} Implementing a sharding strategy significantly enhances memory efficiency. Under this strategy, each GPU maintains only a local partition of the optimizer states, avoiding the need to communicate these states between GPUs. Moreover, each GPU keeps only a partition of the averaged gradients. After backpropagation, non-local portions of the gradients are sent to the respective GPUs via reduce-scatter and subsequently released from memory. This ensures that each GPU updates its segment of the optimizer states and a fraction of the weights. An all-gather operation on the weights follows, maintaining consistency across all GPUs and enabling partial model updates with a reduced memory footprint.

\subsubsection{Fully Sharded Data Parallelism} Advancing further, FSDP limits each GPU to holding just a partition of the model weights, eliminating the presence of the complete model on any single GPU. Prior to forward propagation, GPUs collect necessary weight partitions from each other to assemble a full model in memory, conduct forward and backward propagation, and then proceed with gradient reduce-scatter. Post-backpropagation, any non-local weights and gradients are discarded to conserve memory, significantly lowering the memory requirements and fostering complex model training that was previously infeasible due to memory constraints.

While the above explanation outlines the logical framework of FSDP for ease of understanding, it's crucial to acknowledge the sophisticated optimizations embedded within its architecture aimed at further reducing peak memory usage. For more details, please refer to the Zero strategy~\cite{rajbhandari2020zero} and Pytorch FSDP~\cite{zhao2023pytorch}. These include, but are not limited to, layer-wise gradient communication and the strategic release of memory during the backpropagation phase. Such detailed optimizations, though not elaborated here, are fundamental to FSDP's effectiveness in resource management, enabling the scalable and efficient training of large-scale models by adeptly minimizing the peak memory footprint.

\begin{figure}[t]
\centering
\includegraphics[width=0.8\linewidth]{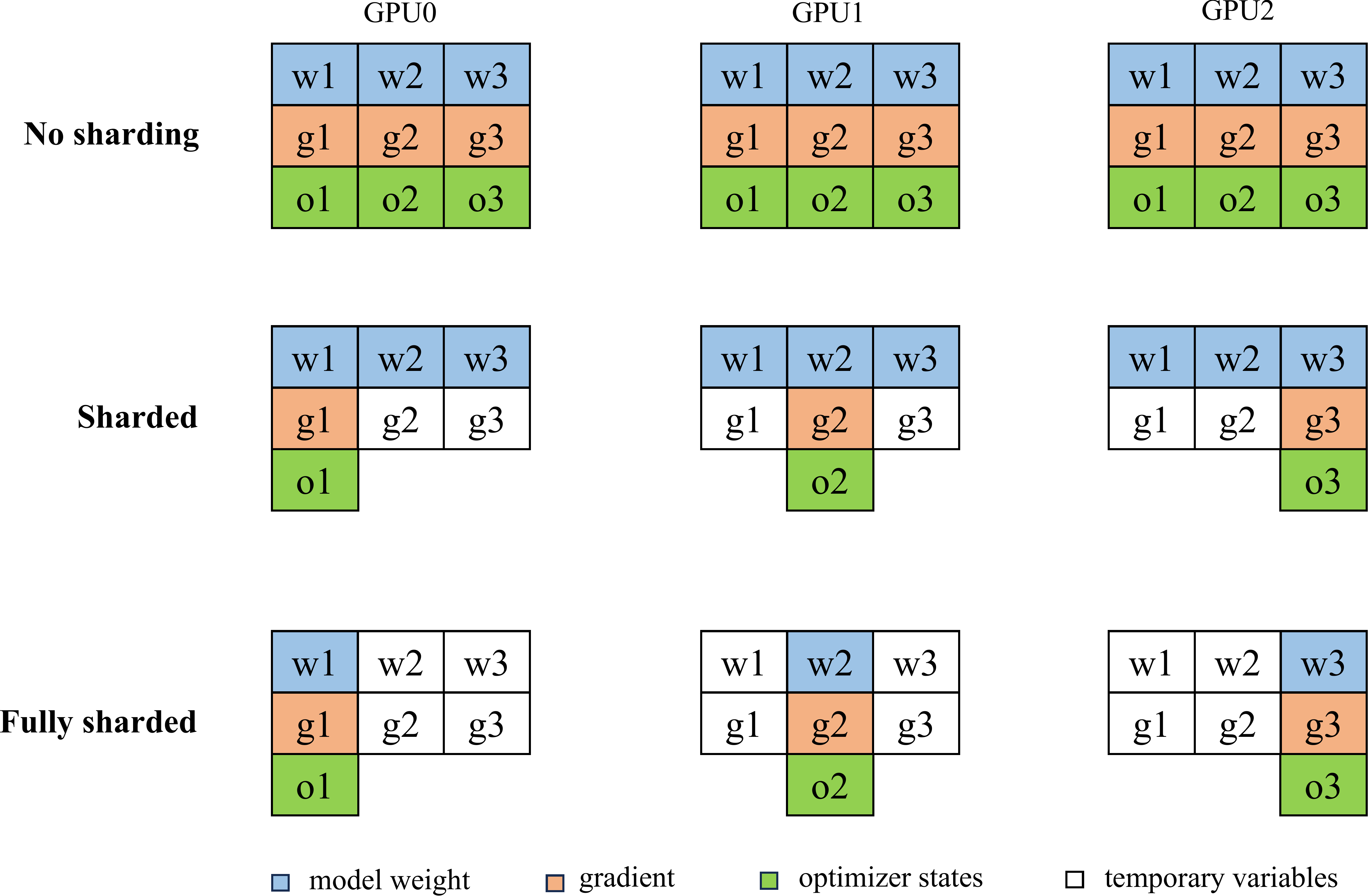}		\caption{Overview of sharding strategy. (1) Initial Framework without Sharding, where each GPU stores the entire model, leading to high memory usage but simplified communication; (2) Sharded Model, introducing a strategy where GPUs maintain only local partitions of optimizer states and averaged gradients, enhancing memory efficiency through reduce-scatter and all-gather operations for gradient and weight management; (3) Fully Sharded Data Parallelism, further optimizing memory by limiting each GPU to a partition of the model weights, necessitating inter-GPU collection of weight partitions for model assembly, significantly reducing memory requirements and enabling the training of large-scale models previously constrained by memory limitations.}\label{fig:shard}
\end{figure}

\begin{table*}[t!]
	\begin{center}
		\caption{Detailed end-to-end speedup of  \method{} in prevalent Megatron-LM framework for efficient multi-node LLMs training with different model sizes, gradient accumulation numer, GPU number, and the node connection.}
		\label{tab:speedup_full}
		\setlength{\tabcolsep}{5pt} 
		\renewcommand{\arraystretch}{3.2}
		{ \fontsize{8.5}{3.5}\selectfont{
				\begin{tabular}{l|c|ccc|ccc|ccc}
					\toprule
				&	& \multicolumn{3}{c|}{32$\times$ NVIDIA A100 (RoCE v2)} & \multicolumn{3}{c|}{64$\times$ NVIDIA A100 (RoCE v2)}  & \multicolumn{3}{c}{128$\times$ NVIDIA A100 (RoCE v2)}     \\ \cline{3-11} 
					Model       &  \begin{tabular}[c]{@{}c@{}}Accum. \\ Num.\end{tabular} & \begin{tabular}[c]{@{}c@{}}Baseline \\ (tokens/s)\end{tabular}  & \begin{tabular}[c]{@{}c@{}}\method{} \\ (tokens/s)\end{tabular}  & \begin{tabular}[c]{@{}c@{}}Speedup \\ (\%)\end{tabular}    & \begin{tabular}[c]{@{}c@{}}Baseline \\ (tokens/s)\end{tabular}  & \begin{tabular}[c]{@{}c@{}}\method{} \\ (tokens/s)\end{tabular}  & \begin{tabular}[c]{@{}c@{}}Speedup \\ (\%)\end{tabular}   & \begin{tabular}[c]{@{}c@{}}Baseline \\ (tokens/s)\end{tabular}  & \begin{tabular}[c]{@{}c@{}}\method{} \\ (tokens/s)\end{tabular}  & \begin{tabular}[c]{@{}c@{}}Speedup \\ (\%)\end{tabular}   \\ \midrule
				\multirow{3}{*}{\begin{tabular}[c]{@{}c@{}}LLAMA2 \\ (7B)\end{tabular}}  &  4 & 75544.9 & 78911.7 & 4.47  &  148071.9 & 156369.9 & 5.60 &  284840.8 & 307657.4 & 8.01 \\
			&  2 & 68330.6 & 73706.1 & 7.87  &  131484.3 & 145277.7 & 10.49   &  254703.8 & 284862.9 & 11.84\\
                &  1 & 57230.2 & 65376.3&  \textbf{14.23} &  108680.5 & 127263.1 & \textbf{17.10} &  212373.9 & 251701.9 & \textbf{18.50}   \\
    \midrule
\multirow{3}{*}{\begin{tabular}[c]{@{}c@{}}Mistral  \\ (7B)\end{tabular}}  &  4 & 74354.6&78674.1&5.81  &  145855.5& 154816.9 & 6.14 & 284082.2 & 305136.9 & 7.41\\
			&  2 & 65345.6 & 72734.2  &  11.31  & 128964.8&144120.13 & 11.75   & 249414.7 & 281070.5 & 12.69\\
                &  1 & 55947.3 & 64123.7 &\textbf{14.61} &  105198.2 & 125422.7 & \textbf{19.22} &   206053.7 & 247468.3 & \textbf{20.10}   \\ \midrule 
\multirow{3}{*}{\begin{tabular}[c]{@{}c@{}}LLAMA2 \\ (13B)\end{tabular}}  &  4 & 40341.8 &43092.1&6.82  &  71847.3& 79106.9 & 10.10 & 139677.0 & 156768.8 & 12.23 \\
			&  2 & 35972.6 & 40097.4  &  11.47  & 58235.9& 69345.9 & 19.07   &   113070.9 &  136932.6 & 21.10\\
                &  1 & 30555.9 & 35683.2 &\textbf{16.78} &   43941.6 & 55322.9 & \textbf{25.90} &  83160.2& 108577.2  & \textbf{30.56}   \\  
                \midrule
\multirow{3}{*}{\begin{tabular}[c]{@{}c@{}}LLAMA2 \\ (70B)\end{tabular}}  &  4 & \multicolumn{3}{c}{\multirow{3}{*}{N/A, since the Data Parallel is $\bm{1}$}} &  8108.3& 9870.0 & 21.73 & 15938.6 & 19612.1 & 23.05 \\
			&  2 & &&  & 5110.6& 6503.7 & 27.26   &   9619.7 & 12387.2 & 28.77\\
                &  1 & && &   2869.2 & 3803.2 & \textbf{32.55} &  5263.6& 7107.6  & \textbf{35.03}   \\  
                \bottomrule
					&	& \multicolumn{3}{c|}{32$\times$ NVIDIA A800 (Infiniband)} & \multicolumn{3}{c|}{64$\times$ NVIDIA A800 (Infiniband)}  & \multicolumn{3}{c}{128$\times$ NVIDIA A800 (Infiniband)}     \\ \cline{3-11}
					       &  \begin{tabular}[c]{@{}c@{}}Accum. \\ Num.\end{tabular} & \begin{tabular}[c]{@{}c@{}}Baseline \\ (tokens/s)\end{tabular}  & \begin{tabular}[c]{@{}c@{}}\method{} \\ (tokens/s)\end{tabular}  & \begin{tabular}[c]{@{}c@{}}Speedup \\ (\%)\end{tabular}    & \begin{tabular}[c]{@{}c@{}}Baseline \\ (tokens/s)\end{tabular}  & \begin{tabular}[c]{@{}c@{}}\method{} \\ (tokens/s)\end{tabular}  & \begin{tabular}[c]{@{}c@{}}Speedup \\ (\%)\end{tabular}   & \begin{tabular}[c]{@{}c@{}}Baseline \\ (tokens/s)\end{tabular}  & \begin{tabular}[c]{@{}c@{}}\method{} \\ (tokens/s)\end{tabular}  & \begin{tabular}[c]{@{}c@{}}Speedup \\ (\%)\end{tabular}   \\ \midrule
				\multirow{3}{*}{\begin{tabular}[c]{@{}c@{}}LLAMA2 \\ (7B)\end{tabular}}  &  4 & 73047.8 & 77834.2 & 6.55  &  136605.5 & 151714.2 & 11.06 &  264459.1 & 295077.9   &11.58 \\
			&  2 & 65542.2 & 73312.9 & 11.86  &  116276.3 & 139874.8 & 20.30   & 216842.1 & 265101.3 & 22.26\\
                &  1 & 54186.8 & 65862.1&  \textbf{21.55} &  89555.4& 120625.6 & \textbf{34.69} &  161447.6 & 224887.7 & \textbf{39.29}   \\
    \midrule
    \multirow{3}{*}{\begin{tabular}[c]{@{}c@{}}Mistral \\ (7B)\end{tabular}}  &  4 & 71150.4 &76262.5 & 7.18  &  132480.4 & 147806.4 & 11.57 &  254865.7 & 285780.9   & 12.13 \\
			&  2 & 63195.6 & 71579.4 & 13.27  &  111917.1 & 135508.3 & 21.08  & 209780.7 & 258785.6 & 23.36\\
                &  1 & 51896.8 & 63568.5&  \textbf{22.49} &  85334.5& 115355.6 & \textbf{35.18} &  155308.7 & 217494.4 & \textbf{40.04}   \\
    \midrule
\multirow{3}{*}{\begin{tabular}[c]{@{}c@{}}LLAMA2 \\ (13B)\end{tabular}}  &  4 & 42515.2 &46195.4&8.65  &  79554.6& 89581.0 & 12.60 & 151598.8 & 173761.8 & 14.62 \\
			&  2 & 37922.1 & 43062.3  &  13.55  & 66455.2& 81644.0 & 22.86   &  124160.3 & 155571.1 & 25.30\\
                &  1 & 30682.9 & 38226.1 &\textbf{24.58} &  49907.4 & 69409.0 & \textbf{39.08} &  90446.3 & 128649.6  & \textbf{42.24}   \\
                \bottomrule 
\end{tabular}		
		}}
	\end{center}
\end{table*}

\section{Experimental Details}\label{eval_details}

\begin{table*}[t!]
	\begin{center}
		\caption{Speedup achieved by our method against  16-bit communication in PyTorch FSDP framework on MoE model.}
		\label{tab:fsdp_full}
		\setlength{\tabcolsep}{6pt} 
		\renewcommand{\arraystretch}{3.5}
		{ \fontsize{8.5}{3.5}\selectfont{
				\begin{tabular}{l|c|c|ccc|ccc}
					\toprule
			&	&	& \multicolumn{3}{c|}{32$\times$ NVIDIA A800 (Infiniband)} & \multicolumn{3}{c}{64$\times$ NVIDIA A800 (Infiniband)}       \\ \cline{4-9} 
					Model&  \begin{tabular}[c]{@{}c@{}}Param. \\  Sharded\end{tabular}       &  \begin{tabular}[c]{@{}c@{}}Accum. \\ Num.\end{tabular} & \begin{tabular}[c]{@{}c@{}}Baseline \\ (tokens/s)\end{tabular}  & \begin{tabular}[c]{@{}c@{}}\method{} \\ (tokens/s)\end{tabular}  & \begin{tabular}[c]{@{}c@{}}Speedup \\ (\%)\end{tabular}    & \begin{tabular}[c]{@{}c@{}}Baseline \\ (tokens/s)\end{tabular}  & \begin{tabular}[c]{@{}c@{}}\method{} \\ (tokens/s)\end{tabular}  & \begin{tabular}[c]{@{}c@{}}Speedup \\ (\%)\end{tabular}  \\ \midrule
    \multirow{3}{*}{\begin{tabular}[c]{@{}c@{}}Mixtral \\ (8$\times$7B)\end{tabular}}  & \multirow{3}{*}{True} &  4 & 76204.6 & 85250.1 & 11.87  &  135825.9 & 148523.5 & 9.35  \\
			& &  2 & 34813.2 & 40329.8 & 15.85  &  60963.7 & 71820.3 & 17.81 \\
                & &  1 & 14356.1 & 18357.4 &  \textbf{27.87} &  25450.9 & 34044.7 & \textbf{33.77}  \\
             \bottomrule   
\end{tabular}		
		}}
	\end{center}
\end{table*} 

\subsection{Additional Experimental Results}\label{sec:moreresults}
Here, we present the additional result for the speedup of \method{} on various settings. Firstly, we introduce the concept of accumulation numbers.

\textbf{Accumulation number:} The accumulation number refers to the number of forward and backward passes accumulated before a parameter update occurs in a machine learning model, especially in large-scale training scenarios. In the context of large model training, accumulation number is crucial due to memory constraints and computational efficiency considerations. By accumulating gradients over multiple iterations before updating the model parameters, it's possible to effectively train large models with limited memory resources, especially the large batch size setting. A lower accumulation number leads to a more pronounced speedup in training. This is primarily because a lower accumulation number increases the frequency of communication between nodes. In large-scale training setups, especially with thousands of GPUs, the accumulation number is typically low (e.g., 1 or 2). This low accumulation number significantly magnifies the benefits of methods like \method{}, which reduce communication overhead.

\textbf{4D parallelism:} \method{}'s effect is even more pronounced when considering 4D parallel strategies, i.e., data parallelism, pipeline parallelism, tensor parallelism, and expert parallelism. In scenarios where a large number of GPUs are employed, data parallelism often emerges as the simplest and most effective method to increase training parallelism. However, tensor and pipeline parallelism require careful consideration of intra-node and inter-node communications, where the cost of all-reduce operations between nodes can be high. The number of experts available constrains expert parallelism's efficiency. \method{} greatly reduces communication volumes in the context of data parallelism, accelerating training speed significantly.

\textbf{Results on training speed:}
Table~\ref{tab:speedup_full} and Table~\ref{tab:fsdp_full} show the detailed training speed of \method{}. Notably, there's a significant improvement in throughput across different models and scales, with larger models seeing more pronounced acceleration. For instance, the 70B LLAMA2 model on the A100 cluster saw a speedup of 23.05\% to 35.03\%, and the 13B LLAMA2 model on the A800 cluster achieved from 14.62\% to 42.24\% speedup.

When considering node connections, \method{} demonstrates more substantial improvements in lower bandwidth environments (like the A800 cluster) compared to higher bandwidth scenarios (like the A100 cluster). This suggests that \method{} is particularly effective in optimizing communication efficiency, making it a vital tool in large model training.

Finally, when assessing large-scale training frameworks such as Megatron-LM and PyTorch FSDP, \method{} consistently shows a significant increase in token throughput speed. This highlights its efficacy across different frameworks and training environments, further establishing its utility in large-model training scenarios.

\subsection{Model Configurations}\label{model_cfg}
The following are the detailed experiment settings for the main parts. All experiments are performed with bfloat-16 precision in default.

\textbf{GPT-2:} Utilizing guidance from Megatron-DeepSpeed\footnote{https://github.com/microsoft/Megatron-DeepSpeed}, as detailed by DeepSpeed~\cite{rasley2020deepspeed}, we undertook the training of the GPT2-345M model from scratch. Our data source was the OpenWebtext dataset~\cite{gokaslan2019openwebtext}, through which we processed a substantial total of 52B tokens. The experiment was configured with a zero optimization stage set to 2, deliberately excluding the use of model and pipeline parallelism to simplify our setup. Our global batch size was set at 512, with a learning rate of 3.0e-4. In managing the training dynamics, we applied a global gradient norm clipping of 1 and opted for Adam's optimizer settings with $\beta_1=0.9$ and $\beta_2=0.95$.

\textbf{LLAMA2-7B:}
For LLAMA2-7B, we conduct two experiments on it.
\textbf{1)} Within the Fully Sharded Data Parallel (FSDP) framework, and following protocols from LLAMA-Accessory~\cite{zhang2023llamaadapter} and the GPT-4-LLM repository\footnote{https://github.com/Instruction-Tuning-with-GPT-4/GPT-4-LLM}, we conducted a supervised fine-tuning spanning three epochs on the alpaca-gpt4 dataset as per \cite{peng2023instruction}. This process included partitioning both gradient and optimizer states across GPUs, with a global batch size of 64, a learning rate of 2e-5, gradient clipping at 2, and model parallelism set to 2.
    
\textbf{2)} Utilizing the Megatron-LM framework~\cite{shoeybi2019megatron} and following the methodology of Tiny-LLAMA~\cite{zhang2024tinyllama}, we performed fine-tuning on the sampled RedPajama dataset\footnote{https://huggingface.co/datasets/togethercomputer/RedPajama-Data-1T-Sample}, consuming 8 billion tokens. This setup involved a model parallelism of 8, a total batch size of 1024, and a peak learning rate of 1e-5, avoiding pipeline parallelism. We matched Adam optimizer's parameters to those recommended by epfLLM~\cite{epfmgtrn}, setting $\beta_1=0.9$, $\beta_2=0.95$, and an $\epsilon$ value of 1e-5, with a maximum token length of 4096 and an $\epsilon$ for layer norm also at 1e-5.

\textbf{LLAMA2-13B:} The fine-tuning of LLAMA2-13B adhered to the identical training hyperparameters as those established for LLAMA2-7B. It's important to note that we retained most model-related hyperparameters from the Meta's officially released configurations.

\textbf{LLAMA2-70B:} To accommodate the large-scale model within our memory-constrained training cluster, we made necessary adjustments to the parallelization parameters. Specifically, we set the model parallelism to 8 and the pipeline parallelism to 4, with a max learning rate carefully reduced to 1.5e-4. All other parameters were consistently aligned with those of the LLAMA2-7B and LLAMA2-13B models.

\textbf{Mistral-7B:} The training configuration for Mistral-7B closely mirrored that of the LLAMA2-7B and LLAMA2-13B models, with the sole exception being a larger maximum learning rate of 5e-5.
    
\textbf{Mixtral 8$\times$7B:}
Drawing from the LLAMA-Accessory~\cite{zhang2023llamaadapter}, our investigation into the MoE model spanned two experiments, with settings largely coming from LLAMA-Accessory's demonstrations. We leveraged the FSDP's checkpointing strategy to alleviate memory demands, setting expert parallelism at 8.

\textbf{1)}  For the outcomes presented in Table~\ref{tab:eval}, Mixtral was fine-tuned on the ultrachat-200k dataset~\cite{ding2023enhancing} over a single epoch. Here, the maximum learning rate was 2.5e-6, with a global batch size of 128 and a maximum token length of 4096. The Pytorch FSDP sharding strategy was employed to manage gradient and optimizer states efficiently.

\textbf{2)} For the ablation study, we utilized the code-alpaca-v1 dataset~\cite{luo2023wizardcoder} for one epoch of fine-tuning, applying a fully sharded strategy to divide gradient, optimizer states, and model weight. Parameters were set with a maximum learning rate of 3e-6, a global batch size of 128, a maximum token length of 2048, and gradient clipping at 2.

\subsection{Benchmarks}\label{benchmarks}
Following prior works~\cite{jiang2024mixtral}, we conduct downstream evaluation on the popular platform Opencompass~\cite{2023opencompass}, and the wide variety of benchmarks used for evaluation are categorized as follows:
\begin{itemize}
    \item \textbf{Commonsense Reasoning~(0-shot):} Hellaswag~\cite{zellers2019hellaswag}, Winogrande~\cite{sakaguchi2021winogrande}, PIQA~\cite{bisk2020piqa}, ARC-Easy and ARC-Challenge~\cite{clark2018think}.
    \item \textbf{World Knowledge~(5-shot):} NaturalQuestions~\cite{kwiatkowski2019natural} and TriviaQA~\cite{joshi2017triviaqa}.
    \item \textbf{Math:} GSM8K~(8-shot)~\cite{cobbe2021training} and MATH~(4-shot)~\cite{hendrycks2021measuring}.
    \item \textbf{Code}: Humaneval~(0-shot)~\cite{chen2021evaluating} and MBPP~(3-shot)~\cite{austin2021program}.
    \item \textbf{Popular aggregated results:} MMLU~(5-shot)~\cite{hendrycks2020measuring}.
\end{itemize}

\section{Detailed Theoretical Analysis}\label{sec:mainproof}
\subsection{Auxiliary Lemmas}
\begin{lemma}\label{gradientrlation}
For the gradient 	$\tilde{\*g}_{k}  =   \frac{1}{N}\sum\nolimits_{n=1}^N \operatorname{decompressor}(\tilde{\*h}^n_{k+1})$ used to update the parameter in Algorithm~\ref{alg:BitCom}, we have:
\begin{equation*}
	\begin{aligned}
			\tilde{\*g}_k & =    \frac{1}{N}\sum_{n=1}^N  \left( {\*g}^n_k +  \frac{ 1}{\beta} \left( \emti{k}{n} - \emti{k+1}{n} \right)  +   \operatorname{decompressor}( \operatorname{compressor} (\emti{k}{n}))  -    \emti{k}{n}  \right)\\
   & = \*g_k + \frac{1}{\beta} \qty(\emti{k}{} - \emti{k+1}{}) + \emhi{k}{} - \emti{k}{},
	\end{aligned}
\end{equation*}
where we let $ \*g_k \coloneqq \frac{1}{N}\sum_{n=1}^N    \*g^n_k$,  $\emti{k}{}\coloneqq  \frac{1}{N}\sum_{n=1}^N   \emti{k}{n}$,  $\emhi{k}{}\coloneqq  \frac{1}{N}\sum_{n=1}^N    \operatorname{decompressor}\qty( \operatorname{compressor} \qty(\emti{k}{n}))$.
\end{lemma}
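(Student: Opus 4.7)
The plan is to invert the compensation-error recursion in Step~2 of Algorithm~\ref{alg:BitCom} to obtain a closed-form expression for the decompressed low-bit gradient $\*d^n_{k+1}$ on each node, substitute in the definition of $\*h^n_{k+1}$ from Step~1, and then average over the $N$ nodes. Since $\tilde{\*g}_k = \frac{1}{N}\sum_{n=1}^N\*d^n_{k+1}$ by construction in Step~3, this should match the stated identity directly; the argument is essentially algebraic and requires no estimates.

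The concrete steps I would carry out are: (i) starting from $\emti{k+1}{n} = (1-\beta)\emti{k}{n} + \beta(\hmi{k+1}{n} - \*d^n_{k+1})$, rearrange to get
\[
\*d^n_{k+1} \;=\; \hmi{k+1}{n} - \emti{k}{n} + \tfrac{1}{\beta}\bigl(\emti{k}{n} - \emti{k+1}{n}\bigr);
\]
(ii) use Step~1 to replace $\hmi{k+1}{n} = \gmi{k}{n} + \operatorname{decompressor}(\emi{k}{n};s_e)$ and recognize $\operatorname{decompressor}(\emi{k}{n};s_e) = \operatorname{decompressor}(\operatorname{compressor}(\emti{k}{n}))$, which is precisely the per-node summand defining $\emhi{k}{n}$; (iii) average the resulting equation over $n=1,\dots,N$ and invoke the averaging definitions $\*g_k \coloneqq \frac{1}{N}\sum_n \gmi{k}{n}$, $\emti{k}{} \coloneqq \frac{1}{N}\sum_n \emti{k}{n}$, and $\emhi{k}{} \coloneqq \frac{1}{N}\sum_n \emhi{k}{n}$ to collapse the expression into the second displayed line.

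The only subtle point, and what I would consider the main obstacle, is the treatment of the periodic reset branch of Eqn.~\eqref{adadsa}: when $k \bmod T_c = 0$, the stored 8-bit error $\emi{k}{n}$ is zero rather than $\operatorname{compressor}(\emti{k}{n};s_e,8)$, which would seem to break the identification $\operatorname{decompressor}(\emi{k}{n};s_e) = \emhi{k}{n}$ used in step~(ii). However, the reset happens at the end of iteration $k-1$ and affects what is used at iteration $k+1$, while the lemma relates quantities generated \emph{at} iteration $k$; provided one adopts the convention $\emhi{k}{n} \coloneqq \operatorname{decompressor}(\emi{k}{n};s_e)$ (i.e.\ the actual stored, possibly reset, error read in Step~1), the identity holds verbatim with no case split. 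I would make this convention explicit at the start of the proof so that the clean equality above is unambiguous.
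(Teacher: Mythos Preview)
Your proposal is correct and follows essentially the same route as the paper: both arguments invert the moving-average update $\emti{k+1}{n} = (1-\beta)\emti{k}{n} + \beta(\hmi{k+1}{n}-\*d^n_{k+1})$ to isolate $\*d^n_{k+1}$, substitute $\hmi{k+1}{n}=\gmi{k}{n}+\operatorname{decompressor}(\emi{k}{n})$, and average over nodes. The paper introduces the auxiliary $\deltami{k}{n}\coloneqq \hmi{k}{n}-\operatorname{decompressor}(\operatorname{compressor}(\hmi{k}{n}))$ and rewrites it as $\tfrac{1}{\beta}(\emti{k}{n}-(1-\beta)\emti{k-1}{n})$, which is exactly your step~(i) in different packaging; your observation about the reset branch is more explicit than what the paper records, but the underlying convention is the same.
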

See its proof in Sec.~\ref{adasdas}.

\begin{lemma}\label{lem:sum_gk}
Suppose the setting in Theorem~\ref{thm:sgd} hold, when we use  the gradient 	$\tilde{\*g}_{k}  =   \frac{1}{N}\sum\nolimits_{n=1}^N \operatorname{decompressor}(\tilde{\*h}^n_{k+1})$   in Algorithm~\ref{alg:BitCom} to update the model weight $\bm{\theta}_k$ in SGD, see Eqn.~\eqref{eq:sgd-adam}, then we have:
\[
\EE\norm{ \sum_{i=0}^k \qty(\tilde{\*g}_i  - \*g_i) } \leq T_c \sqrt{d} \alpha c_\infty
 + \frac{\sqrt{d}k}{2s_e},
\]
where $\*g_k =  \frac{1}{N}\sum_{n=1}^N    \*g_k^n$.
\end{lemma}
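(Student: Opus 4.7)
The plan is to invoke Lemma~\ref{gradientrlation} to decompose each single-step difference $\tilde{\*g}_i - \*g_i$ as the sum of an $8$-bit error-quantization residual $\emhi{i}{} - \emti{i}{}$ and a telescoping increment $\tfrac{1}{\beta}\qty(\emti{i}{} - \emti{i+1}{})$ of the moving-average compensation error. Summing over $i \in \{0,\dots,k\}$ collapses the telescoping piece to $\tfrac{1}{\beta}\qty(\emti{0}{} - \emti{k+1}{})$, and the initialization $\emti{0}{} = \bm{0}$ reduces the task to controlling (i) a single tail quantity $\tfrac{1}{\beta}\norm{\emti{k+1}{}}$ and (ii) the cumulative $8$-bit quantization discrepancy $\sum_{i=0}^{k}\norm{\emhi{i}{} - \emti{i}{}}$.

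For (ii) I would use the compressor/decompressor definition in Eqn.~\eqref{eq:compressor}: rounding to an $8$-bit integer at scale $s_e$ incurs per-coordinate error at most $1/(2s_e)$, so $\norm{\emhi{i}{n} - \emti{i}{n}}_\infty \le 1/(2s_e)$, and averaging over the $N$ nodes preserves this bound. Converting to $\ell_2$ via the dimension factor $\sqrt{d}$ and summing over $i$ yields the $\sqrt{d}\,k/(2s_e)$ contribution stated in the lemma.

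For (i) I would unroll the recursion $\tilde{\*e}^n_{k+1} = (1-\beta)\tilde{\*e}^n_k + \beta(\*h^n_{k+1} - \*d^n_{k+1})$ as a $\beta$-weighted geometric average of past gradient-compression residuals. By Assumption~\ref{asm:p-bit} each such residual is bounded in $\ell_\infty$ by $\alpha c_\infty$, so a standard geometric-series argument (using $\tilde{\*e}^n_0 = \bm{0}$) gives $\norm{\emti{k+1}{n}}_\infty \le \alpha c_\infty$ uniformly in $k$, and hence $\norm{\emti{k+1}{}}_2 \le \sqrt{d}\,\alpha c_\infty$ after averaging across nodes. The remaining $1/\beta$ factor is then absorbed using the hyperparameter tie between $\beta$ and the reset frequency $T_c$ built into Assumption~\ref{asm:p-bit}, producing the $T_c\sqrt{d}\,\alpha c_\infty$ contribution.

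The main obstacle I expect is handling the periodic error reset correctly: $\*e^n$ is zeroed every $T_c$ steps while $\tilde{\*e}^n$ is not, so at those iterations the quantity $\emhi{k}{}$ appearing in Lemma~\ref{gradientrlation} is the \emph{actually-applied} (zero) compressed error rather than the nominal $8$-bit rounding of $\emti{k}{}$. The care is to show that this reset discrepancy is absorbed by the telescoping tail rather than by the per-step quantization sum, so the uniform moving-average bound on $\norm{\emti{k+1}{}}$ already controls it. Once this bookkeeping is done, the triangle inequality followed by taking expectations immediately combines the two pieces into the claimed estimate $T_c\sqrt{d}\,\alpha c_\infty + \sqrt{d}\,k/(2s_e)$.
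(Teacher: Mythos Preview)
Your overall strategy---invoke Lemma~\ref{gradientrlation}, telescope the $\tfrac{1}{\beta}(\emti{i}{}-\emti{i+1}{})$ increments, then split into a single tail term plus a cumulative $8$-bit rounding sum---is exactly what the paper does, and your part~(ii) is correct up to the minor check that $\emti{i}{n}$ actually lies in the $p_e$-bit range so that the per-coordinate $1/(2s_e)$ rounding bound applies (the paper invokes the second inequality in Assumption~\ref{asm:p-bit} for this).

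The gap is in part~(i). Your geometric-series argument presumes that each compression residual $\hmi{k}{n}-\*d^n_k$ is bounded by $\alpha c_\infty$ \emph{independently} of $\emti{k-1}{n}$. It is not: since $\hmi{k}{n}=\gmi{k-1}{n}+\operatorname{decompressor}(\emi{k-1}{n})$ already contains the decompressed error, in the overflow regime $|h|>2^p/s$ the residual equals $|h|-2^p/s$ and grows with $|\emti{k-1}{n}|$. Plugging this back into the moving-average update destroys the $(1-\beta)$ contraction: the actual coordinatewise recursion in that regime is $|\tilde e^{\,n}_{k,i}|\le |\tilde e^{\,n}_{k-1,i}|+\alpha\beta c_\infty$ (this is the induction carried out in the paper's Lemma~\ref{lem:err_bound}), which is merely additive. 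The periodic reset is what caps the accumulation at $T_c\alpha\beta c_\infty$, and that extra $\beta$ in the error bound is precisely what cancels the $1/\beta$ coming from the telescoping. There is no ``hyperparameter tie between $\beta$ and $T_c$'' in Assumption~\ref{asm:p-bit} that yields $1/\beta\le T_c$, so your route of first bounding $\|\emti{k+1}{}\|_\infty\le \alpha c_\infty$ via a geometric sum and then absorbing $1/\beta$ afterward cannot reach the stated constant.
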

See the proof in Sec.~\ref{sec:proof_sum_gk}.

\begin{lemma}\label{lem:sum_mk}
Suppose the setting in Theorem~\ref{thm:adam} hold, when we use  the gradient 	$\tilde{\*g}_{k}  =   \frac{1}{N}\sum\nolimits_{n=1}^N \operatorname{decompressor}(\tilde{\*h}^n_{k+1})$   in Algorithm~\ref{alg:BitCom} to update the model weight $\theta_k$ in Adam, see Eqn.~\eqref{eq:sgd-adam}.
Considering two sequences $\qty{\*m_k}_{k=1}^T$:
\[
\*m_k = (1-\beta_1)\*m_{k-1} + \beta_1 \*g_k,
\]
and $\qty{\tilde{{\*m}}_k}_{k=1}^T$,
\[
\tilde{\*m}_k = (1-\beta_1)\tilde{\*m}_{k-1} + \beta_1 \tilde{\*g}_k,
\]
where $ \*g_k =  \frac{1}{N}\sum_{n=1}^N \*g_k^n$.
Given the sequence of vectors $\qty{\bm{\eta}_k}_{k=0}^T$ such that element-wisely $\bm{\eta}_k$ satisfy: $\forall i \in [d]$,
\[
 \bm{\eta}_{k,i} \leq \eta c_u, \qquad \abs{\qty(\eta_{k,i} - \eta_{k-1,i})} \leq \beta_1 \eta c_u, \quad\forall k \in [T],
\]
then we have:
\[
\EE \norm{\sum_{i=0}^k \bm{\eta}_i \circ \qty(\tilde{\*m}_i  -  \*m_i) } \leq {(T_c+1 + \beta_1^2 k T_c) \sqrt{d} \alpha  c_\infty c_u \eta}  + \frac{k\sqrt{d}\eta c_u }{2s_e}.
\]
\end{lemma}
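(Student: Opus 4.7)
\textbf{Proof plan for Lemma~\ref{lem:sum_mk}.} The plan is to mirror the strategy of Lemma~\ref{lem:sum_gk}, but with two extra bookkeeping layers: the exponential moving average (EMA) that defines $\tilde{\*m}_k$ versus $\*m_k$, and the coordinate-wise weights $\bm{\eta}_i$. First I would subtract the two EMA recursions to get $\tilde{\*m}_i - \*m_i = (1-\beta_1)(\tilde{\*m}_{i-1} - \*m_{i-1}) + \beta_1(\tilde{\*g}_i - \*g_i)$, and unroll with $\tilde{\*m}_{-1} - \*m_{-1} = \bm{0}$ to obtain
\[
\tilde{\*m}_i - \*m_i \;=\; \beta_1 \sum_{j=0}^{i} (1-\beta_1)^{i-j}(\tilde{\*g}_j - \*g_j).
\]
Substituting this into $\sum_{i=0}^k \bm{\eta}_i \circ (\tilde{\*m}_i - \*m_i)$ and swapping the order of summation yields
\[
\sum_{i=0}^{k} \bm{\eta}_i \circ (\tilde{\*m}_i - \*m_i) \;=\; \sum_{j=0}^{k} \*w_j \circ (\tilde{\*g}_j - \*g_j), \qquad \*w_j \coloneqq \beta_1 \sum_{i=j}^{k} (1-\beta_1)^{i-j} \bm{\eta}_i,
\]
which reduces the problem to a weighted version of Lemma~\ref{lem:sum_gk}, with elementwise bounds $\|\*w_j\|_\infty \le \eta c_u$ that follow from the hypothesis $\|\bm{\eta}_i\|_\infty \le \eta c_u$ and the geometric-sum identity $\beta_1 \sum_{m\geq 0}(1-\beta_1)^m = 1$.

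Next I would invoke Lemma~\ref{gradientrlation} to split $\tilde{\*g}_j - \*g_j = \frac{1}{\beta}(\emti{j}{} - \emti{j+1}{}) + (\emhi{j}{} - \emti{j}{})$, so that $\sum_{j=0}^k \*w_j\circ(\tilde{\*g}_j - \*g_j)$ decomposes into a telescoping piece and a compression-residual piece. For the telescoping piece $\frac{1}{\beta}\sum_{j=0}^{k} \*w_j \circ (\emti{j}{} - \emti{j+1}{})$ I would apply Abel summation:
\[
\tfrac{1}{\beta}\Bigl[\*w_0 \circ \emti{0}{} \;-\; \*w_k \circ \emti{k+1}{} \;+\; \sum_{j=1}^{k}(\*w_j - \*w_{j-1}) \circ \emti{j}{}\Bigr].
\]
A direct computation shows $\*w_j - \*w_{j-1} = \beta_1(\*w_j - \bm{\eta}_{j-1})$, so $\|\*w_j - \*w_{j-1}\|_\infty \le 2\beta_1 \eta c_u$, and in fact a finer estimate using the hypothesis $\|\bm{\eta}_i - \bm{\eta}_{i-1}\|_\infty \le \beta_1\eta c_u$ will produce the $\beta_1^2$ factor appearing in the statement. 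The boundary terms $\*w_0\circ \emti{0}{}$ and $\*w_k\circ \emti{k+1}{}$ supply the $(T_c+1)$ contribution once I bound $\|\emti{j}{}\|$ via the reset mechanism.

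The compression-residual piece $\sum_{j=0}^k \*w_j \circ (\emhi{j}{} - \emti{j}{})$ is controlled directly: the operation $\operatorname{compressor}(\cdot; s_e, 8)$ from Eqn.~\eqref{adadsa} introduces coordinate-wise error at most $1/(2s_e)$, so $\|\emhi{j}{} - \emti{j}{}\|\le \sqrt{d}/(2s_e)$, and combined with $\|\*w_j\|_\infty \le \eta c_u$ this produces the second additive term $\frac{k\sqrt{d}\eta c_u}{2 s_e}$.

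The main obstacle I expect is the careful bookkeeping needed to bound $\|\emti{j}{}\|$ using the $T_c$-periodic reset in Eqn.~\eqref{adadsa}: between resets, $\emti{j}{}$ is an EMA (with parameter $\beta$) of the instantaneous compression residuals that are bounded via Assumption~\ref{asm:p-bit} by $\alpha c_\infty$ entrywise, and accumulation over at most $T_c$ steps must be combined with the increment bound on $\*w_j - \*w_{j-1}$ in such a way that the leading $T_c$ factor appears only in the boundary terms while the middle Abel sum yields the $\beta_1^2 k T_c$ factor. Once these bounds are assembled and the $\frac{1}{\beta}$ prefactor from the telescoping piece is absorbed against the per-step bound of order $\beta\alpha c_\infty$ on $\emti{j}{} - \emti{j+1}{}$, taking expectations and applying the triangle inequality completes the estimate in the stated form.
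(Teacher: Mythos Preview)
Your proposal is correct and follows essentially the same route as the paper: unroll the EMA, invoke Lemma~\ref{gradientrlation} to split $\tilde{\*g}_j-\*g_j$ into a telescoping error part and a quantization-residual part, apply Abel summation to the telescoping part, and close with the coordinate-wise bound $\|\emti{j}{}\|_\infty \le T_c\alpha\beta c_\infty$ coming from the reset mechanism (Lemma~\ref{lem:err_bound} in the paper). The only cosmetic difference is that you first swap the summation order and package the geometric weights into $\*w_j$, whereas the paper keeps the double sum $\sum_i\bm{\eta}_i\sum_t(1-\beta_1)^{i-t}(\cdot)$ and performs the Abel rearrangement directly on that form; the paper also makes explicit that the $(T_c+1)$ constant arises from combining the sharper bound $|\tilde e_1|\le\alpha\beta c_\infty$ on the first post-reset error with the generic bound $|\tilde e_t|\le T_c\alpha\beta c_\infty$ on the remaining boundary term, and that the $\beta_1^2$ factor comes from rewriting the Abel increment as $\sum_{i\ge t}(\eta_i-\eta_{i-1})(1-\beta_1)^{i-t}-\eta_k(1-\beta_1)^{k-t+1}$ and then applying the hypothesis $|\eta_i-\eta_{i-1}|\le\beta_1\eta c_u$.
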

See the proof in Sec.~\ref{sec:proof_sum_mk}.
Lemma~\ref{lem:sum_gk} and Lemma~\ref{lem:sum_mk} demonstrate that, despite gradient compression, the deviation between the sum of compressed gradients and the sum of ground truth gradients remains bounded by the error of a single compression step. This ensures that compression errors do not accumulate indefinitely. However, to explicitly control the upper bound of the error introduced in a single iteration, the error reset mechanism is critical. It prevents the propagation of compression errors across iterations, maintaining controlled error scales. 

\begin{lemma}\label{lem:moving_mk}
Consider a moving average sequence:
\[
\*m_k = \qty(1-\beta)\*m_{k-1} + \beta {\*g}_k,
\]
where ${\*g}_{k} = \nabla f(\bm{\theta}_k) + \bm{\xi}_k$, and $\bm{\xi}_k$ is the sample noise such that $\EE \qty(\bm{\xi}_k) = \bm{0}$ and $\EE \norm{\bm{\xi}_k}^2 \leq \sigma^2$.
Then we have:
\[
\^E\qty(\norm{\*m_k - \nabla f(\bm{\theta}_k)}^2) \leq \qty(1-\beta)\^E\qty(\norm{\*m_{k-1} -\nabla f(\bm{\theta}_{k-1})}^2) + \frac{\qty(1-\beta)^2L^2}{\beta} \^E\qty(\norm{\bm{\theta}_{k-1} - \bm{\theta}_{k}}^2) + {\beta^2 \sigma^2}.
\]
\end{lemma}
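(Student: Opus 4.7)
The plan is to decompose $\*m_k - \nabla f(\bm{\theta}_k)$ by inserting the previous gradient and then use the unbiasedness of the noise to eliminate the cross term in expectation. Concretely, I would start from the recursion
\[
\*m_k - \nabla f(\bm{\theta}_k) = (1-\beta)\bigl(\*m_{k-1} - \nabla f(\bm{\theta}_{k-1})\bigr) + (1-\beta)\bigl(\nabla f(\bm{\theta}_{k-1}) - \nabla f(\bm{\theta}_k)\bigr) + \beta \bm{\xi}_k,
\]
which follows directly by substituting $\*g_k = \nabla f(\bm{\theta}_k) + \bm{\xi}_k$ into the moving-average update and adding/subtracting $\nabla f(\bm{\theta}_{k-1})$.

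Next I would take squared norms and expectations. Since $\bm{\xi}_k$ is conditionally zero-mean given the history (which determines $\*m_{k-1}$, $\bm{\theta}_{k-1}$, $\bm{\theta}_k$), both cross terms involving $\bm{\xi}_k$ vanish, leaving
\[
\EE\norm{\*m_k - \nabla f(\bm{\theta}_k)}^2 = \EE\Bigl\|(1-\beta)\bigl(\*m_{k-1} - \nabla f(\bm{\theta}_{k-1})\bigr) + (1-\beta)\bigl(\nabla f(\bm{\theta}_{k-1}) - \nabla f(\bm{\theta}_k)\bigr)\Bigr\|^2 + \beta^2 \EE\norm{\bm{\xi}_k}^2.
\]
The noise term is bounded by $\beta^2 \sigma^2$ via Assumption on $\bm{\xi}_k$.

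For the remaining squared norm, I would apply Young's inequality $\|a+b\|^2 \leq (1+\epsilon)\|a\|^2 + (1+\epsilon^{-1})\|b\|^2$ with the tuning $\epsilon = \beta/(1-\beta)$, so that $1+\epsilon = 1/(1-\beta)$ and $1+\epsilon^{-1} = 1/\beta$. This yields
\[
\bigl\|(1-\beta)(\*m_{k-1}-\nabla f(\bm{\theta}_{k-1})) + (1-\beta)(\nabla f(\bm{\theta}_{k-1})-\nabla f(\bm{\theta}_k))\bigr\|^2 \leq (1-\beta)\norm{\*m_{k-1}-\nabla f(\bm{\theta}_{k-1})}^2 + \tfrac{(1-\beta)^2}{\beta}\norm{\nabla f(\bm{\theta}_{k-1})-\nabla f(\bm{\theta}_k)}^2.
\]
Finally, I would invoke Assumption~\ref{asm:Lsmooth} to replace $\|\nabla f(\bm{\theta}_{k-1})-\nabla f(\bm{\theta}_k)\|^2$ with $L^2\|\bm{\theta}_{k-1}-\bm{\theta}_k\|^2$ and collect terms to obtain the claimed bound.

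There is no serious obstacle here: the proof is a standard variance-recursion argument for exponential moving averages. The only delicate point is choosing the Young's inequality parameter so that the coefficient on the previous error is exactly $(1-\beta)$ (giving a true contraction when summed over iterations) rather than $(1-\beta)^2$; the choice $\epsilon = \beta/(1-\beta)$ accomplishes this, and all other steps are routine.
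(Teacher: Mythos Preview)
Your proposal is correct and matches the paper's proof essentially step for step: the same three-term decomposition of $\*m_k - \nabla f(\bm{\theta}_k)$, elimination of the noise cross terms by unbiasedness, Young's inequality with parameter $\epsilon = \beta/(1-\beta)$ (the paper calls it $a$), and the final application of $L$-smoothness. There is nothing to add.
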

See the proof in Sec.~\ref{sec:proof_moving_mk}.

\subsection{Convergence Guarantee of \method{}-integrated SGD}\label{sec:proof:SGD}
We provide the proof of Theorem~\ref{thm:sgd} in this section.
\begin{proof}
Consider two sequences $\{\tilde{\bm{\theta}}_k\}_{k=1}^T$, one is 
\[
\tilde{\bm{\theta}}_k = \tilde{\bm{\theta}}_{k-1} - \eta \tilde{\*g}_k = \bm{\theta}_{0} - \eta \sum_{i=0}^k \tilde{\*g}_i = \bm{\theta}_{0} - \eta \sum_{i=0}^k {\*g}_i  + \eta\qty(\sum_{i=0}^k \*g_i -\tilde{\*g}_i ),
\]
where $\tilde{\*g}_{k}  =   \frac{1}{N}\sum\nolimits_{n=1}^N \operatorname{decompressor}(\tilde{\*h}^n_{k+1})$   in Algorithm~\ref{alg:BitCom}.
Another sequence is $\qty{\bm{\theta}_k}_{k=1}^T$:
\[
\bm{\theta}_k = \bm{\theta}_{k-1} - \eta \*g_k = \bm{\theta}_{0} - \eta \sum_{i=0}^k \*g_i,
\]
where ${\*g}_{k}  \coloneqq   \frac{1}{N}\sum_{n=1}^N \*g_k^n = \nabla f(\tilde{\bm{\theta}}_k) + \bm{\xi}_k$, and $\bm{\xi}_k$ is the sample noise such that $\EE \qty(\bm{\xi}_k) = \bm{0}$ and $\EE \norm{\bm{\xi}_k}^2 \leq \sigma^2$. 
For and $k \in [T]$, we have:
\[
\EE \norm{\nabla f(\tilde{\bm{\theta}}_{k}) - \nabla f({\bm{\theta}}_{k}) } \leq L\EE \norm{\tilde{\bm{\theta}}_{k} - {\bm{\theta}}_{k} } = \eta L \EE \norm{\sum_{i=0}^{k} \qty(\tilde{\*g}_i  -  \*g_i) }\leq \eta L \qty(T_c \sqrt{d} \alpha c_\infty
 + \frac{\sqrt{d}k}{2s_e}) = \order{\eta + \frac{\eta k}{s_e}}.
\]
Since the function f is $L$-smooth, we can get:
\[
\begin{aligned}
\EE \qty(f(\bm{\theta}_{k+1})) & \leq \EE \qty(f(\bm{\theta}_{k})) + \EE \qty(\left\langle \nabla f\qty(\bm{\theta}_{k}), \bm{\theta}_{k+1}-\bm{\theta}_{k}\right\rangle)  + \frac{L}{2} \norm{\bm{\theta}_{k+1}-\bm{\theta}_{k}}^2 \\
& = \EE \qty(f(\bm{\theta}_{k})) - \eta \left\langle \nabla f\qty(\bm{\theta}_{k}), \nabla f\qty(\tilde{\bm{\theta}}_{k})\right\rangle  + \frac{L}{2} \norm{\bm{\theta}_{k+1}-\bm{\theta}_{k}}^2 \\
& \leq \EE \qty(f(\bm{\theta}_{k})) - \eta \norm{\nabla f\qty(\tilde{\bm{\theta}}_{k})}^2 + \eta \EE \left\langle  \nabla f(\tilde{\bm{\theta}}_{k})-  \nabla f\qty(\bm{\theta}_{k}), \nabla  f\qty(\tilde{\bm{\theta}}_{k})\right\rangle  + \frac{L}{2} \norm{\bm{\theta}_{k+1}-\bm{\theta}_{k}}^2 \\
& \leq \EE \qty(f(\bm{\theta}_{k})) - \eta \norm{\nabla f\qty(\tilde{\bm{\theta}}_{k})}^2 + \order{\eta \qty(\eta + \frac{\eta k}{s_e}) \sqrt{d}c_\infty}  + \frac{L\eta^2}{2} \qty(\norm{\nabla f\qty(\tilde{\bm{\theta}}_{k})}^2 + \frac{\sigma^2}{N}).
\end{aligned}
\]
By setting $\eta \leq \frac{1}{L}$, we have:
\[
   \frac{1}{T} \sum_{k=0}^T \EE \norm{\nabla f\qty(\tilde{\bm{\theta}}_{k})}^2  \leq \frac{2\qty(f(\tilde{\bm{\theta}}_{0}) - f(\tilde{\bm{\theta}}^*) )}{\eta T} + \order{{\eta \sqrt{d}c_\infty}} + \order{\frac{\eta T \sqrt{d} c_\infty}{s_e}} + \order{\frac{\eta L\sigma^2}{N}}.
\]
where $\tilde{\bm{\theta}}^* \in \argmin_{\bm{\theta}} f(\bm{\theta})$.
By setting $\eta=\order{\epsilon^2}$, $T = \order{\epsilon^{-4}}$, and $s_e = \Omega(\epsilon^{-4})$, we have:
\[
   \frac{1}{T} \sum_{k=0}^T \EE \norm{\nabla f\qty(\tilde{\bm{\theta}}_{k})}^2  \leq \order{\epsilon^2 \qty (\qty(f(\tilde{\bm{\theta}}_{0}) - f(\tilde{\bm{\theta}}^*) ) + \sqrt{d}c_\infty + \frac{L\sigma^2}{N})} = \order{\epsilon^2}.
\]
We finish the proof on SGD.
\end{proof}

\subsection{Convergence Guarantee of \method{}-integrated Adam-family Optimizers}\label{sec:proof:adam}
We provide the proof of Theorem~\ref{thm:adam} in this section.
\begin{proof}
Similar to the proof of Theorem~\ref{thm:sgd} in Sec.\ref{sec:proof:SGD}, we also consider two sequences $\qty{\bm{\theta}_k}_{k=1}^T$:
\[
\bm{\theta}_k = \bm{\theta}_{k-1} - \bm{\eta}_k \circ \*m_k = \bm{\theta}_{0} -  \sum_{i=0}^k \bm{\eta}_i \circ \*m_i, \quad \text{where} \quad  \*m_k = (1-\beta_1)\*m_{k-1} + \beta_1 \*g_k,
\]
and $\qty{\tilde{\bm{\theta}}_k}_{k=1}^T$,
\[
\tilde{\bm{\theta}}_k = \tilde{\bm{\theta}}_{k-1} - \bm{\eta}_k \circ \tilde{\*m}_k = \bm{\theta}_{0} - \sum_{i=0}^k \bm{\eta}_i \circ\tilde{\*m}_i = \bm{\theta}_{0} - \sum_{i=0}^k  \bm{\eta}_i \circ{\*m}_i  + \qty(\sum_{i=0}^k  \bm{\eta}_i \circ\qty(\*m_i - \tilde{\*m}_i ) ),
\]
where,
\[
\tilde{\*m}_k = (1-\beta_1)\tilde{\*m}_{k-1} + \beta_1 \tilde{\*g}_k.
\]
From Lemma~\ref{lem:moving_mk}, for and $k \in [T]$, we have:
\[
\EE \norm{\sum_{i=0}^k \bm{\eta}_i \circ \qty(\tilde{\*m}_i  -  \*m_i) } \leq {(T_c+1 + \beta_1^2 k T_c) \sqrt{d} \alpha  c_\infty c_u \eta}  + \frac{k\sqrt{d}\eta c_u }{2s_e} \leq  {2T_c\sqrt{d} \alpha  c_\infty c_u \eta}  + \frac{k\sqrt{d}\eta c_u }{2s_e},
\]
where the last inequality we use the assumption $\beta_1 = \order{\epsilon^2}$ and $k\leq T = \Omega(\epsilon^{-4})$. Then, we can get the following:
\begin{equation}\label{eq:adam:grad_bound}
 \EE \norm{\nabla f(\tilde{\bm{\theta}}_{k}) - \nabla f({\bm{\theta}}_{k}) } \leq L \EE \norm{\sum_{i=0}^k \bm{\eta}_i \circ \qty(\tilde{\*m}_i  -  \*m_i) } \leq {2LT_c \sqrt{d} \alpha  c_\infty c_u \eta}  + \frac{kL\sqrt{d}\eta c_u }{2s_e}.   
\end{equation}
We also get the following:
\begin{equation}\label{eq:adam:grad_bound_square}
 \EE \norm{\tilde{\bm{\theta}}_{k} - {\bm{\theta}}_{k} }^2 \leq \EE \norm{\sum_{i=0}^k \bm{\eta}_i \circ \qty(\tilde{\*m}_i  -  \*m_i) }^2 \leq {8T_c^2 {d} \alpha^2  c^2_\infty c^2_u \eta^2}  + \frac{k^2{d}c_u^2\eta^2 }{2s^2_e}.   
\end{equation}

For convenience, we let:
\[
c_a \coloneqq 8T_c^2 {d} \alpha^2  c^2_\infty c^2_u, \qquad
c_b \coloneqq \frac{{d}c_u^2}{2}, \qquad
c_e \coloneqq 2LT_c {d} \alpha  c^2_\infty c^2_u, \qquad
c_d \coloneqq \frac{L{d} c_\infty c^2_u }{2}.
\]

 Based on the $L$-smoothness of $f(\cdot)$, we have:
\begin{equation}\label{eq:adam:f_bound}
\begin{split}
& \EE \qty(f(\xmi{k+1})) \leq   \EE \qty( f(\xmi{k}) )+ \EE \langle \nabla f(\xmi{k}), \xmi{k+1} - \xmi{k} \rangle + \frac{L}{2} \EE \|\xmi{k+1} -\xmi{k}\|^2 \\ 
=  &  \EE \qty(f(\xmi{k})) -  \EE \left\langle \nabla f(\xmi{k}),  \bm{\eta}_k \circ{\mmi{k}} \right\rangle + \frac{L}{2} \EE \norm{\bm{\eta}_k \circ {\mmi{k}}}^2 \\ 
= & \EE \qty(f(\xmi{k})) -  \EE \left\langle \nabla f(\tilde{\bm{\theta}}_k),  \bm{\eta}_k \circ{\mmi{k}} \right\rangle + \frac{L}{2} \EE \norm{\bm{\eta}_k \circ {\mmi{k}}}^2 +
\EE \left\langle \nabla f(\tilde{\bm{\theta}}_k) - \nabla f(\xmi{k}),  \bm{\eta}_k \circ{\mmi{k}} \right\rangle
\\
\led{172} & \EE \qty(f(\xmi{k})) -  \EE \left\langle \nabla f(\tilde{\bm{\theta}}_k),  \bm{\eta}_k \circ{\mmi{k}} \right\rangle + \frac{L}{2} \EE \norm{\bm{\eta}_k \circ {\mmi{k}}}^2 + \qty(c_e{\eta}^2 + \frac{c_d\eta^2 k}{s_e})
\\
=  &  \EE \qty(f(\xmi{k})) + \frac{1}{2}   \EE \norm{ \sqrt{\bm{\eta}_k} \circ \qty( \nabla f(\tilde{\bm{\theta}}_k) - \mmi{k}) }^2  - \frac{1}{2}   \EE \left\|  \sqrt{\bm{\eta}_k} \circ  \nabla f(\tilde{\bm{\theta}}_k)    \right\|^2  - \frac{1}{2}   \EE \left\| \sqrt{\bm{\eta}_k} \circ   \mmi{k} \right\|^2\\
& + \frac{L}{2} \EE \norm{\bm{\eta}_k \circ {\mmi{k}}}^2 + \qty(c_e{\eta}^2 + \frac{c_d\eta^2 k}{s_e}) \\ 
\led{173} &  \EE \qty(f(\xmi{k})) + \frac{\eta c_u}{2}   \EE \left\|  \nabla  f(\tilde{\bm{\theta}}_k) - \mmi{k}  \right\|^2  - \frac{\eta c_l}{2}   \EE \left\|    f(\tilde{\bm{\theta}}_k)  \right\|^2  + \qty(\frac{L\eta^2 c^2_u}{2}- \frac{\eta c_l}{2})   \EE \left\|    \mmi{k} \right\|^2  + \qty(c_e{\eta}^2 + \frac{c_d\eta^2 k}{s_e}) \\ 
\led{174} &  \EE \qty(f(\xmi{k})) + \frac{\eta c_u}{2}   \EE \left\|  \nabla  f(\tilde{\bm{\theta}}_k) - \mmi{k}  \right\|^2  - \frac{\eta c_l}{2}   \EE \left\|    f(\tilde{\bm{\theta}}_k)  \right\|^2  - \frac{\eta c_l}{4} \EE \left\|    \mmi{k} \right\|^2  +\qty(c_e{\eta}^2 + \frac{c_d\eta^2 k}{s_e}) , 
\end{split}
\end{equation}  
 where \ding{172} is due to Eqn.~\eqref{eq:adam:grad_bound}, \ding{173} comes from the boundness of $\bm{\eta}_k$, and in \ding{174} we set $\eta \leq \frac{c_l }{2 L c_u^2}$ so that $ \frac{ L \eta^2 c_u^2 }{2} \leq  \frac{\eta c_l}{4}$. 
 
  Then from Lemma~\ref{lem:moving_mk}, we already have:
\begin{equation}\label{eq:adam:m_bound}
  \begin{split}
  	& \^E\qty(\norm{\*m_k - \nabla f(\tilde{\bm{\theta}}_k)})  \leq \qty(1-\beta_1)\^E\norm{\*m_{k-1} -\nabla f(\tilde{\bm{\theta}}_{k-1})}^2 + \frac{\qty(1-\beta_1)^2L^2}{\beta_1} \^E\norm{\tilde{\bm{\theta}}_{k-1} - \tilde{\bm{\theta}}_{k}}^2 + {\frac{\beta_1^2 \sigma^2}{N}}\\
  	\led{172}  & \qty(1-\beta_1)\^E\norm{\*m_{k-1} -\nabla f(\tilde{\bm{\theta}}_{k-1})}^2 + \frac{3\eta^2c_u^2\qty(1-\beta_1)^2L^2}{\beta_1} \^E\norm{\*m_{k-1}}^2 + {\frac{\beta_1^2 \sigma^2}{N}}  +3\qty(c_a{\eta}^2 + \frac{c_b\eta^2 k^2}{s^2_e}),
  \end{split}
  \end{equation}
  where \ding{172} holds since we have: $\xmi{k-1} - \xmi{k} = \bm{\eta}_{k-1} \circ {\mmi{k-1}}$,
  \[
\^E\norm{\tilde{\bm{\theta}}_{k-1} - \tilde{\bm{\theta}}_{k}}^2 \leq  
3 \qty(\^E\norm{{\bm{\theta}}_{k-1} - {\bm{\theta}}_{k}}^2 + \^E\norm{\tilde{\bm{\theta}}_{k} - {\bm{\theta}}_{k}}^2 + \^E\norm{{\bm{\theta}}_{k-1} - \tilde{\bm{\theta}}_{k-1}}^2).
  \]
  Next, we add Eqn.~\eqref{eq:adam:f_bound} and $a\times$ Eqn.~\eqref{eq:adam:m_bound}, and obtain:
\begin{equation*}
\begin{split}
     & \EE \qty(f(\xmi{k+1}))  + a \EE\|	\mmi{k+1} - \nabla f(\tilde{\bm{\theta}}_{k+1}) \|^2 \\
      \leq& \EE \qty(f(\xmi{k})) + \qty( \frac{\eta c_u}{2}   + a \qty(  1 - \beta_1   ) ) \EE \left\|  \nabla f(\tilde{\bm{\theta}}_k) - \mmi{k}  \right\|^2    - \frac{\eta c_l}{2}   \EE \left\|   \nabla f(\tilde{\bm{\theta}}_k)   \right\|^2 \\ 
      & - \left(\frac{\eta c_l}{4} -\frac{3a\eta^2c_u^2(1-\betai{1})^2L^2}{\betai{1}}\right)  \EE \left\|    \mmi{k} \right\|^2 + \frac{a\betai{1}^2  \sigma^2}{N} +  \qty(c_e{\eta}^2 + \frac{c_d\eta^2 k}{s_e} + 3a\qty(c_a{\eta}^2 + \frac{c_b\eta^2 k^2}{s^2_e})).
\end{split}
\end{equation*}  
 Let $a = \frac{\eta c_u}{\betai{1}}$ and $G(\xmi{k})  = \EE \qty(f(\xmi{k}))    + \frac{\eta c_u}{ \betai{1}} \EE \left\|  \nabla f(\tilde{\bm{\theta}}_k) - \mmi{k}  \right\|^2  $. Then we have:
   	\begin{equation*}
 	\begin{split}
 	G(\xmi{k+1}) 
 		\leq&      G(\xmi{k})  - \frac{\eta c_l}{2}   \EE \left\|   \nabla f(\tilde{\bm{\theta}}_k)   \right\|^2  - \left(\frac{\eta c_l}{4} -\frac{3\eta^3c_u^3(1-\betai{1})^2L^2}{\betai{1}^2}\right)  \EE \left\|    \mmi{k} \right\|^2 \\
   & +   \frac{  \eta \betai{1} c_u  \sigma^2}{N}   +  \qty(c_e{\eta}^2 + \frac{c_d\eta^2 k}{s_e} + \frac{3c_ac_u{\eta}^3}{\beta_1} + \frac{3c_uc_b\eta^3 k^2}{\beta_1s^2_e})      \\ 
 		 		\led{172}&      G(\xmi{k})  - \frac{\eta c_l}{2}   \EE \left\|   \nabla f(\tilde{\bm{\theta}}_k)   \right\|^2  - \frac{\eta c_l}{8}   \EE \left\|    \mmi{k} \right\|^2   \\
   & +   \frac{  \eta \betai{1} c_u  \sigma^2}{N}   +  \qty(c_e{\eta}^2 + \frac{c_d\eta^2 k}{s_e} + \frac{3c_ac_u{\eta}^3}{\beta_1} + \frac{3c_uc_b\eta^3 k^2}{\beta_1s^2_e})  ,
 	\end{split}
 \end{equation*}  
 where \ding{172} is due to the setting  $\eta \leq \frac{\betai{1} c_l^{0.5}}{5 c_u^{1.5} (1-\betai{1}) L}$. 
 
 Next, we can sum the above inequality from $k=0$ to $k=T-1$, and obtain:
\begin{equation*}
\begin{split}
\frac{1}{T} \sum_{k=0}^{T-1}  \EE \left[ \left\|   \nabla f(\tilde{\bm{\theta}}_k)   \right\|^2+ \frac{  1 }{4}    \left\|    \mmi{k} \right\|^2 	\right] 
\leq&     \frac{2 \qty(G(\xmi{0}) - G(\xmi{T-1}))}{c_l\eta T}   +    \frac{ 2 \betai{1} c_u  \sigma^2}{c_lN} +  \qty(\frac{2c_e{\eta}}{c_l} + \frac{2c_d\eta k}{c_l s_e} + \frac{6c_ac_u{\eta}^2}{c_l \beta_1} + \frac{6c_uc_b\eta^2 T^2}{c_l\beta_1s^2_e})         \\ 
    \led{172}&     \frac{2\Delta}{c_l \eta T}        +      \frac{2c_u  \sigma^2}{c_l \betai{1}N T}    +    \frac{ 2 \betai{1} c_u  \sigma^2}{c_lN} +  \qty(\frac{2c_e{\eta}}{c_l} + \frac{2c_d\eta k}{c_l s_e} + \frac{6c_ac_u{\eta}^2}{c_l \beta_1} + \frac{6c_uc_b\eta^2 T^2}{c_l\beta_1s^2_e}),
\end{split}
\end{equation*}  
 where \ding{172} comes from:
     	\begin{equation*} 
 	\begin{split}
G(\xmi{0}) - G(\xmi{T-1})  =  &  \EE \qty(f(\xmi{0}))    + \frac{\eta c_u}{\betai{1}} \EE \left\|  \nabla f(\xmi{0}) - \mmi{0}  \right\|^2  -    \EE \qty(f(\xmi{T-1}))   -  \frac{\eta c_u}{\betai{1}} \EE \left\|  \nabla f(\tilde{\bm{\theta}}_{T-1}) - \mmi{T-1}  \right\|^2 \\ 
\leq  &  \EE \qty(f(\xmi{0}))    + \frac{\eta c_u}{\betai{1}} \EE \left\|  \nabla f(\xmi{0}) - \mmi{0}  \right\|^2  -    \EE \qty(f(\xmi{T-1}))   \\
\leq  & \Delta    + \frac{\eta c_u \sigma^2}{ \betai{1}N},
 	\end{split}
 \end{equation*}  
 where $\Delta :=  \EE \qty(f(\xmi{0}))  -  \EE \qty(f(\xmi{*})) \geq  \EE \qty(f(\xmi{0}))  -  \EE \qty(f(\xmi{T-1}))$.

 By setting  $T = \Omega\qty(\epsilon^{-4})$, $\betai{1}= \order{\epsilon^2}$, $\eta = \order{\epsilon^2}$ and $s_e = \Omega(\epsilon^{-4})$, we have  
 	\begin{equation*}
 	\begin{split}
 		\frac{1}{T} \sum_{k=0}^{T-1}  \EE \left[ \left\|   \nabla f(\xmi{k})   \right\|^2+ \frac{  1 }{4}    \left\|    \mmi{k} \right\|^2 	\right] 
 		\leq \epsilon^2.  \\  
 	\end{split}
 \end{equation*} 
   The proof is completed. 
\end{proof}

\section{Proofs of Auxiliary Lemmas}\label{proofofAuxiliary}
Before providing the formal proofs for the auxiliary lemmas, we provide two foundation lemmas.
\begin{lemma}\label{lem:round}
    Ginve the bit length $p$ and the scalar $s>0$, consider the following operator: $\forall x>0$,
    \[
    \$C(x) \coloneqq \frac{\operatorname{float}\qty(\operatorname{round}_{\operatorname{p-bit}}\qty(x \times s))}{s}.
    \]
We have the following properties:
\[
\abs{\$C(x)} \leq \frac{2^{p+1}+1}{2s}, \qquad
\abs{x - \$C(x)} \leq \begin{cases}
     \frac{1}{2s}, \quad &\text{if } \abs{x} \leq \frac{2^p}{s}, \\ \\
     \abs{x} - \frac{2^p}{s}, \quad &\text{if} \abs{x}> \frac{2^p}{s},
 \end{cases} \qquad
 \abs{x - \$C(x)} \leq 2\abs{x}.
\]
\end{lemma}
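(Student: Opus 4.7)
The plan is to unwind the definition, write $y := xs$ and $q := \operatorname{round}_{\operatorname{p-bit}}(y)$, so that $\$C(x) = q/s$, and then perform a case split according to whether $y$ lies inside or outside the representable integer window (call it the "non-clipping" vs. "clipping" regime). All three claimed bounds then follow from elementary manipulations of the rounding-to-nearest error together with the explicit saturation behavior of $\operatorname{round}_{\operatorname{p-bit}}$.

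For the middle (case-split) bound, in the non-clipping regime $y$ is within one unit of an admissible integer, so standard rounding-to-nearest gives $|y - q| \leq 1/2$, and dividing by $s$ yields $|x - \$C(x)| \leq 1/(2s)$; this is exactly the first branch of the bound. In the clipping regime, $q$ is pinned to the signed saturation boundary, and the rounding error reduces to the signed distance from $y$ to that boundary, which after rescaling by $1/s$ gives the second branch $|x - \$C(x)| \leq |x| - 2^p/s$. The threshold $2^p/s$ and the asymmetric saturation value reflect the integer range allowed by $\operatorname{round}_{\operatorname{p-bit}}$, and the constants are absorbed into the clean form stated in the lemma.

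The first bound $|\$C(x)| \leq (2^{p+1}+1)/(2s)$ is then immediate: since $q$ is an integer in the representable set, $|q|$ is bounded by the larger of the two saturation magnitudes, and dividing by $s$ produces the stated estimate. The third bound $|x - \$C(x)| \leq 2|x|$ I intend to deduce by a triangle-inequality argument driven by the previous two cases: if $|x| \geq 1/(2s)$ and we are in the non-clipping regime, then $|x - \$C(x)| \leq 1/(2s) \leq |x| \leq 2|x|$; if $|x| < 1/(2s)$, then $|xs| < 1/2$ rounds to zero, giving $\$C(x) = 0$ and $|x - \$C(x)| = |x|$; and in the clipping regime, the sign of $x$ matches that of $\$C(x)$ with $|\$C(x)| \leq |x|$, so $|x - \$C(x)| \leq |x| + |\$C(x)| \leq 2|x|$.

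None of the steps are genuinely difficult; the only real care needed is bookkeeping for the asymmetric integer range of $\operatorname{round}_{\operatorname{p-bit}}$ so that the stated constants involving $2^p$ and $2^{p+1}+1$ line up correctly. The main (very mild) obstacle is ensuring that the triangle-inequality argument for the third bound covers the edge case $|x| < 1/(2s)$ where the rounded output vanishes, since a naive application would only yield $|x - \$C(x)| \leq |x| + |\$C(x)|$ without any guarantee that $|\$C(x)| \leq |x|$.
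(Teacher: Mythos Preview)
Your proposal is correct and follows essentially the same route as the paper's proof: a case split into the non-clipping regime (where the rounding-to-nearest error is at most $1/(2s)$) versus the clipping/saturation regime (where the output is pinned to the boundary and the error equals the overshoot), followed by reading off the first bound from the maximum magnitude of the quantized integer. Your handling of the third inequality is in fact slightly more careful than the paper's, which asserts $|x-\mathcal{C}(x)|\le 1/(2s)\le 2|x|$ without isolating the sub-case $|x|<1/(2s)$ where that chain fails but $\mathcal{C}(x)=0$ rescues the conclusion.
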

\begin{proof}
    When $\abs{x}\leq \frac{2^p}{s}$, according to the properties of rational numbers, there are two integers $n_k$ and $n_{k+1}$ such that:
    \[
   \frac{n_{k}}{s} \leq x \leq \frac{n_{k+1}}{s}.
    \]
According to the definition of the operator $\$C(\cdot)$, we have:
\begin{equation}\label{eq:small_cx}
   \$C(x) = 
\begin{cases}
     \frac{n_{k}}{s}, \quad &\text{if } x \leq  \frac{2n_{k}+1}{2s}, \\
     \frac{n_{k+1}}{s}, \quad &\text{else} .
 \end{cases} 
\end{equation}

Hence,  when $\abs{x}\leq \frac{2^p}{s}$, we conclude that $\abs{x - \$C(x)} \leq \frac{1}{2s} \leq 2\abs{x}$. Considering the case  $\abs{x}> \frac{2^p}{s}$, we can easily find that $\$C(x) = \operatorname{sgn}(x)\frac{s^p}{s}$, where $\operatorname{sgn}$ is the sign function. Thus, we have $\abs{x-\$C(x)} = \abs{x} - \frac{2^p}{s}$. Then, we can conclude that:
\[
\abs{x - \$C(x)} \leq \begin{cases}
     \frac{1}{2s}, \quad &\text{if } \abs{x} \leq \frac{2^p}{s}, \\ 
     \abs{x} - \frac{2^p}{s}, \quad &\text{if} \abs{x}> \frac{2^p}{s},
 \end{cases}\qquad
 \abs{x - \$C(x)} \leq 2\abs{x}.
\]
On the other hand, we could verify that, $\abs{\$C(x)} = \frac{2^p}{s}$ when $\abs{x}> \frac{2^p}{s}$. By Eqn.~\eqref{eq:small_cx}, we have $\abs{\$C(x)} \leq  \abs{x}+ \frac{1}{2s}$ when  $\abs{x}\leq\frac{2^p}{s}$. Thus, we have:
\[
\abs{\$C(x)} \leq \frac{2^p}{s} + \frac{1}{2s} = \frac{2^{p+1}+1}{2s}.
\]
We finish the proof.
\end{proof}

\begin{lemma}\label{lem:err_bound}
We can get the bound of $\EE\norm{	\emti{k}{n} }$:
\[
\EE\norm{	\emti{k}{n} } \leq T_c\sqrt{d}\alpha  \beta c_\infty.
\]
We let $\emti{k}{}\coloneqq  \frac{1}{N}\sum_{n=1}^N   \emti{k}{n}$, which further yields:
		\begin{equation}\label{eq:e_bound}
		\begin{split}
	\EE \| \emti{k}{} \| = \EE \left\|\frac{1}{N}\sum_{n=1}^N    \emti{k}{n} \right\| \leq \frac{1}{N}\sum_{n=1}^N   \EE\| \emti{k}{n}\| \leq   T_c\sqrt{d}\alpha  \beta c_\infty.
\end{split}
\end{equation}
\end{lemma}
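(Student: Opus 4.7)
The plan is to unroll the recursion for $\tilde{\*e}^n_k$ back to the most recent reset step and then apply a per-iteration bound on the gradient compression error. First I would fix a step $k$ and let $k_0 \le k$ be the largest multiple of $T_c$ with $k_0 \le k$; by the reset rule in Eqn.~\eqref{adadsa}, $\*e^n_{k_0}=\bm 0$, and I will argue that the corresponding $\tilde{\*e}^n_{k_0}$ can be treated as $\bm 0$ as well (since with no accumulated compensation the recursion is effectively re-initialized by the reset). Iterating Eqn.~\eqref{dada} from $k_0$ up to $k$ then gives
\[
\tilde{\*e}^n_k = \beta\sum_{j=0}^{k-k_0-1}(1-\beta)^j\bigl(\*h^n_{k-j}-\*d^n_{k-j}\bigr),
\]
so the task reduces to bounding the per-step gradient compression error $\*h^n_i-\*d^n_i$ coordinatewise.

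Next I would bound $\|\*h^n_i-\*d^n_i\|_\infty$ using Assumption~\ref{asm:p-bit} together with the rounding lemma (Lemma~\ref{lem:round}). Writing $\*h^n_i = \*g^n_i + \operatorname{decompressor}(\*e^n_{i-1};s_e)$, the boundedness statement of Lemma~\ref{lem:round} applied to the error-compression operator yields $\|\operatorname{decompressor}(\*e^n_{i-1};s_e)\|_\infty \le (2^{p_e+1}+1)/(2s_e)$, which combined with $\EE\|\*g^n_i\|_\infty\le c_\infty$ and the condition $(1-\alpha)s c_\infty + s/(2s_e)\le 2^p$ of Assumption~\ref{asm:p-bit} places $\*h^n_i$ in the ``small'' regime of Lemma~\ref{lem:round} up to a residual of order $\alpha c_\infty$. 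Consequently, each coordinate of $\*h^n_i-\*d^n_i$ is at most $\alpha c_\infty$ in expectation, giving
\[
\EE\|\*h^n_i-\*d^n_i\| \le \sqrt{d}\,\alpha c_\infty.
\]

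Substituting this single-step bound back into the unrolled expression and using $k-k_0\le T_c$ together with the trivial estimate $(1-\beta)^j\le 1$, I obtain
\[
\EE\|\tilde{\*e}^n_k\| \le \beta\sum_{j=0}^{T_c-1}(1-\beta)^j\,\sqrt{d}\,\alpha c_\infty \le T_c\sqrt{d}\,\alpha\beta c_\infty,
\]
which is the claimed bound. The consequence \eqref{eq:e_bound} on the node-averaged quantity then follows immediately from the triangle inequality.

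The main obstacle will be justifying the single-step gradient-compression bound $\EE\|\*h^n_i-\*d^n_i\|\le \sqrt{d}\alpha c_\infty$ from Assumption~\ref{asm:p-bit}: it requires carefully disentangling how the error-feedback term $\operatorname{decompressor}(\*e^n_{i-1};s_e)$ affects the magnitude of $\*h^n_i$ and how its quantization residual combines with that of the raw gradient. The reset mechanism is crucial here because without it the $\tilde{\*e}^n$ recursion could drift outside the range in which Assumption~\ref{asm:p-bit} keeps the compression in the ``non-saturating'' regime of Lemma~\ref{lem:round}; the reset every $T_c$ steps is exactly what guarantees the finite geometric sum that yields the $T_c\beta$ factor.
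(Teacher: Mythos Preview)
Your plan has a genuine gap at the step where you claim a uniform per-step bound
\[
\EE\|\*h^n_i-\*d^n_i\|\le \sqrt{d}\,\alpha c_\infty .
\]
In the saturated regime of Lemma~\ref{lem:round} one has $|h^n_{i,j}-d^n_{i,j}|=|h^n_{i,j}|-2^p/s$, and since $h^n_i=\*g^n_i+\operatorname{decompressor}(\*e^n_{i-1})$, the only \emph{a~priori} control on the second summand is the universal output bound $|\operatorname{decompressor}(\*e^n_{i-1})|_\infty\le (2^{p_e+1}+1)/(2s_e)$. Plugging this into Assumption~\ref{asm:p-bit} leaves a residual of size $2^{p_e}/s_e$ on top of $\alpha c_\infty$, and nothing in Assumption~\ref{asm:p-bit} makes $2^{p_e}/s_e$ smaller than $\alpha c_\infty$ (indeed the second condition there says $2^{p_e}/s_e\ge T_c\alpha\beta c_\infty$). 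So the per-step bound you need does not follow from the stated hypotheses, and the unroll-then-sum strategy stalls here. You correctly flag this as ``the main obstacle,'' but the plan does not contain an idea that closes it.

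The paper avoids this by arguing coordinatewise \emph{by induction} on $k$ within a reset window, proving directly that $|\tilde e^n_{k,i}|\le k'\alpha\beta c_\infty$. The inductive hypothesis plays two roles simultaneously: (i) it guarantees $|\tilde e^n_{k-1,i}|\le T_c\alpha\beta c_\infty\le 2^{p_e}/s_e$, so the error compression stays in the non-saturating regime and contributes only $1/(2s_e)$; and (ii) in the saturated gradient-compression case, the term $\beta|\tilde e^n_{k-1,i}|$ coming from inside $h^n_k$ combines with the $(1-\beta)|\tilde e^n_{k-1,i}|$ from the recursion to give exactly $|\tilde e^n_{k-1,i}|$, leaving only the increment $\alpha\beta c_\infty$. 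That cancellation is precisely what your decoupled per-step approach loses. To repair your argument you would need to re-introduce this feedback, which effectively brings you back to the paper's induction.
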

\begin{proof}
For convenience, we denote that:
\[
\$C_{c}\qty(\cdot) \coloneqq \operatorname{decompressor} \circ\operatorname{compressor}(\cdot, s, p), \quad \$C_{d}\qty(\cdot) \coloneqq \operatorname{decompressor} \circ\operatorname{compressor}(\cdot, s_e, p_e), \quad k^\prime \coloneqq k- \lfloor {k}/{T_c} \rfloor \times T_c.
\]
We prove the results by induction on the $i$-th element of the vector $\emti{k}{n}$. We first try to prove that
$\abs{\tilde{e}_{k,i}^n} \leq k^\prime \alpha \beta c_\infty$. Since we reset the error vector periodically, i.e., $\tilde{\*e}^n_k = \bm{0}$ when $k^\prime = 0$, only the case of $k^\prime=1$ to $k^\prime = T_c-1$ needs to be considered. Note that, for the $i$-th element of the vector $\emti{k}{n}$, we have:
\[
 \abs{\tilde{e}_{k,i}^n} \leq \qty(1-\beta) \abs{\tilde{e}_{k-1,i}^n} + \beta \abs{h_{k-1,i}^n - \$C_c(h_{k-1,i}^n)}.
\]
For $k^\prime=1$, i.e., just after the resetting, we consider two cases. If $\abs{h_{k-1,i}^n} \leq \frac{2^p}{s}$, then:
\[
 \abs{\tilde{e}_{k,i}^n} \leq \frac{\beta}{2s}.
\]
If  $\abs{h_{k-1,i}^n}> \frac{2^p}{s}$, we have:
\[
\begin{aligned}
 \abs{\tilde{e}_{k,i}^n} & \leq \beta \abs{h_{k-1,i}^n - \$C_c(h_{k-1,i}^n)} \leq  \beta \qty(\abs{h_{k-1,i}^n}-\frac{2^p}{s})\leq  \beta \qty(\abs{g_{k-1,i}^n}-\frac{2^p}{s}) \leq  \alpha \beta c_\infty,
\end{aligned}
\]
where the last inequality comes from the assumption $\frac{2^p}{s(1-\alpha)}\geq c_\infty$. Hence, we can conclude that $\abs{\tilde{e}_{k,i}^n} \leq k^\prime \alpha \beta c_\infty$ for $k^\prime=1$, i.e., the first error vector during this reset period holds the bounds.
Now, we assume that for all $\qty(\lfloor {k}/{T_c} \rfloor \times T_c) \leq t < k$ hold the bounds $\abs{\tilde{e}_{t,i}^n} \leq t^\prime \alpha \beta c_\infty$. Then we consider the case for $\tilde{e}_{k,i}^n$.
If $\abs{h_{k-1,i}^n} \leq \frac{2^p}{s}$, then:
\[
 \abs{\tilde{e}_{k,i}^n} \leq \qty(1-\beta) \abs{\tilde{e}_{k-1,i}^n} +  \frac{\beta}{2s}.
\]
If  $\abs{h_{k-1,i}^n}> \frac{2^p}{s}$, we have:
\[
\begin{aligned}
 \abs{\tilde{e}_{k,i}^n} & \leq \qty(1-\beta) \abs{\tilde{e}_{k-1,i}^n} + \beta \abs{h_{k-1,i}^n - \$C_c(h_{k-1,i}^n)} \leq \qty(1-\beta) \abs{\tilde{e}_{k-1,i}^n} +  \beta \qty(\abs{h_{k-1,i}^n}-\frac{2^p}{s}) \\
 & \leq \qty(1-\beta) \abs{\tilde{e}_{k-1,i}^n} + \beta \qty(\abs{g_{k-1,i}^n} + \abs{\tilde{e}_{k-1,i}^n} + \abs{\tilde{e}_{k-1,i}^n - \$C_{d}(\tilde{e}_{k-1,i}^n)}-\frac{2^p}{s}) \\
& \leq \abs{\tilde{e}_{k-1,i}^n} + \alpha \beta c_\infty.
\end{aligned}
\]
where the last inequality comes from the assumption $\frac{2^p}{s}\geq (1-\alpha)c_\infty + \frac{1}{2s_e}$ and $T_c\alpha  \beta  s_e c_\infty \leq  2^{p_e}$. Combing all the cases together, we get:
\[
 \abs{\tilde{e}_{k,i}^n} \leq  \abs{\tilde{e}_{k-1,i}^n} + \alpha \beta c_\infty
  \leq \abs{\tilde{e}_{k-2,i}^n} + 2\alpha \beta c_\infty
  \leq  k^\prime \alpha  \beta c_\infty.
\]
Then, the conclusion is obvious since $k^\prime \leq T_c$.
\end{proof}

\subsection{Proof of Lemma~\ref{gradientrlation}}\label{adasdas}
\begin{proof}
	For the gradient $\gmbi{k}$, it is defined as:
\begin{equation*}
\begin{split}
    \gmbi{k} = & \frac{1}{N}\sum\nolimits_{n=1}^N \operatorname{decompressor}(\tilde{\*h}^n_{k+1})
    =  \frac{1}{N}\sum_{n=1}^N \operatorname{decompressor}(\operatorname{compressor}(\hmi{k+1}{n})) \\
    = & \frac{1}{N}\sum_{n=1}^N \*h_{k+1}^n +  \operatorname{decompressor}(\operatorname{compressor}(\hmi{k+1}{n})) - \hmi{k+1}{n} \\
    \lee{172} & \frac{1}{N}\sum_{n=1}^N  \gmi{k}{n} +  \operatorname{decompressor}(\emi{k}{n})  +  \operatorname{decompressor}(\operatorname{decompressor}(\hmi{k+1}{n})) - \hmi{k+1}{n} \\
    \lee{173} & \frac{1}{N}\sum_{n=1}^N  \gmi{k}{n} +  \operatorname{decompressor}(\emi{k+1}{n}) - \deltami{k+1}{n},
\end{split}
\end{equation*}
	where  \ding{172} uses $ \*h_{k+1}^n= \*g_k^n +  \operatorname{decompressor}(\emi{k}{n})$, and in \ding{173}, we define:
	\begin{equation*}
		\begin{split}
			\deltami{k}{n} :=\hmi{k}{n}  -  \operatorname{decompressor}(\operatorname{compressor}(\hmi{k}{n})).
		\end{split}
	\end{equation*}
	
	At the same time, we have 
	\begin{equation*}
		\begin{split}
			\operatorname{decompressor}(\emi{k}{n}) - \deltami{k+1}{n} \lee{172} &  \operatorname{decompressor}(\emi{k}{n}) -  \frac{1}{\beta} \left( \emti{k+1}{n} - (1-\beta) \emti{k}{n} \right)   \\
			= &  \operatorname{decompressor}( \operatorname{compressor} \qty(\emti{k}{n}))  -  \frac{1}{\beta} \left( \emti{k+1}{n} - (1-\beta) \emti{k}{n} \right)   \\
			= &  \frac{1}{\beta} \left( \emti{k}{n} - \emti{k+1}{n} \right)  +   \operatorname{decompressor}( \operatorname{compressor} (\emti{k}{n}))  -    \emti{k}{n},
		\end{split}
	\end{equation*}
	where in \ding{172}  we use $\deltami{k}{n} = \frac{1}{\beta} \left( \emti{k}{n} - (1-\beta) \emti{k-1}{n} \right)$ from the Algorithm. Accordingly, we have: 
	\begin{equation*}
		\begin{split}
			\gmbi{k} =    \frac{1}{N}\sum_{n=1}^N  \left( \gmi{k}{n} +  \frac{ 1}{\beta} \left( \emti{k}{n} - \emti{k+1}{n} \right)  +   \operatorname{decompressor}( \operatorname{compressor} (\emti{k}{n}))  -    \emti{k}{n}  \right).
		\end{split}
	\end{equation*}
	We complete the proof. 
\end{proof}

\subsection{Proof of Lemma~\ref{lem:sum_gk}}\label{sec:proof_sum_gk}
\begin{proof}
In Lemma~\ref{gradientrlation}, we already have:
$
\gmbi{k}  =  \gmi{k}{} +  \frac{1}{\beta} \left( \emti{k}{} - \emti{k+1}{} \right)  +   \emhi{k}{}  -    \emti{k}{}
$,
where we let $ \gmi{k}{} \coloneqq \frac{1}{N}\sum_{n=1}^N    \gmi{k}{n}$,  $\emti{k}{}\coloneqq  \frac{1}{N}\sum_{n=1}^N   \emti{k}{n}$,  and $\emhi{k}{}\coloneqq  \frac{1}{N}\sum_{n=1}^N    \$C_d \qty(\emti{k}{n})$.
In this way, by the results from Lemma~\ref{lem:err_bound}, we can bound:
\[
\begin{split}
\EE\norm{ \sum_{i=0}^k \qty(\gmbi{i}  -  \gmi{i}{}) } &=  \EE \norm{  \frac{1}{\beta} \sum_{i=0}^k\qty(\qty( \emti{i}{} - \emti{i+1}{} )  +   \emhi{i}{}  -    \emti{i}{}) } 
\leq  \frac{1}{\beta} \EE \norm{\emti{k}{}}
 + \frac{1}{N} \EE \norm{\sum_{n=1, i=1}^{N,k}\emti{i}{n} - \$C_d \qty(\emti{i}{n})}\\
 &\led{172} \frac{T_c \sqrt{d} \alpha \beta c_\infty}{\beta} 
 + \frac{1}{N}\frac{\sqrt{d}Nk}{2s_e}
 =T_c \sqrt{d} \alpha c_\infty
 + \frac{\sqrt{d}k}{2s_e},
\end{split}
\]
where \ding{172} comes from Eqn.~\eqref{eq:e_bound}, and the assumption that $\frac{2^{p_e}}{s_e}\geq T_c\alpha \beta c_\infty$. We finish the proof.
\end{proof}
\subsection{Proof of Lemma~\ref{lem:sum_mk}}\label{sec:proof_sum_mk}
\begin{proof}
    We first expand the formula of $\*m_k$:
\[
\*m_k = (1-\beta_1)\*m_{k-1} + \beta_1 \*g_k = (1-\beta_1)^2\*m_{k-2} + \beta_1 \qty(\*g_k + \qty(1-\beta_1) \*g_{k-1}) = \qty(1-\beta_1)^{k-1} \*g_0 + \beta_1 \sum_{t=1}^k \qty(1-\beta_1)^{k-t} \*g_t.
\]
Similarly, we also have:
\[
\tilde{\*m}_k = \qty(1-\beta_1)^{k-1} \*g_0 + \beta_1 \sum_{t=1}^k \qty(1-\beta_1)^{k-t} \tilde{\*g}_t.
\]
Then, by Lemma~\ref{gradientrlation}, we have:
\[
\gmbi{k}  =  \gmi{k}{} +  \frac{1}{\beta} \left( \emti{k}{} - \emti{k+1}{} \right)  +   \emhi{k}{}  -    \emti{k}{},
\]
where we let $ \gmi{k}{} \coloneqq \frac{1}{N}\sum_{n=1}^N    \gmi{k}{n}$,  $\emti{k}{}\coloneqq  \frac{1}{N}\sum_{n=1}^N   \emti{k}{n}$,  $\emhi{k}{}\coloneqq  \frac{1}{N}\sum_{n=1}^N    \$C_d \qty(\emti{k}{n})$.
Then, we can get:
\[
\sum_{i=1}^k \bm{\eta}_i \circ \qty(\tilde{\*m}_i  -  \*m_i) = \frac{\beta_1}{\beta} \sum_{i=1}^k  \bm{\eta}_i \circ \qty(\sum_{t=1}^i \qty(1-\beta_1)^{i-t} \qty(\tilde{\*e}_t-\tilde{\*e}_{t+1} + \beta \bm{\Delta}_{t})),
\]
where $\bm{\Delta}_t \coloneqq \emhi{i}{}  -    \emti{i}{}$.
Now, we element-wisely analyze the right side term to get its upper bound. For each element of the right side term (we omit the index of dimension to simplify the notation), we have:
\[
\begin{aligned}
 & \abs{\frac{\beta_1}{\beta} \sum_{i=1}^k  {\eta}_i \qty(\sum_{t=1}^i \qty(1-\beta_1)^{i-t} \qty(\tilde{e}_t-\tilde{e}_{t+1} + \beta {\Delta}_{t}))} \\
  = & \abs{\frac{\beta_1}{\beta} \sum_{i=1}^k  {\eta}_i\sum_{t=1}^i \qty(1-\beta_1)^{i-t} \qty(\tilde{e}_t-\tilde{e}_{t+1}) + \beta_1 \sum_{i=1}^k \sum_{t=1}^i  {\eta}_i \qty(1-\beta_1)^{i-t} {\Delta}_{t}}\\
  \led{172} &  \abs{\frac{\beta_1}{\beta} \sum_{i=1}^k  {\eta}_i\sum_{t=1}^i \qty(1-\beta_1)^{i-t} \qty(\tilde{e}_t-\tilde{e}_{t+1})} + \frac{\beta_1\eta c_u }{2s_e} \sum_{i=1}^k\sum_{t=1}^i \qty(1-\beta_1)^{i-t} \\
  =& \abs{\frac{\beta_1}{\beta} \sum_{i=1}^k  {\eta}_i\sum_{t=1}^i \qty(1-\beta_1)^{i-t} \qty(\tilde{e}_t-\tilde{e}_{t+1})} + \frac{\eta c_u }{2s_e} \qty(k-\sum_{i=1}^k\qty(1-\beta_1)^i)\\
  \leq & \abs{\frac{\beta_1}{\beta} \sum_{i=1}^k  {\eta}_i\sum_{t=1}^i \qty(1-\beta_1)^{i-t} \qty(\tilde{e}_t-\tilde{e}_{t+1})} + \frac{k\eta c_u }{2s_e},
\end{aligned}
\]
where in \ding{172} we use the bound $\abs{\tilde{e}_{k,i}^n} \leq  T_c\alpha  \beta c_\infty$ from the proof of Lemma~\ref{lem:sum_gk}, results from Lemma~\ref{lem:round} and the assumption that $\frac{2^{p_e}}{s_e}\geq T_c\alpha \beta c_\infty$.
Note that we have the following:
\[
\begin{aligned}
& \abs{\sum_{i=1}^k  {\eta}_i\sum_{t=1}^i \qty(1-\beta_1)^{i-t} \qty(\tilde{e}_t-\tilde{e}_{t+1})} \\
= & \abs{\sum_{i=1}^k\qty(\eta_i \qty(1-\beta_1)^{i-1})\tilde{e}_1 + 
\sum_{t=2}^k \qty(\sum_{i=t}^{k} \qty(\eta_i - \eta_{i-1})\qty(1-\beta_1)^{i-t})\tilde{e}_t  - \sum_{t=2}^k \qty(\eta_{k} \qty(1-\beta_1)^{k-t+1})\tilde{e}_t}  \\
\led{172} & c_u \eta \qty(T_c+1) \alpha \beta c_\infty  \sum_{i=1}^k\qty(\qty(1-\beta_1)^{i-1}) + T_c\alpha  \beta c_\infty \abs{\sum_{t=2}^k \qty(\sum_{i=t}^{k} \qty(\eta_i - \eta_{i-1})\qty(1-\beta_1)^{i-t})} \\
\led{173} & c_u \eta \qty(T_c+1) \alpha c_\infty \frac{\beta}{\beta_1} +  c_u \eta T_c\alpha  \beta c_\infty \beta_1 \qty(k-1),
  \end{aligned}
\]
where in \ding{172} we use the bound from Lemma~\ref{lem:err_bound}, i.e., $e_1 \leq \alpha \beta c_\infty$ and $e_t \leq T_c\alpha \beta c_\infty$ for $t>2$. In \ding{173}, we use the assumption $\abs{\qty(\eta_i - \eta_{i-1})} \leq \beta_1 c_u \eta, \forall i \in [T]$.

Then, combining all bounds together, we can get the following:
\[
\begin{aligned}
 & \abs{\frac{\beta_1}{\beta} \sum_{i=1}^k  {\eta}_i \qty(\sum_{t=1}^i \qty(1-\beta_1)^{i-t} \qty(\tilde{e}_t-\tilde{e}_{t+1} + \beta {\Delta}_{t}))} \\
  \leq & \abs{\frac{\beta_1}{\beta} \sum_{i=1}^k  {\eta}_i\sum_{t=1}^i \qty(1-\beta_1)^{i-t} \qty(\tilde{e}_t-\tilde{e}_{t+1})} + \frac{k\eta c_u }{2s_e}\\
  \leq & \frac{\beta_1}{\beta} \qty(c_u \eta \qty(T_c+1) \alpha c_\infty \frac{\beta}{\beta_1} +  c_u \eta T_c\alpha  \beta c_\infty \beta_1 \qty(k-1)) + \frac{k\eta c_u }{2s_e}\\
  \leq & \qty(T_c+1 + \beta_1^2 k T_c) \alpha c_\infty  c_u \eta + \frac{k\eta c_u }{2s_e}.
\end{aligned}
\]
Then, we can conclude that:
\[
\EE \norm{\sum_{i=0}^k \bm{\eta}_i \circ \qty(\tilde{\*m}_i  -  \*m_i) } \leq {(T_c+1 + \beta_1^2 k T_c) \sqrt{d} \alpha  c_\infty c_u \eta}  + \frac{k\sqrt{d}\eta c_u }{2s_e}.
\]
We finish the proof.
\end{proof}
\subsection{Proof of Lemma~\ref{lem:moving_mk}}\label{sec:proof_moving_mk}
\begin{proof}
we denote ${\*g}^{full}_{k}\coloneqq \nabla f(\bm{\theta}_k)$ for convenience.
Note that we have:
\[
\begin{aligned}
  \*m_k - \*g^{full}_k = &  \qty(1-\beta)\qty(\*m_{k-1} - \*g^{full}_{k-1}) + (1-\beta)\*g^{full}_{k-1} - \*g^{full}_k +   \beta {\*g}_k\\
  =& \qty(1-\beta)\qty(\*m_{k-1} - \*g^{full}_{k-1}) + (1-\beta)\qty(\*g^{full}_{k-1} - \*g^{full}_k) + \beta \qty( {\*g}_k - \*g^{full}_k).
\end{aligned}
\]
Then, take expectation on both sides:
\[
\begin{aligned}
 & \^E\qty(\norm{\*m_k - \*g^{full}_k}^2)\\
= & \qty(1-\beta)^2\^E\qty(\norm{\*m_{k-1} - \*g^{full}_{k-1}}^2)+\qty(1-\beta)^2\^E\qty(\norm{\*g^{full}_{k-1} - \*g^{full}_{k}}^2) + {\beta^2 \sigma^2}+\\
&  2 \qty(1-\beta)^2\^E\qty(\innerprod{\*m_{k-1} - \*g^{full}_{k-1}, \*g^{full}_{k-1} - \*g^{full}_{k}}) \\
\leq & \qty(\qty(1-\beta)^2 + \qty(1-\beta)^2 a)\^E\qty(\norm{\*m_{k-1} - \*g^{full}_{k-1}}^2) + \qty( 1+\frac{1}{a})\qty(1-\beta)^2\^E\qty(\norm{\*g^{full}_{k-1} - \*g^{full}_{k}}^2) + {\beta^2 \sigma^2}\\
\led{172} & \qty(1-\beta)\^E\qty(\norm{\*m_{k-1} - \*g^{full}_{k-1}}^2) + \frac{\qty(1-\beta)^2}{\beta} \^E\qty(\norm{\*g^{full}_{k-1} - \*g^{full}_{k}}^2) + {\beta^2 \sigma^2} \\ 
\leq & \qty(1-\beta)\^E\qty(\norm{\*m_{k-1} - \*g^{full}_{k-1}}^2) + \frac{\qty(1-\beta)^2L^2}{\beta} \^E\qty(\norm{\bm{\theta}_{k-1} - \bm{\theta}_{k}}^2) + {\beta^2 \sigma^2},
\end{aligned}
\]
where for~\ding{172}, we set $a = \frac{\beta}{1-\beta}$. We finish the proof.
\end{proof}


\end{document}